\let\proof\@undefined                        
\let\endproof\@undefined                  
\renewcommand{\todo}[2][]{\tikzexternaldisable\@todo[#1]{#2}\tikzexternalenable}
\newcounter{mycomment} 
\newlength{\luw}
\newlength{\luh}
\newcommand{\flabels}[1]{%
    \settowidth{\luw}{\begin{tabular}{l}\,#1\,\end{tabular}}%
		\settototalheight{\luh}{%
		\begin{tcolorbox}[width=\luw+5pt, boxsep=1pt, left=0pt, right=0pt, top=0pt, bottom=0pt,nobeforeafter]%
		\begin{tabular}{l}%
		\,#1\,\vphantom{$f^1$fq}%
		\end{tabular}%
		\end{tcolorbox}%
		}%
		\pbox[t][][b]{\textwidth}{%
		\resizebox{!}{0.8\luh}{%
		\begin{tcolorbox}[width=\luw+5pt, boxsep=1pt, left=0pt, right=0pt, top=0pt, bottom=0pt,nobeforeafter]%
		\begin{tabular}{l}%
		\,#1\,\vphantom{$f^1$fq}%
		\end{tabular}%
		\end{tcolorbox}%
		}%
		}%
}%
\newcommand{\labelgraphics}[3][width=0.5\linewidth]{%
\setlength{\fboxsep}{0pt}%
\tikzset{external/export next=false}%
\begin{tikzpicture}%
  \node[inner sep = 0pt] (a) {\includegraphics[#1]{#2}};
  \node[anchor=north west,inner sep=1pt] at (a.north west) {\flabels{#3}};
\end{tikzpicture}%
}%
\newcommand{\myparagraph}[1]{\subsection{#1}}
\newcommand{\SE}{{\cal E}}
\newcommand{\SX}{{\cal X}}
\newcommand{\SY}{{\cal Y}}
\newcommand{\SM}{{\cal M}}
\newcommand{\M}{{\cal M}}
\newcommand{\SI}{{\cal I}}
\newcommand{\SO}{{\cal O}}
\newcommand{\SP}{{\cal P}}
\newcommand{\SV}{{\cal V}}
\newcommand{\V}{{\mathcal V}}
\newcommand{\BR}{\mathbb R}
\newcommand{\BS}{\mathbb S}
\newcommand{\Nb}{{\N}}
\newcommand{\lan}{\left\langle}
\newcommand{\ran}{\right\rangle}
\newcommand{\<}{\langle}
\renewcommand{\>}{\rangle}
\newcommand{\T}{^{\mathsf T}}
\newcommand{\X}{{\SX}}
\newcommand{\Y}{{\SY}}
\let\parSym\S
\def\S{\mathcal{S}}
\renewcommand{\O}{{\SO}}
\newcommand{\Real}{\mathbb{R}}
\newcommand{\I}{\mathcal{I}}
\newcommand{\E}{\mathcal{E}}
\newcommand{\Algorithm}[1]{{Algorithm\,\ref{#1}}}
\newcommand{\Proposition}[1]{{Proposition\,\ref{#1}}}
\newcommand{\tab}{{\hphantom{bla}}}
\def\N{\mathcal{N}}
\def\mathrlap{\mathpalette\mathrlapinternal} 
\def\mathllap{\mathpalette\mathllapinternal}
\def\mathllapinternal#1#2{\llap{$\mathsurround=0pt#1{#2}$}}
\def\mathrlapinternal#1#2{\rlap{$\mathsurround=0pt#1{#2}$}}
\def\leftbb{\mathrlap{[}\hskip1.3pt[}
\def\rightbb{]\hskip1.36pt\mathllap{]}}
\DeclareMathOperator*{\conv}{conv}
\newcommand{\argmin}{\mathop{\rm argmin}}
\newcommand{\Section}[1]{\cref{#1}}
\newcommand{\Figure}[1]{\cref{#1}}
\newcommand{\Theorem}[1]{\cref{#1}}
\newcommand{\Lemma}[1]{\cref{#1}}
\let\emptyset\varnothing
\def\subset{\subseteq}
\def\supset{\supseteq}
\newcolumntype{L}[1]{>{\raggedright\let\newline\\\arraybackslash\hspace{0pt}}m{#1}}
\newcolumntype{C}[1]{>{\centering\let\newline\\\arraybackslash\hspace{0pt}}m{#1}}
\newcolumntype{R}[1]{>{\raggedleft\let\newline\\\arraybackslash\hspace{0pt}}m{#1}}
\definecolor{Gray}{gray}{0.9}
\newcolumntype{g}{>{\columncolor{Gray}}r}
\newcolumntype{G}{>{\columncolor{Gray}}c}
\newlength{\myskip}
\renewenvironment{enumerate}%
  {\begin{list}{\arabic{enumi}.}%
     {\topsep=0in\itemsep=0in\parsep=0pt\partopsep=0in\usecounter{enumi}}%
   }{\end{list}}
\renewenvironment{itemize}%
  {\begin{list}{$\bullet$}%
     {\topsep=0in\itemsep=0pt\parsep=0pt\partopsep=0in\usecounter{itemi}}%
   }{\end{list}\addvspace{0pt}}
\let\corollary\@undefined
\let\endcorollary\@undefined
\let\definition\@undefined
\let\enddefinition\@undefined
\let\proof\@undefined
\let\endproof\@undefined
\let\theorem\@undefined
\let\c@theorem\@undefined
\let\endtheorem\@undefined
\let\lemma\@undefined
\let\endlemma\@undefined
\let\example\@undefined
\let\c@example\@undefined
\let\endexample\@undefined
\let\remark\@undefined
\let\endremark\@undefined
\let\proposition\@undefined
\let\endproposition\@undefined
\let\property\@undefined
\let\endproperty\@undefined
\newtheoremstyle{tightItalic}
  {0.5\myskip}
  {0\myskip}
  {}
  {}
  {\itshape}
  {.}
  { }
  {}
\newtheoremstyle{tightBf}
  {0.5\myskip}
  {0.5\myskip}
  {}
  {}
  {\bf}
  {.}
  {.5em}
  {}
\theoremstyle{definition}
\theoremstyle{tightBf}
\declaretheorem[style=tightBf,parent=section]{thm}
\numberwithin{thm}{section}
\declaretheorem[style=tightBf,sibling=thm,name=Theorem]{theorem}
\declaretheorem[style=tightBf,sibling=theorem,name=Lemma]{lemma}
\declaretheorem[style=tightBf,sibling=theorem,name=Definition]{definition}
\declaretheorem[style=tightBf,sibling=theorem,name=Corollary]{corollary}
\declaretheorem[style=tightBf,sibling=theorem,name=Proposition]{proposition}
\declaretheorem[style=tightBf,parent=section,name=Example]{example}
\numberwithin{example}{section}
\theoremstyle{tightItalic}
\newtheorem*{proof}{Proof}
\crefname{section}{\parSym}{\parSym\parSym}
\Crefname{section}{\parSym}{\parSym\parSym}
\def\methodformat#1{{\ttfamily #1}\xspace}
\def\tblue{}
\newcommand{\IRICPLEX}{\methodformat{Our-CPLEX}}
\newcommand{\IRITRWS}{\methodformat{Our-TRWS}}
\newcommand{\PBPCPLEX}{\methodformat{\cite{SwobodaPersistencyCVPR2014}-CPLEX}}
\newcommand{\PBPTRWS}{\methodformat{\cite{SwobodaPersistencyCVPR2014}-TRWS}}
\newcommand{\PBPTRWSO}{\methodformat{\cite{Swoboda-PAMI-16}-TRWS}}
\newcommand{\Shekhovt}{\methodformat{$\varepsilon$-L1\,\cite{shekhovtsov-14}}}
\newcommand{\Kovtun}{\methodformat{Kovtun\cite{Kovtun03}}}
\newcommand{\MQPBO}{\methodformat{MQPBO}}
\newcommand{\MMQPBO}{\methodformat{MQPBO-10}}
\def\mytitle{Maximum Persistency via Iterative Relaxed Inference in Graphical Models}
\begin{document}

\title{\mytitle}

\author{Alexander Shekhovtsov,
        Paul Swoboda,
        and~Bogdan~Savchynskyy
\IEEEcompsocitemizethanks{
\IEEEcompsocthanksitem 
Alexander Shekhovtsov is with the 
Institute for Computer Graphics and Vision (ICG), Graz University of Technology, Inffeldgasse 16, Graz 8010, Austria.
\protect\\
E-mail: shekhovtsov@icg.tugraz.at.


\IEEEcompsocthanksitem 
Paul Swoboda is with the Discrete Optimization Group, Institute of Science and Technology Austria, Am Campus 1, 3400 Klosterneuburg, Austria.
\protect\\
E-mail: pswoboda@ist.ac.at.

\IEEEcompsocthanksitem 
Bogdan Savchynskyy is with Computer Vision Lab, Faculty of Computer Science, Institute for Artificial Intelligence, Dresden University of Technology, 01062 Dresden, Germany. 
\protect\\
E-mail: bogdan.savchynskyy@tu-dresden.de.
}%
\thanks{}}

\markboth{}{}

\IEEEtitleabstractindextext{%
\begin{abstract}
We consider the NP-hard problem of MAP-inference for undirected discrete graphical models. We propose a polynomial time and practically efficient algorithm for finding a part of its optimal solution. Specifically, our algorithm marks some labels of the considered graphical model either as (i) {\em optimal}, meaning that they belong to all optimal solutions of the inference problem; (ii)~{\em non-optimal} if they provably do not belong to any solution.
With access to an exact solver of a linear programming relaxation to the MAP-inference problem, our algorithm marks the maximal possible (in a specified sense) number of labels. 
We also present a version of the algorithm, which has access to a suboptimal dual solver only and still can ensure the (non-)optimality for the marked labels, although the overall number of the marked labels may decrease.
We propose an efficient implementation, which runs in time comparable to a single run of a suboptimal dual solver. 
Our method is well-scalable and shows state-of-the-art results on computational benchmarks from machine learning and computer vision.
\end{abstract}
\begin{IEEEkeywords}
Persistency, partial optimality, LP relaxation, discrete optimization, WCSP, graphical models, energy minimization
\end{IEEEkeywords}}
\IEEEdisplaynontitleabstractindextext

\maketitle

\newlength{\figwidth}
\setlength{\figwidth}{\linewidth}

\setlength\cftparskip{0pt}
\setlength\cftbeforesecskip{2pt}
\setlength\cftbeforesubsecskip{1pt}

\begin{textblock}{14}(1,0.5)
\noindent
\textcopyright\ Copyright notice.
\end{textblock}

\SetAlgorithmName{Algorithm}{Algorithm}{alg.} 
\SetAlgoProcName{Procedure}{Procedure}{}

\begin{figure*}[!tb]
\centering
\resizebox{\linewidth}{!}{
\parbox{1.2\linewidth}{
\newcommand*\rot{\rotatebox{90}}

\newcommand{\flabelw}[1]{%
    \settowidth{\luw}{\begin{tabular}{l}#1\end{tabular}}
		\pbox[t][][b]{\textwidth}{
		\resizebox{1\luw}{!}{%
		\begin{tcolorbox}[width=\luw+5pt, boxsep=1pt, left=0pt, right=0pt, top=0pt, bottom=0pt,nobeforeafter]
		\begin{tabular}{l}
		#1
		\end{tabular}
		\end{tcolorbox}
		}%
		}
}

\newcommand{\ftitle}[1]{%
\begin{tcolorbox}[colback=blue!10!white,colframe=blue!75!black,width=2\figwidth, boxsep=3pt, left=0pt, right=0pt, top=0pt, bottom=0pt, 
after={},nobeforeafter, sharp corners=southwest, sharp corners=southeast, box align = top, center upper, equal height group=C]%
\bf #1
\end{tcolorbox}%
}

\centering
\setlength{\figwidth}{0.24\linewidth}
\setlength{\tabcolsep}{0pt}
\begin{tabular}{@{} ccccc}
& \multicolumn{4}{c}{%
\ftitle{Graph cut - based}\ftitle{LP - based}}\\
\rot{\rlap{\ \ \ \ \pbox{\textwidth}{Potts Model \newline $\lambda \min(1,|x_u-x_v|)$} }}\: &
\begin{overpic}[tics=10,width=\figwidth]{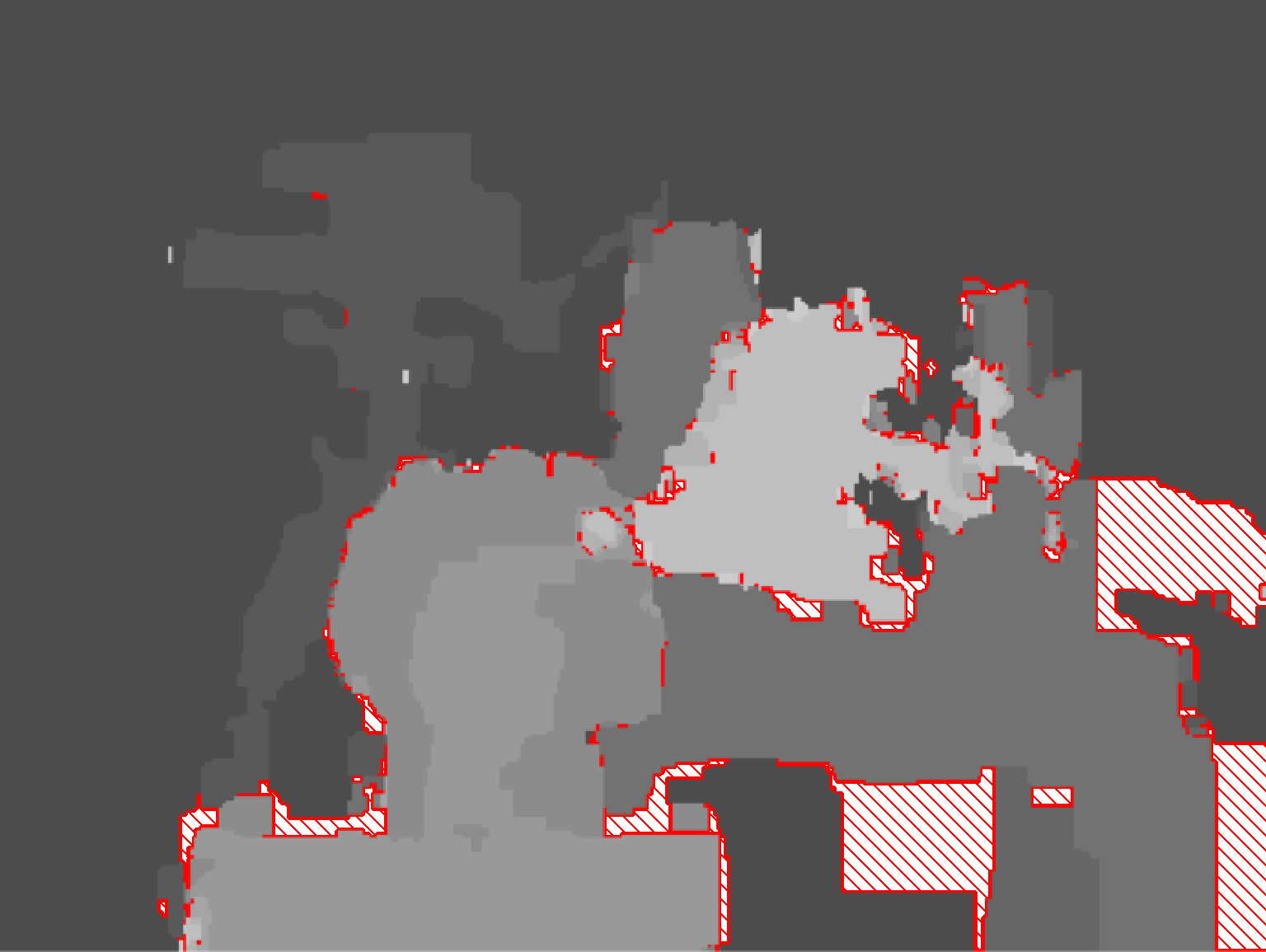}
\put (0,75){\flabelw{Courtesy of~\citet{Kovtun03}\\ 93.6\% (instance not available)}}%
\end{overpic}&
\begin{overpic}[tics=10,width=\figwidth]{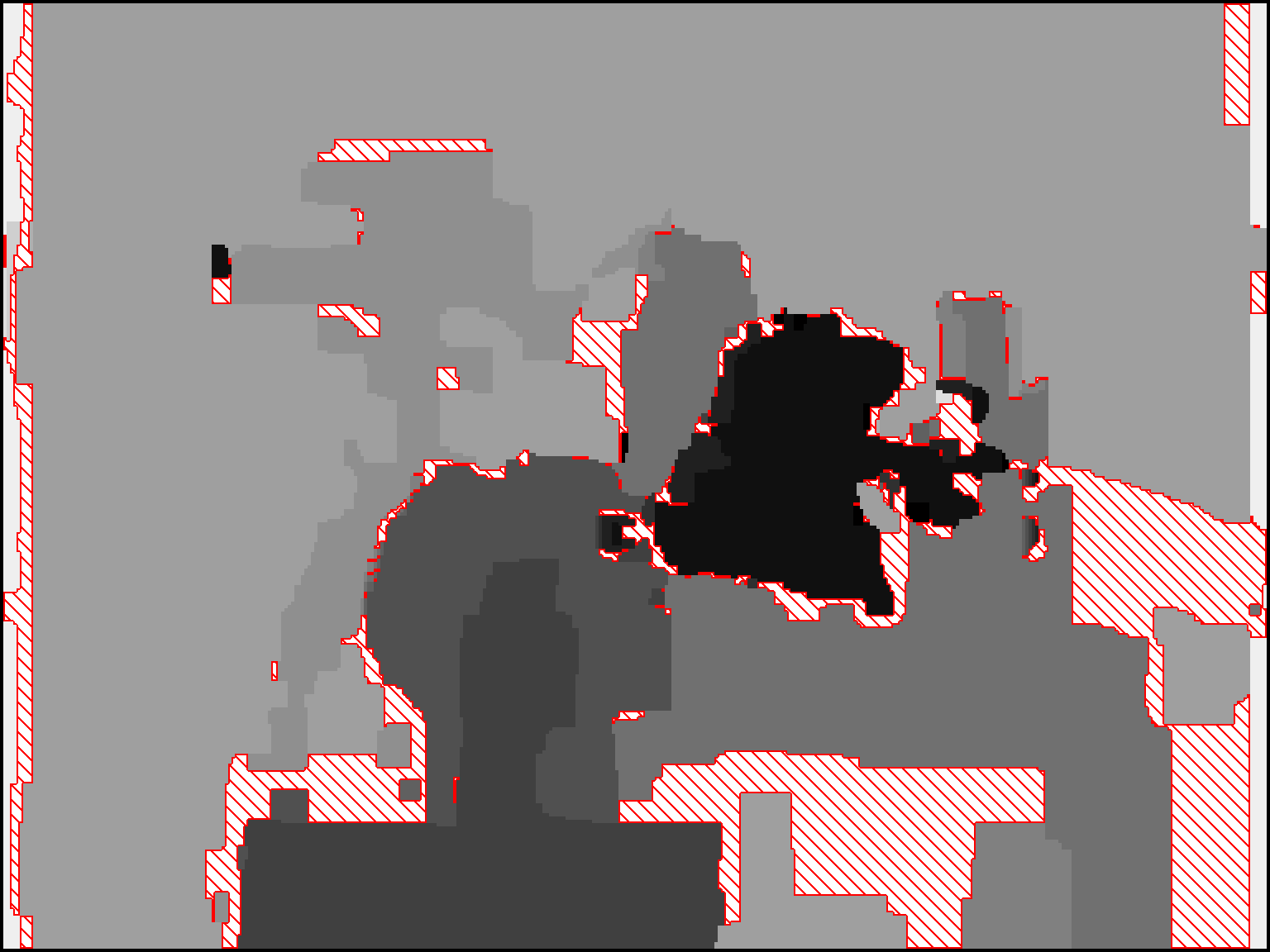}
\put (0,75){\flabelw{Instance used by~\citet{Alahari-08} \\ Kovtun's method: 1s, 87.6\%}}
\end{overpic}
&
\begin{overpic}[tics=10,width=\figwidth]{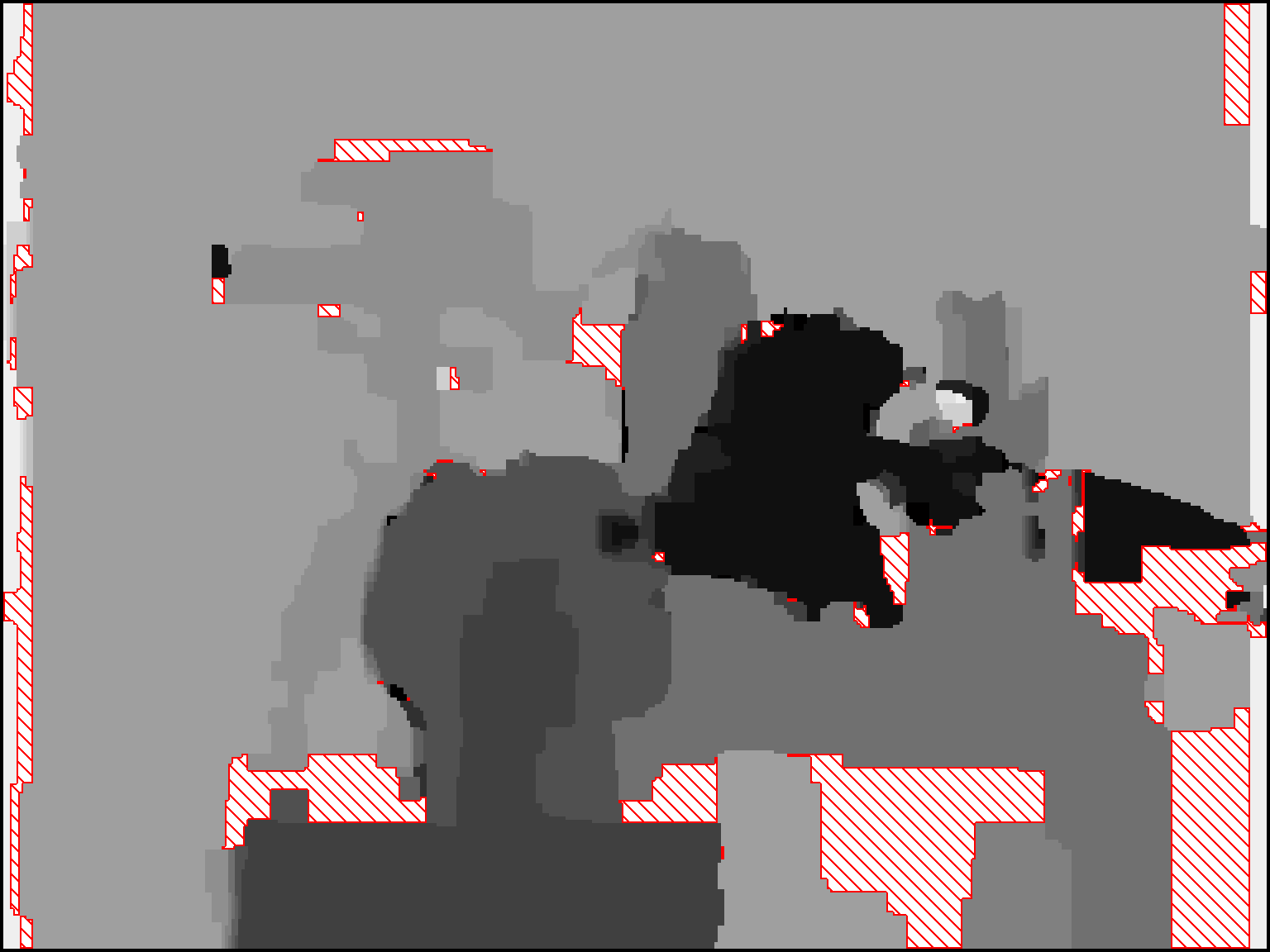}
\put (0,75){\flabelw{\citet{shekhovtsov-14} \\ LP-windowing: 1.5h, 94\%}}%
\end{overpic}
&
\begin{overpic}[tics=10,width=\figwidth]{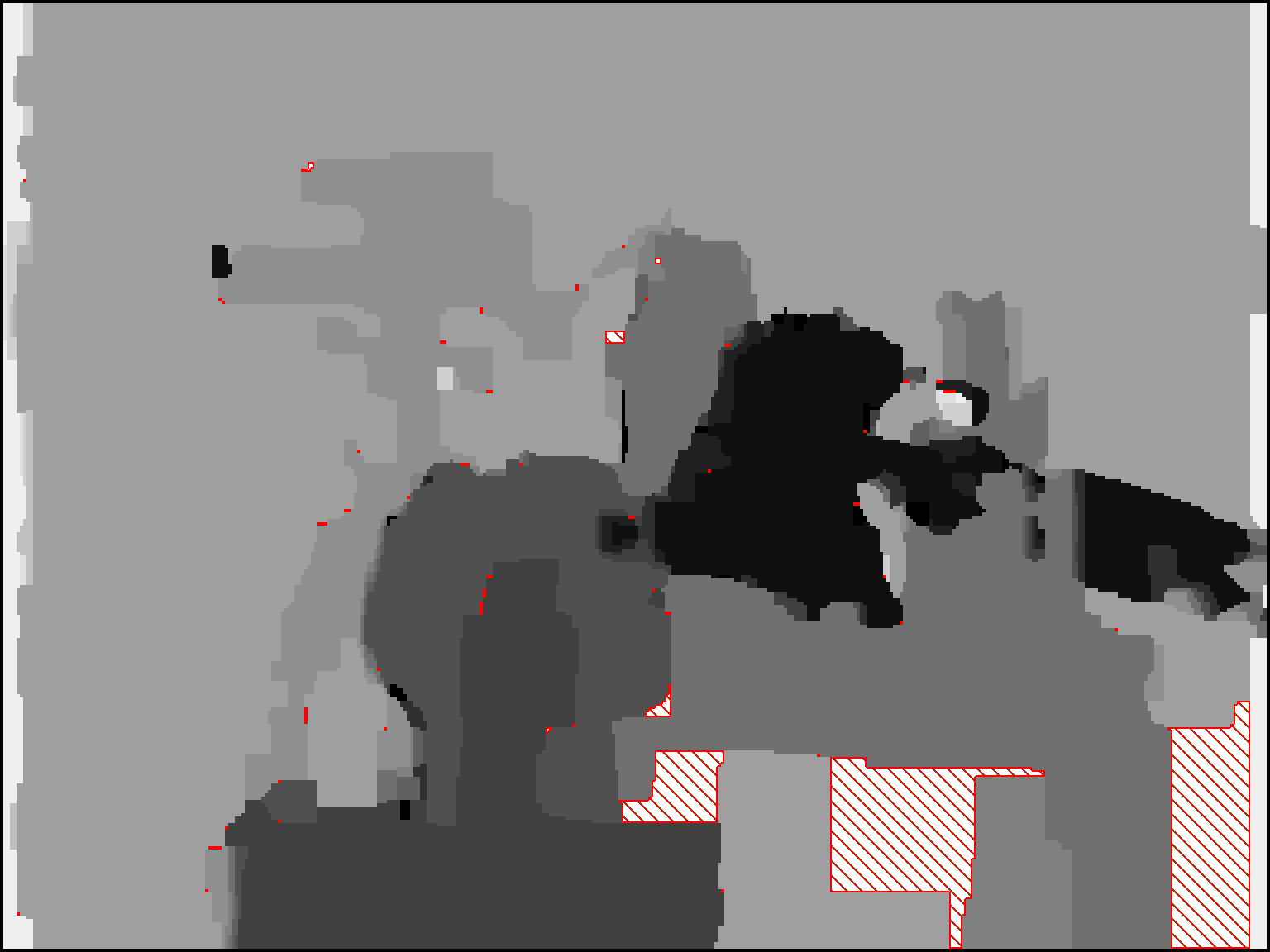}
\put (0,75){\flabelw{Ours: 22s, 96.7\%}}%
\end{overpic}
\\
\rot{\rlap{\ \ \ \ \pbox{\textwidth}{Truncated Model \newline $w_{uv} \min({\bf 2},|x_u-x_v|)$} }}\:
&
\begin{overpic}[tics=10,width=\figwidth]{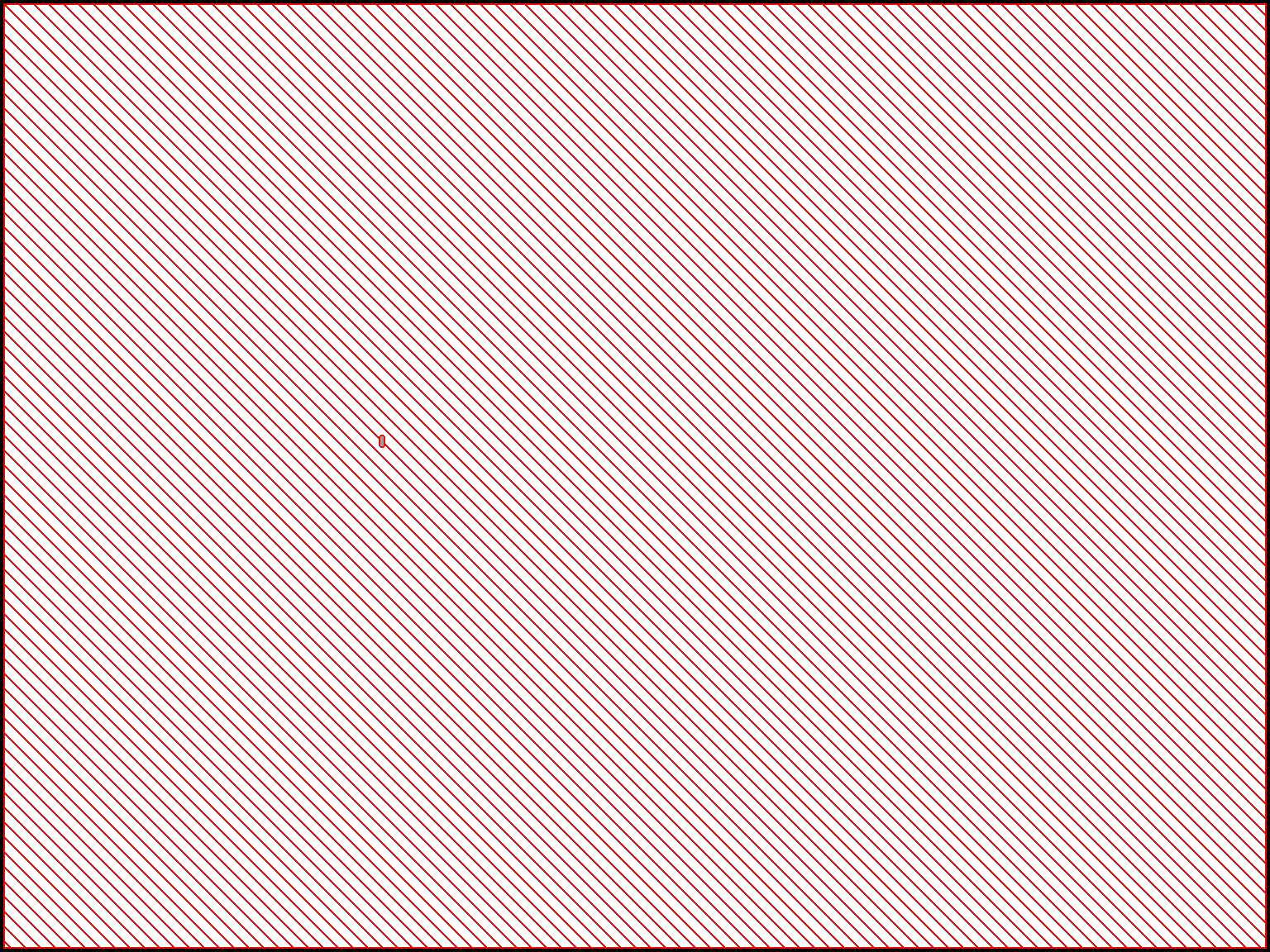}
\put (0,75){\flabelw{Kovtun's method: 1s, 0.2\%}}%
\end{overpic}
&
\begin{overpic}[tics=10,width=\figwidth]{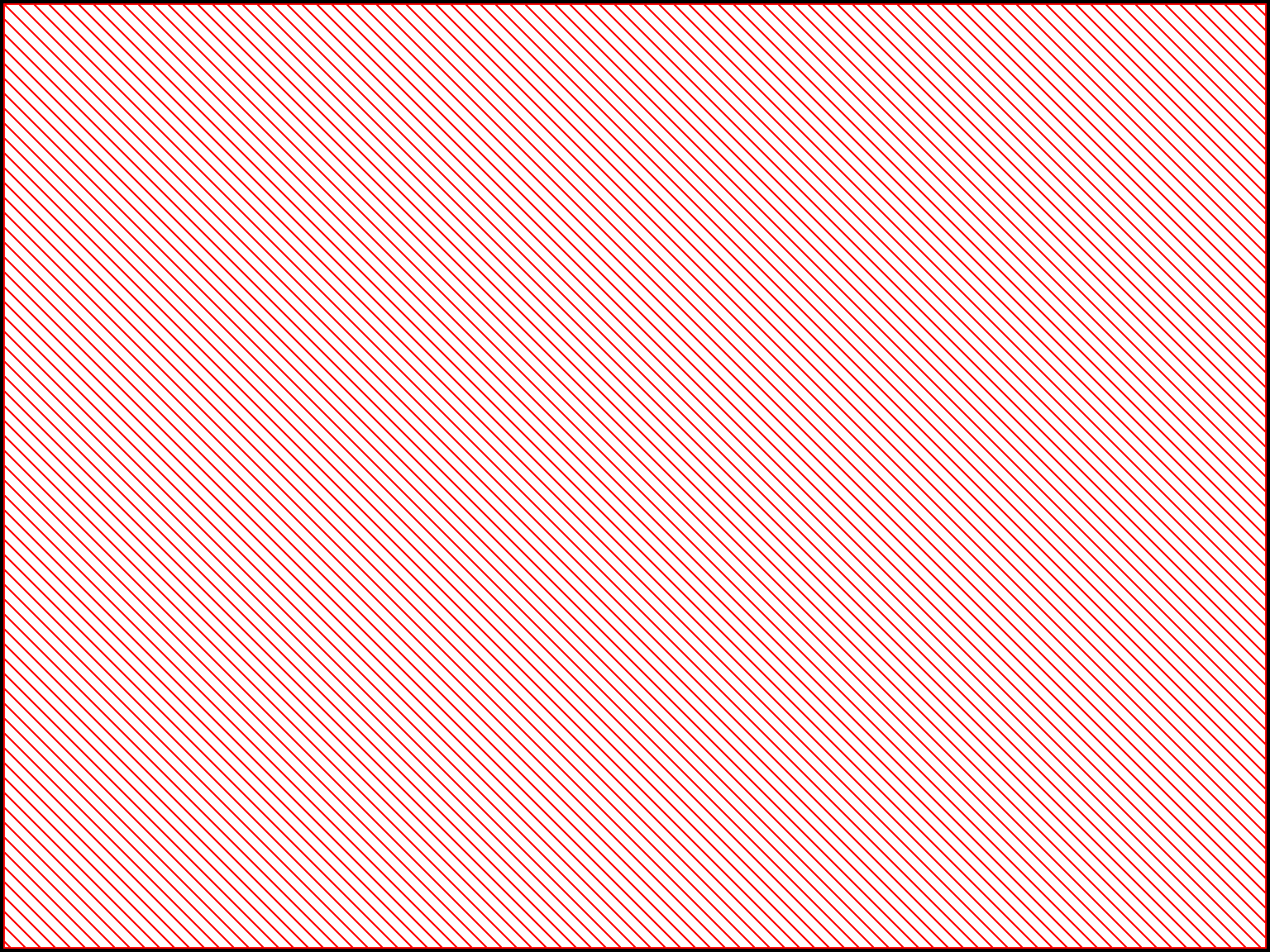}
\put (0,75){\flabelw{\citet{PartialOptimalityInMultiLabelMRFsKohli} (MQPBO) \\ 41s, 0.2\%}}%
\end{overpic}
&
\begin{overpic}[tics=10,width=\figwidth]{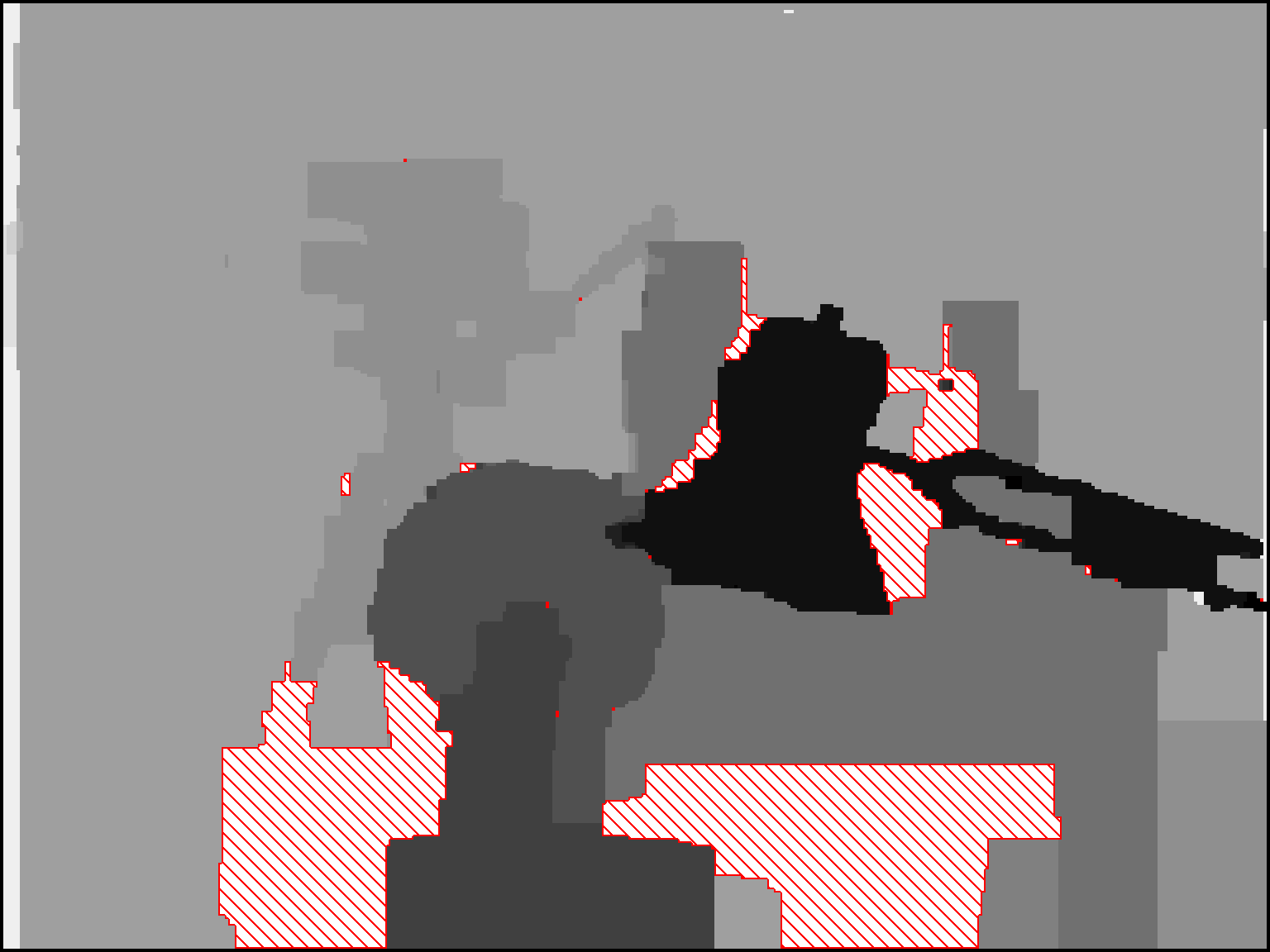}
\put (0,75){\flabelw{\citet{Swoboda-PAMI-16} \\ 27min, 89.8\%}}%
\end{overpic}
&
\begin{overpic}[tics=10,width=\figwidth]{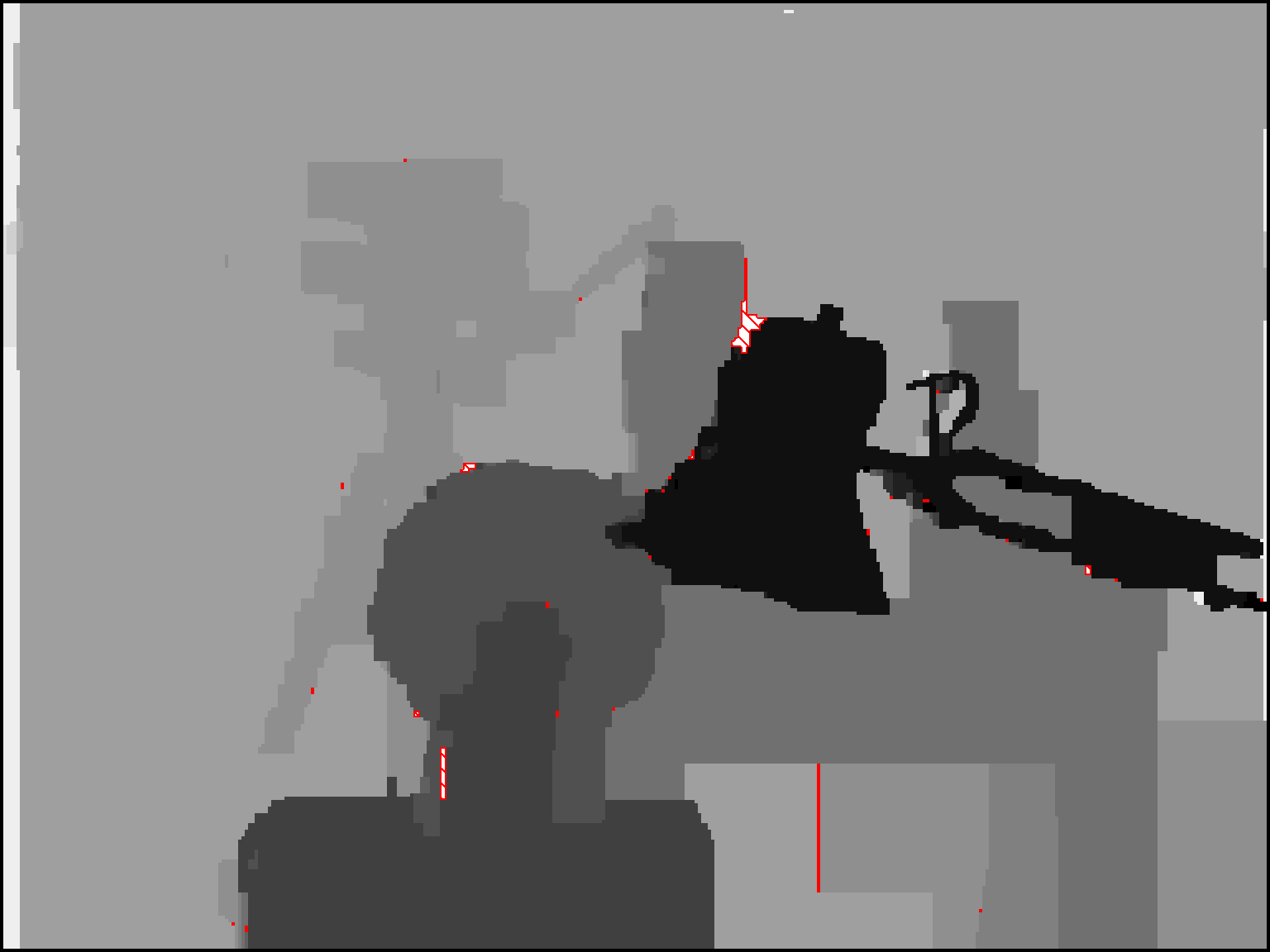}
\put (0,75){\flabelw{Ours: 16s, 99.94\%}}%
\end{overpic}
\end{tabular}

}
}
\caption{\label{fig:teaser-progress}
Progress of partial optimality methods. The top row corresponds to a stereo model with Potts interactions and large aggregating windows for unary costs used in~\cite{Kovtun03,Alahari-08} (instance published by~\cite{Alahari-08}). The bottom row is a more refined stereo model with truncated linear terms~\cite{SzeliskiComparativeStudyMRF} (instance from~\cite{OpenGMBenchmarkWebsite}). 
The hashed red area indicates that the optimal persistent label in the pixel is not found (but some non-optimal labels might have been eliminated). Solution completeness is given by the percentage of persistent labels.
Graph cut based methods are fast but only efficient for strong unary terms. LP-based methods are able to determine larger persistent assignments but are extremely slow prior to this work. 
}
\end{figure*}
\IEEEraisesectionheading{\section{Introduction}\label{sec:intro}}

\IEEEPARstart{W}{e} consider {\em the energy minimization} or {\em maximum a posteriori (MAP) inference} problem for discrete graphical models. 	
In the most common pairwise case it has the form 
\begin{equation}\label{equ:energyMinimization}
 \min_{x\in\X}E_{ f}(x):=f_\emptyset + \sum_{v\in \SV} f_v(x_v) + \sum_{uv\in \SE} f_{uv}(x_u,x_v)\,,
\end{equation}
where minimization is performed over vectors $x$, containing the discrete-valued components $x_v$. Further notation is to be detailed in \Section{sec:preliminaries}. The problem has numerous applications in computer vision, machine learning, communication theory, signal processing, information retrieval and statistical physics,  see~\cite{kappes-2015-ijcv,GraphicalModelsWainwrightJordan,PIC2011} for an overview of applications. Even in the binary case, when each coordinate of $x$ can be assigned two values only, the problem is known to be NP-hard and is also hard to approximate~\cite{Li2016}.

Hardness of the problem justifies a number of existing approximate methods addressing it~\cite{kappes-2015-ijcv}. Among them, solvers addressing its linear programming (LP) relaxations 
and in particular, the LP dual~\cite{Schlesinger-76,ALinearProgrammingApproachToMaxSumWerner,TRWSKolmogorov}, count among the most versatile and efficient ones. However, apart from some notable exceptions (see the overview of related work below), approximate methods can not guarantee neither optimality of their solutions as a whole, nor even optimality of any individual solution coordinates.
That is, if~$x$ is a solution returned by an approximate method and~$x^*$ is an optimal one, there is no guarantee that $x^*_v=x_v$ for any coordinate~$v$.

In contrast, our method provides such guarantees for some coordinates. More precisely, for each component $x_v$ it eliminates those of its values (henceforth called {\em labels}), which {\em provably} can not belong to any optimal solution. We call these eliminated labels {\em persistent non-optimal}. Should a single label $a$ remain non-eliminated, it implies that for all optimal solutions $x^*$ it holds $x^*_v=a$ and the label $a$ is called {\em persistent optimal}. 
Our elimination method is polynomial and is applicable with {\em any} (approximate) solver for a dual of a linear programming relaxation of the problem, employed as a subroutine. 							

\subsection{Related Work}\label{sec:related}
A trivial but essential observation is that any method identifying persistency has to be based on tractable sufficient conditions in order to avoid solving the NP-hard problem~\eqref{equ:energyMinimization}. 

{\bf Dead-end elimination methods (DEE)~\cite{DeadEndEliminationDesmet}} verify local sufficient conditions by inspecting a given node and its immediate neighbors at a time. When a label in the node can be substituted with another one such that the energy for all configurations of the neighbors does not increase, this label can be eliminated without loss of optimality.
\par
A similar principle for eliminating interchangeable labels was proposed in constraint programming~\cite{Freuder:1991}.
It's generalization to a related problem of Weighted Constraint Satisfaction (WCS) is known as {\em dominance rules} or {\em soft neighborhood substitutability}. 
However, because the WCS in general considers a bounded ``$+$'' operation, the condition appears to be intractable and therefore weaker sufficient local conditions were introduced, \eg,~\cite{conf/cp/LecoutreRD12,conf/cp/GivryPO13}. The way~\cite{conf/cp/GivryPO13} selects a local substitute label using equivalence preserving transforms is related to our method, in which we use an approximate solution based on the dual of the LP relaxation as a tentative substitute (or test) labeling. 
\par
Although the local character of the DEE methods allows for an efficient implementation, it also significantly limits their quality, \ie,~the number of found persistencies.
As shown in~\cite{shekhovtsov-14,SwobodaPersistencyCVPR2014,Wang_2016_CVPR}, considering more global criteria may significantly increase the algorithm's quality.
\par
{\bf The roof dual relaxation} in quadratic pseudo-Boolean Optimization (QPBO)~\cite{PseudoBooleanOptimizationBorosHammer,ExtendedRoofDuality} (equivalent to pairwise energy minimization with binary variables) has the property that all variables that are integer in the relaxed solution are persistent. 
Several generalizations of roof duality to higher-order energies were proposed (\eg, \cite{Adams:1998, Kolmogorov12-bisub}).
The MQPBO method~\cite{PartialOptimalityInMultiLabelMRFsKohli} and the generalized roof duality~\cite{GeneralizedRoofDualityWindheuser} extend roof duality to the multi-label case by reducing the problem to binary variables and generalizing the concept of submodular relaxation~\cite{Kolmogorov12-bisub}, respectively.
Although for binary pairwise energies these methods provide a very good trade-off between computational efficiency and a number of found persistencies, their efficacy drops as the number of label grows.

{\bf Auxiliary submodular problems} were proposed in~\cite{Kovtun03,Kovtun-10} as a sufficient persistency condition for multilabel energy minimization. In the case of Potts model, the method has a very efficient specialized algorithm~\cite{Gridchyn-13}.
Although these methods have shown very good efficacy for certain problem classes appearing in computer vision, the number of  persistencies they find drastically decreases when the energy does not have strong unary terms (see \cref{fig:teaser-progress}).
\par
In contrast to the above methods that technically rely either on local conditions or on computing a maximum flow (min-cut), the works~\cite{PartialOptimalityByPruningPotts,SwobodaPersistencyCVPR2014,Swoboda-PAMI-16} and~\cite{shekhovtsov-14} proposed {\bf persistency approaches relying on a general linear programming relaxation}. Authors of~\cite{PartialOptimalityByPruningPotts,SwobodaPersistencyCVPR2014,Swoboda-PAMI-16} demonstrated applicability of their approach to large-scale problems by utilizing existing efficient approximate MAP-inference algorithms, while in~\cite{shekhovtsov-14} the large-scale problems are addressed using a windowing technique. Despite the superior persistency results, the running time of the approximate-LP-based methods remained prohibitively slow for practical applications as illustrated by an example in~\Figure{fig:teaser-progress}.

Not only LP-based methods can achieve superior results in practice, but they are even theoretically guaranteed to do so, as proven for the method~\cite{shekhovtsov-14,shekhovtsov-15-HO}. 
In this method, the problem of determining the {\em maximum number of persistencies} is formulated as a polynomially solvable linear program. It is guaranteed to find a {\em provably larger} persistency assignment than most of the above mentioned approaches. However, solving this linear program for large scale instances is numerically unstable/intractable and applying it to multiple local windows is prohibitively slow. 
This poses a challenge of designing an LP-based method that would be indeed practical.

\myparagraph{Contribution\label{sec:contrib}}
In this work we propose a method which solves the same {\em maximum persistency problem} as in~\cite{shekhovtsov-14} and therefore delivers {\em provably} better results than other methods. Similar to~\cite{Swoboda-PAMI-16}, our method requires to iteratively (approximately) solve the linear programming relaxation of~\eqref{equ:energyMinimization} as a subroutine.
However, our method is significantly faster than~\cite{shekhovtsov-14,Swoboda-PAMI-16} due to a substantial theoretical and algorithmic elaboration of this subroutine.
\par
We demonstrate the efficiency of our approach on benchmark problems from machine learning and computer vision. We outperform {\em all} competing methods in terms of the number of persistent labels and method~\cite{shekhovtsov-14} in speed and scalability.
On randomly generated small problems, we show that the set of persistent labels found using approximate LP solver is close to the maximal one as established by the (costly and not scalable) method~\cite{shekhovtsov-14}.
\par
The present paper is a revised version of~\cite{SSS-15-IRI}. Besides reworked explanations, shortened and clarified proofs, one new technical extension is a more general dual algorithm, with termination guarantees for a larger class of approximate solvers.

\section{Work Overview}\label{sec:preliminaries}
This section serves as an overview of our method, where we give the most general definitions, formulate the maximum persistency problem and briefly describe a generic method to solve it. 
				This description, equipped with references to subsequent sections, should serve as a road map for the rest of the paper.

\myparagraph{Notation} In the MAP-Inference Problem~\eqref{equ:energyMinimization} we assume $(\SV,\SE)$ to be a directed graph with the set of {\em nodes}~$\SV$  and the set of {\em edges}~$\SE\subset \V\times \V$. Let $uv$ denote an ordered pair $(u,v)$ and $\N(u) = \{v \mid uv \in \SE \vee vu \in \SE \}$ stands for the set of {\em neighbors} of $u$. 
Each node $v\in\V$ is associated with a variable~$x_v$ taking its values in a finite {\em set of labels} $\X_v$. {\em Cost functions} or {\em potentials} $f_v\colon\X_v\to\BR$, $ f_{uv}\colon\X_u\times\X_v\to\BR$ are associated with nodes and edges respectively. Let $f_\emptyset\in\Real$ be {\em a constant term}, which we introduce for the sake of notation. Finally, $\X$ stands for the Cartesian product $\prod_{v\in\SV}\X_{v}$ and its elements $x\in \X$ are called {\em labelings}.

We represent all potentials of energy~\eqref{equ:energyMinimization} by a single {\em cost vector}  
$ f\in\BR^\SI$, where the set $\SI$ enumerates all components of all terms: 
$\SI = \{\emptyset \}\cup\{(u,i) \mid u\in\V,\ i\in\X_u\}\cup\{(uv,ij) \mid uv\in\SE,\ i\in\X_u,\ j\in \X_v\}$.
\par
\begin{figure}[!tp]
\centering
\begin{tabular}{c}
\includegraphics[width=0.6\linewidth]{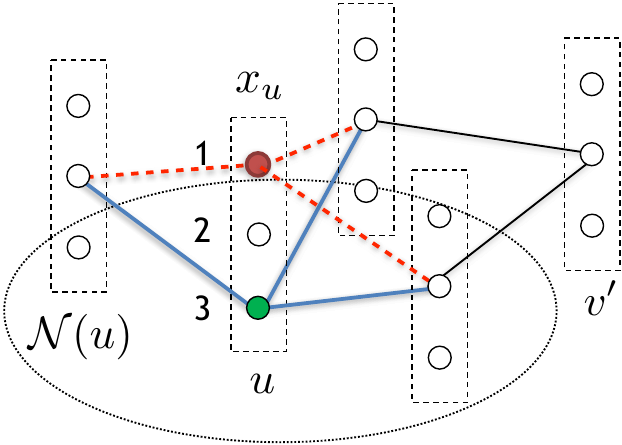}
\end{tabular}
\caption{
\label{fig:dee}
Dead end elimination / dominance. Variables are shown as boxes and their possible labels as circles. Label $x_u=1$ is substituted with label $x_u=3$. If for any configuration of neighbors $x_{\N(u)}$
the energy does not increase (only the terms inside $\{u\}\cup\N(u)$ contribute to the difference), label $x_u=1$ can be eliminated without loss of optimality.
}
\end{figure}
\begin{figure}[!tp]
\centering
\begin{tabular}{c}
\includegraphics[width=0.45\linewidth]{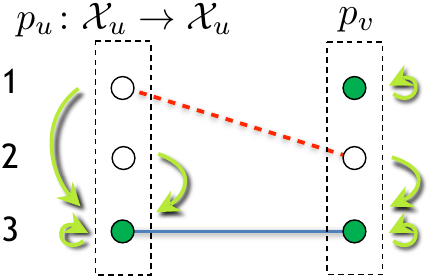}
\end{tabular}
\caption{
\label{fig:substitution}
Simultaneous substitution of labels in two variables. $\X_u=\X_v = \{1,2,3\}$, labels at arrow tails are substituted with labels at arrow heads. So the joint configuration $(1,2)$ (dashed) is substituted with the configuration $(3,3)$ (solid).
}
\end{figure}
\myparagraph{Improving Substitutions}
We formulate our persistency method in the framework of (strictly) improving substitutions, called {\em improving mappings} in our previous works~\cite{shekhovtsov-14,shekhovtsov-15-HO}. It was shown in~\cite{shekhovtsov-14} that most existing persistency techniques can be expressed as improving substitutions. A mapping $p\colon \X\to \X$ is called {\em a substitution}, if it is idempotent, \ie,~$p(x)=p(p(x))$. 
\begin{definition}\label{def:improvingMapping} A substitution $p\colon \X\to \X$  is called {\em strictly improving} for the cost vector $f$ if 
\begin{align}\label{def-eq:improving}
(\forall x \mid p(x)\neq x) \ \ E_f(p(x)) < E_f(x).
\end{align}
\end{definition}
When a strictly improving substitution is applied to any labeling $x$, it is guaranteed that $p(x)$ has equal or better energy. In particular, strictly improving substitutions generalize the strong autarky property~\cite{PseudoBooleanOptimizationBorosHammer}. When applied to the whole search space $\X$ we obtain its image $p(\X)$ -- a potentially smaller search space containing all optimal labelings. 

In what follows we will restrict ourselves to {\em node-wise} substitutions, \ie,~those defined locally for each node: $p(x)_u = p_u(x_u)$, where $p_u \colon \X_u \to \X_u$. Indeed, already this class of substitutions covers most existing persistency methods.

\begin{example}
Let us consider the dead-end elimination (DEE)~\cite{DeadEndEliminationDesmet,Goldstein-94-dee}. It is a test whether a given label in a single node, \eg, $x_u=1$ in \cref{fig:dee} can be substituted with another one, \eg, $x_u=3$ in \cref{fig:dee}. The change of the energy under this substitution depends only on the configuration of neighbors $x_{\N(u)}$, and the value of the change is additive in neighbors, so that it can be verified for all $x_{\N(u)}$ whether the substitution always improves the energy. If it is so, the label $x_u=1$ can be eliminated and the test is repeated for a different label in the reduced problem.
\end{example}

A general substitution we consider is applied to labels in all nodes simultaneously, as illustrated in \cref{fig:substitution} for two variables. We obtain the following principle for identifying persistencies.
\begin{proposition}\label{prop:ImprovingMappingProperty}
 If $p$ is a strictly improving substitution, then {\em any} optimal solution $x^*$ of~\eqref{equ:energyMinimization} must satisfy $(\forall v\in\V)\ p_v(x^*_v)  = x_v^*$.
\end{proposition}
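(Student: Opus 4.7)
The plan is to argue by contradiction directly from the definition of a strictly improving substitution. Assume that $x^*$ is an optimal solution of \eqref{equ:energyMinimization} but that the stated conclusion fails, \ie, there exists some node $v\in\V$ with $p_v(x^*_v) \neq x^*_v$. Since $p$ is node-wise, \ie, $p(x)_u = p_u(x_u)$ for each $u$, the existence of such a $v$ is equivalent to $p(x^*) \neq x^*$ as full labelings.

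Next I would invoke \cref{def:improvingMapping}: whenever $p(x) \neq x$, the inequality $E_f(p(x)) < E_f(x)$ holds strictly. Applying this to $x = x^*$ yields $E_f(p(x^*)) < E_f(x^*)$. However, $p(x^*) \in \X$ is a feasible labeling of \eqref{equ:energyMinimization}, and $x^*$ was assumed to attain the minimum of $E_f$ over $\X$. This is the desired contradiction, so the assumption must fail and $p_v(x^*_v) = x^*_v$ for every $v\in\V$.

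There is essentially no obstacle: the statement is a one-line consequence of the definition, and the only thing to be careful about is the logical translation between the node-wise condition $(\forall v)\ p_v(x^*_v)=x^*_v$ and the global condition $p(x^*)=x^*$, which is immediate from node-wise factorization of $p$. The proof can thus be written in just a few lines without additional machinery.
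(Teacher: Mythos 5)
Your proof is correct and matches the paper's own (one-line) argument exactly: the paper likewise notes that if some $p_v(x^*_v)\neq x^*_v$ then $p(x^*)\neq x^*$, so $E_f(p(x^*))<E_f(x^*)$ by \cref{def:improvingMapping}, contradicting optimality of $x^*$. Your additional remark about translating between the node-wise and global conditions is a harmless elaboration of the same step.
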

Indeed, otherwise $E_f(p(x^*)) < E_f(x^*)$, which is a contradiction. If $p_v(i) \neq i$, then idempotency implies that label $(v,i)$ is non-optimal persistent and can be excluded from consideration. 


\par
\myparagraph{Verification Problem}
Verifying whether a given substitution is strictly improving is an NP hard decision problem~\cite{shekhovtsov-14}. 
In order to obtain a polynomial sufficient condition we will first rewrite~\eqref{def-eq:improving} as an energy minimization problem and then relax it. 
To this end we reformulate Definition~\ref{def:improvingMapping} in an optimization form:
\begin{proposition}\label{P:improving-eq-reformulation} Substitution $p$ is strictly improving {\em iff}
\begin{subequations}\label{improving-eq-reformulation}
\begin{align}\label{improving-eq-reformulation-a}
&\min_{x\in\X} \big( E_f(x) - E_f(p(x))\big) \geq 0,\\
&p(x) = x \ \mbox{for all minimizers}. \label{improving-eq-reformulation-b}
\end{align}
\end{subequations}
\end{proposition}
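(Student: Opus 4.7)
The plan is to prove both directions of the equivalence between Definition~\ref{def:improvingMapping} and the optimization reformulation, leveraging the idempotency of $p$ as the key structural ingredient.

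First, I would record a simple observation that drives both directions: since $p$ is idempotent, every point of the form $p(y)$ is a fixed point of $p$, so fixed points always exist in $\X$; moreover at any fixed point $x$ (where $p(x)=x$) the quantity $E_f(x) - E_f(p(x))$ equals $0$. Consequently, independently of which direction we are proving, the value $0$ is attainable by the objective in~\eqref{improving-eq-reformulation-a}, and the finite set $\X$ guarantees that the minimum is attained.

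For the forward direction ($\Rightarrow$), I would assume $p$ is strictly improving in the sense of Definition~\ref{def:improvingMapping} and split any $x\in\X$ into two cases. If $p(x)=x$, then $E_f(x)-E_f(p(x))=0$; if $p(x)\neq x$, then by~\eqref{def-eq:improving} we have $E_f(p(x))<E_f(x)$, so $E_f(x)-E_f(p(x))>0$. This simultaneously yields~\eqref{improving-eq-reformulation-a} (the minimum is at least $0$, and in fact equals $0$ by the observation above) and~\eqref{improving-eq-reformulation-b}: any minimizer $x^*$ achieves value $0$, but non-fixed points give strictly positive values, so $p(x^*)=x^*$.

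For the reverse direction ($\Leftarrow$), I would assume~\eqref{improving-eq-reformulation-a} and~\eqref{improving-eq-reformulation-b}. From the initial observation, $0$ is attainable, so combined with the minimum being $\geq 0$ the minimum equals exactly $0$, attained at every fixed point. Now take any $x$ with $p(x)\neq x$; by~\eqref{improving-eq-reformulation-b}, $x$ cannot be a minimizer, hence $E_f(x)-E_f(p(x))>0$, i.e.\ $E_f(p(x))<E_f(x)$, which is precisely~\eqref{def-eq:improving}.

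I do not foresee any real obstacle here: the proof is essentially an unpacking of definitions together with the observation that idempotency pins the value of the minimum to $0$. The only subtle point worth stating explicitly is why~\eqref{improving-eq-reformulation-b} is not redundant with~\eqref{improving-eq-reformulation-a}: the inequality alone would permit non-fixed minimizers that merely preserve the energy, and it is exactly condition~\eqref{improving-eq-reformulation-b} that rules these out and upgrades the non-strict inequality in~\eqref{def-eq:improving} to a strict one.
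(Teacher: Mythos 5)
Your proof is correct and follows essentially the same route as the paper's: both arguments reduce condition~\eqref{improving-eq-reformulation-a} to the pointwise inequality $E_f(x)\ge E_f(p(x))$, use idempotency to see that fixed points exist and attain the value $0$, and then case-split on $p(x)=x$ versus $p(x)\ne x$ to pass between the strict inequality in Definition~\ref{def:improvingMapping} and the pair~\eqref{improving-eq-reformulation-a}--\eqref{improving-eq-reformulation-b}. Your explicit remark on why~\eqref{improving-eq-reformulation-b} is not redundant is a nice addition, but the substance of the argument matches the paper's proof.
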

\begin{proof}
Indeed, condition~\eqref{improving-eq-reformulation-a} is equivalent to $(\forall x)\ E_f(x) \geq E_f(p(x))$.
Sufficiency: if $x \neq p(x)$, then $x$ is not a minimizer and $E_f(x) > E_f(p(x))$.
Necessity: for $x = p(x)$ we have that $E_f(x) = E_f(p(x))$, therefore from \cref{def:improvingMapping} it follows that condition~\eqref{improving-eq-reformulation-a} holds and any $x = p(x)$ is a minimizer, moreover, for any minimizer $x$ it must be $E_f(x) - E_f(p(x)) = 0$ and from \cref{def:improvingMapping} it follows $x = p(x)$.
\par
\end{proof}

In \cref{sec:lift-relax} we will show that the difference of the energies in~\eqref{improving-eq-reformulation-a} can be represented as a pairwise energy with an appropriately constructed cost vector $g$ so that there holds
\begin{align}\label{equ:single-improving-energy}
E_f(x) - E_f(p(x)) = E_g(x).
\end{align}
Therefore, according to Proposition~\ref{P:improving-eq-reformulation} the verification of the strictly improving property reduces to minimizing the energy~\eqref{equ:single-improving-energy} and checking that~\eqref{improving-eq-reformulation-b} is fulfilled.
To make the verification problem {\em tractable}, we relax it as
\begin{subequations}\label{improving-eq-reformulation-relax}
\begin{align}\label{improving-eq-reformulation-relax-a}
& \min_{\mu\in\Lambda} E_g(\mu) \geq 0, \\
& p(\mu) = \mu \ \mbox{for all minimizers},
\end{align}
\end{subequations}
where $\Lambda$ is a tractable polytope such that its integer vertices correspond to labelings (the standard LP relaxation that we use will be defined in \cref{sec:ApproximativeDualAlgorithm}), $\mu$ is a relaxed labeling and $E_g(\mu)$ and $p(\mu)$ are appropriately defined extensions of discrete functions $E_g(x)$ and $p(x)$, defined in \cref{sec:lift-relax}.
%
%
By construction, the objective value~\eqref{improving-eq-reformulation-relax} matches exactly to that of~\eqref{def-eq:improving} for all integer labelings $\mu\in\Lambda \cap \{0,1\}^\I$, which is sufficient for~\eqref{def-eq:improving} to hold. The sufficient condition~\eqref{improving-eq-reformulation-relax} (made precise in \cref{def:linearStrictLPImprovingMapping}) means that $p$ strictly improves not only integer labelings but also all relaxed labelings and therefore such substitutions will be called {\em strictly relaxed-improving} for a cost vector $f$. 
Assuming $\Lambda$ is fixed by the context, let $\BS_f$ denotes the set of all substitutions $p$ satisfying~\eqref{improving-eq-reformulation-relax}.
%

\myparagraph{Maximum Persistency and Subset-to-One Substitutions}
The {\em maximum persistency} approach~\cite{shekhovtsov-14} consists of finding a relaxed-improving substitution $p\in\BS_f$ that eliminates {\em the maximal} number of labels:
\begin{equation}\label{equ:maxPersistency}
\max\limits_{p\in \SP}\sum_{v\in\V}\big|\{i \in \X_v \mid p(i) \neq i \}\big|,\ \mbox{s.t. $p\in\BS_f$}\,,
\end{equation}
where $\SP$ is a class of substitutions. 

While maximizing over all substitutions is not tractable, maximizing over the following restricted class is~\cite{shekhovtsov-14}. Assume we are given a {\em test} labeling $y$, which in our case will be an approximate solution of the MAP inference~\eqref{equ:energyMinimization}. 
We then consider substituting in each node $v$ a subset of labels with $y_v$.


%
\begin{definition}[\cite{shekhovtsov-14}] A substitution $p$ is in the class of {\em subset-to-one} substitutions $\SP^{2,y}$, where $y\in\X$, if there exist subsets $\Y_v \subset \X_v \backslash \{ y_v \}$ for all $v$ such that
\begin{equation}\label{equ:pixelWiseMapping}
 p_v(i)=\left\{
 \begin{array}{ll}
  y_v, & \mbox{if}\ i\in \Y_v; \\
  i, & \mbox{if}\ i\notin \Y_v.
 \end{array}
 \right.
\end{equation}
\end{definition}
%
See~\cref{fig:substitution,fig:alg-steps} for examples. Note, this class is rather large: there are $2^{|\X_v|-1}$ possible choices for $p_v$ and most of the existing methods for partial optimality still can be represented using it~\cite{shekhovtsov-14} (in particular, methods~\cite{Kovtun-10,Gridchyn-13} can be represented using a constant test labeling, $y_v = \alpha$ for all $v$).


%
\par
The restriction to the class $\SP^{2,y}$ allows to represent the search of the substitution that eliminates the maximum number of labels as the one with the largest (by inclusion) sets $\Y_v$ of substituted labels. This allows to propose a relatively simple algorithm.
%
\myparagraph{Cutting Plane Algorithm}
The algorithm is a cutting plane method in a general sense: we maintain a substitution $p^t$ which is in all iterations better or equal than the solution to~\eqref{equ:maxPersistency} and achieve feasibility by iteratively constraining it. 
\begin{itemize}
\item[\em Initialization:] 
Define the substitution $p^0$ by the sets $\Y^0_v = \X_v \backslash \{y_v\}$. It substitutes everything with $y$ and clearly maximizes the objective~\eqref{equ:maxPersistency}.
\item[\em Verification:] Check whether current $p^t$ is strictly relaxed-improving for $f$ by solving relaxed problem~\eqref{improving-eq-reformulation-relax}.
      If yes, return~$p^t$. If not, the optimal relaxed solution $\mu^*$ corresponds to the most violated constraint.
\item[\em Cutting plane:] Assign $p^{t+1}$ to the substitution defined by the largest sets $\Y_v^{t+1}$ such that $\Y_v^{t+1} \subset \Y_v^t$ and the constraints ${E_{g}(\mu^*)\geq 0}$, ${p(\mu^*)=\mu^*}$, are satisfied. Repeat the verification step. 
\end{itemize}

The steps of this meta-algorithm are illustrated in \cref{fig:alg-steps}. It is clear that when the algorithm stops the substitution $p^t$ is strictly improving, although it could be the identity map that does not eliminate any labels. The exact specification of the cutting plane step will be derived in~\cref{sec:GenerilAlgorithm} and it will be shown that this algorithm solves the maximum persistency problem~\eqref{equ:maxPersistency} over $\SP^{2,y}$ optimally.

\myparagraph{Work Outline} 
In \cref{sec:lift-relax} we give a precise formulation of the relaxed condition~\eqref{improving-eq-reformulation-relax} and its components. In~\cref{sec:GenerilAlgorithm} we specify details of the algorithm and prove its optimality. These results hold for a general relaxation $\Lambda \supseteq \M$ but require to solve linear programs~\eqref{improving-eq-reformulation-relax-a} precisely.

The rest of the paper is devoted to an approximate solution of the problem~\eqref{equ:maxPersistency}, \ie\ finding a relaxed improving mapping, which is {\em almost} maximum. We consider specifically the standard LP relaxation and reformulate the algorithm to use a dual solver for the problem~\eqref{improving-eq-reformulation-relax-a}, \cref{sec:ApproximativeDualAlgorithm}. We then gradually relax requirements on the optimality of the dual solver while keeping persistency guarantees,~\cref{sec:ApproximativeDualAlgorithm,sec:any-dual,sec:problemReduction}, and propose several theoretical and algorithmic tools to solve the series of verification problems incrementally and overall efficiently,~\cref{sec:speedups}.
Finally, we provide an exhaustive experimental evaluation in~\cref{sec:experiments}, which clearly demonstrates efficacy of the developed method.


%

\begin{figure}
\centering
\includegraphics[width=0.65\linewidth]{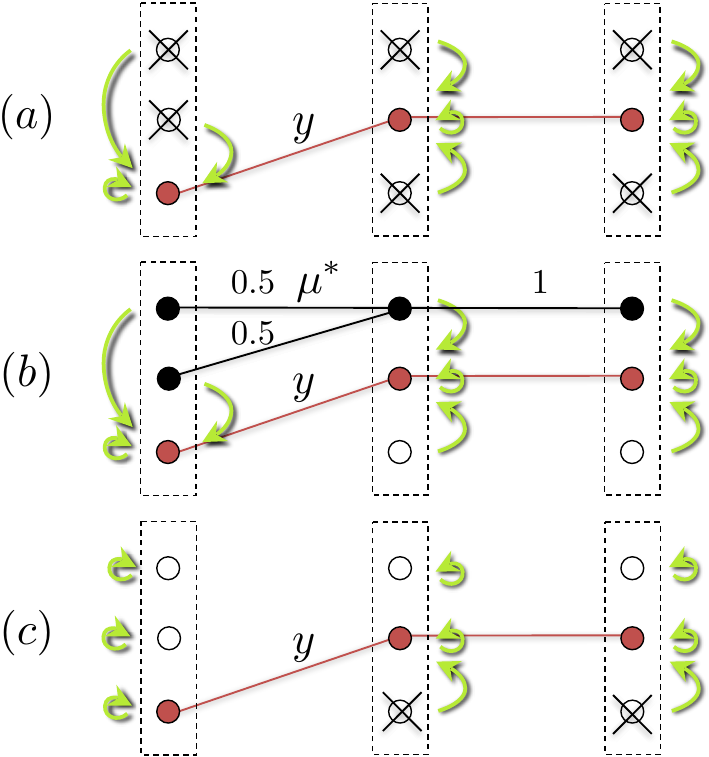}\\[5pt]
\caption{Steps of the discrete cutting-plane algorithm. (a) Starting from substitution that maps everything to the test labeling $y$ (red), crossed labels would be eliminated if $p$ passes the sufficient condition. (b) A relaxed solution $\mu^*$ violating the sufficient condition is found (black). (c) Substitution $p$ is pruned.
\label{fig:alg-steps}}
\end{figure}
\section{Relaxed-Improving Substitutions}\label{sec:lift-relax}
\myparagraph{Overcomplete Representation}
In this section we formally derive the strictly relaxed-improving sufficient condition~\cref{improving-eq-reformulation-relax}. To obtain the relaxation we use the standard lifting approach (\aka overcomplete representation~\cite{GraphicalModelsWainwrightJordan}), in which a labeling is represented using the $1$-hot encoding. This lifting allows to linearize the energy function, the substitution and consequently both the non-relaxed~\cref{improving-eq-reformulation} and relaxed~\cref{improving-eq-reformulation-relax} improving substitution criteria.
\par
The lifting is defined by the mapping $\delta\colon \X \to \Real^\I\colon$
\begin{subequations}
\begin{align}
&\delta(x)_{\emptyset} = 1,\\
&\delta(x)_{u}(i) = \leftbb x_u {=} i\rightbb,\\
&\delta(x)_{uv}(i,j) = \leftbb x_u {=} i\rightbb \leftbb x_v {=} j\rightbb,
\end{align}
\end{subequations}
where $\leftbb \cdot \rightbb$ is the Iverson bracket, \ie, $\leftbb A \rightbb$ equals $1$ if $A$ is true and $0$ otherwise. Using this lifting, we can linearize unary terms as $f_v(x_v) = \sum_{k} f_v(k) \leftbb x_v{=}k \rightbb = \sum_{k} f_v(k) \delta(x)_v(k)$ and similarly for the pairwise terms. This allows to linearize the energy function $E_f$ and write it as a scalar product $E_{ f}(x)=\lan  f,\delta(x)\ran$ in $\Real^\I$.
The energy minimization problem~\eqref{equ:energyMinimization} can then be written as
\begin{equation}\label{energy-M}
 \min_{x\in\X}E_{f}(x)=\min_{x\in\X}\lan f,\delta(x)\ran=\min_{\mu\ \in \SM}\lan f,\mu\ran\,,
\end{equation}
where $\SM = \conv \delta(\X)$ is the convex hull of all labelings in the lifted space, also known as marginal polytope~\cite{GraphicalModelsWainwrightJordan}. The last equality in~\eqref{energy-M} uses the fact that the minimum of a linear function on a finite set equals the minimum on its convex hull. Expression~\eqref{energy-M} is an equivalent reformulation of the energy minimization problem as a linear program, however over a generally intractable polytope $\SM$.
%
%
%
\myparagraph{Lifting of Substitutions}
Next we show how a substitution $p\colon \X \to \X$ can be represented as a linear map in the lifted space~$\Real^\I$. This will allow to express the term $E_f(p(x))$ as a linear function of $\delta(x)$ and hence also to represent the non-relaxed criterion~\eqref{improving-eq-reformulation}.
\begin{proposition}\label{Pro:P-transpose}
Given a substitution $p$, 
let $P\T \colon \BR^\SI \to \BR^\SI$ be defined by its action on a cost vector $f\in\Real^\I$ as follows: 
\begin{subequations}\label{P-transpose}
\begin{align}
(P\T f)_\emptyset & = f_\emptyset,\\
(P\T f)_u(i) & = f_u(p_u(i)),\label{P-transpose-b}\\
(P\T f)_{uv}(i,j) & = f_{uv}(p_u(i),p_v(j)) \label{P-transpose-c}
\end{align}
\end{subequations}
$\forall$ $u\in \V$, $uv\in\SE$, $ij\in \X_{uv}$. Then $P$ satisfies
\begin{align}\label{lenear-extension-def}
(\forall x\in\X)\ \ \delta(p(x)) = P\delta(x).
\end{align}
\end{proposition}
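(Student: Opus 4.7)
The natural plan is to exploit the fact that $P$ is defined implicitly as the adjoint of the explicitly given map $P\T$, so it suffices to verify the adjoint identity $\langle f, P\delta(x)\rangle = \langle P\T f, \delta(x)\rangle$ coincides with $\langle f, \delta(p(x))\rangle$ for all $f\in\Real^\I$ and then invoke nondegeneracy of the standard inner product on the finite-dimensional space $\Real^\I$.

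First I would unpack the scalar product $\langle P\T f, \delta(x)\rangle$ using the componentwise definition (10a)--(10c):
\begin{equation*}
\langle P\T f, \delta(x)\rangle = f_\emptyset + \sum_{u\in\V}\sum_{i\in\X_u} f_u(p_u(i))\,\leftbb x_u{=}i\rightbb + \sum_{uv\in\SE}\sum_{ij} f_{uv}(p_u(i),p_v(j))\,\leftbb x_u{=}i\rightbb\,\leftbb x_v{=}j\rightbb.
\end{equation*}
Each Iverson bracket selects exactly one term in its summation, so the sums collapse to
\begin{equation*}
f_\emptyset + \sum_{u\in\V} f_u(p_u(x_u)) + \sum_{uv\in\SE} f_{uv}(p_u(x_u),p_v(x_v)) = E_f(p(x)),
\end{equation*}
where the last equality uses the assumed node-wise structure $p(x)_v = p_v(x_v)$ together with the definition of $E_f$ in~\eqref{equ:energyMinimization}. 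On the other hand, the overcomplete representation yields $E_f(p(x)) = \langle f, \delta(p(x))\rangle$ directly from the linearization discussed around~\eqref{energy-M}.

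Combining both equalities gives $\langle f, P\delta(x)\rangle = \langle f, \delta(p(x))\rangle$ for every $f\in\Real^\I$; hence $P\delta(x) = \delta(p(x))$, which is~\eqref{lenear-extension-def}. The only mild subtlety to flag is that the statement defines $P\T$ rather than $P$ itself, so one should note at the outset that in finite dimension with the standard inner product this uniquely determines $P$ as the matrix transpose; after that identification, the remainder of the argument is a routine collapse of indicator sums and presents no real obstacle.
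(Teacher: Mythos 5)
Your proof is correct and follows essentially the same route as the paper's: both establish $\langle P\T f,\delta(x)\rangle = E_f(p(x)) = \langle f,\delta(p(x))\rangle$ for all $f$ and conclude $P\delta(x)=\delta(p(x))$ by nondegeneracy of the inner product. You merely spell out the indicator-sum collapse that the paper leaves as an immediate consequence of~\eqref{P-transpose}, and you usefully flag the implicit identification of $P$ as the transpose of the explicitly defined $P\T$.
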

\begin{proof}
Let $x\in\X$. From~\eqref{P-transpose} it follows that $E_{P\T f}(x) = E_f(p(x))$, which can be expressed as a scalar product
$$
\<P\T f, \delta(x) \>  = \<f, P \delta(x) \> = \<f,\delta(p(x))\>.
$$
Since this equality holds for all $f\in\Real^\I$ it follows that $P \delta(x) = \delta(p(x))$.
\end{proof}
The expression~\eqref{lenear-extension-def} allows to write the energy of the substituted labeling $p(x)$ as 
\begin{align}
E_f(p(x)) = \<f, \delta(p(x))\> = \<f, P \delta(x)\> = E_{P\T f}(x).
\end{align}
For this reason, the mapping $P$ is called the {\em linear extension} of $p$ and will be denoted with the symbol $[p]$. The following example illustrates how $[p]$ looks in coordinates.
\begin{example}\label{example:linear-extension}
Consider the substitution $p$ depicted in \cref{fig:substitution} and defined by $p_u  \colon 1,2,3 \mapsto 3,3,3$; $p_v \colon 1,2,3 \mapsto 1,3,3$.
The relaxed labeling $\mu\in\Real^I$ has the structure $(\mu_\emptyset, \mu_u, \mu_v, \mu_{uv})$. The linear extension $[p]\colon \Real^\I\to \Real^\I$ can be written as a block-diagonal matrix
\begin{align}
&\left(\begin{smallmatrix}
1\\
& P_u \\
& & P_v \\
& & & P_{uv} 
\end{smallmatrix}\right)\hskip-0.5ex, \mbox{where} \ 
P_u = \left(\begin{smallmatrix}
0 & 0 & 0 \cr
0 & 0 & 0 \cr
1 & 1 & 1
\end{smallmatrix}\right)\hskip-0.5ex,\ 
P_v = \left(\begin{smallmatrix}
1 & 0 & 0 \cr
0 & 0 & 0 \cr
0 & 1 & 1
\end{smallmatrix}\right)
\end{align}
and $P_{uv}$ is defined by $P_{uv} \mu_{uv} = P_u \mu_{uv} P_v$, where $\mu_{uv}$ is shaped as a $3\times 3$ matrix. The action of the block $P_u$ expresses as
\begin{align}
P_u \left(\begin{smallmatrix}
\mu_u(1) \cr
\mu_u(2) \cr
\mu_u(3)
\end{smallmatrix}\right) = \left(\begin{smallmatrix}
0 \cr
0 \cr
1
\end{smallmatrix}\right)\hskip-0.5ex,
\end{align}
\ie all relaxed labels are mapped to the indicator of the label $x_u = 3$. And the adjoint operator $P\T$ acts as follows ({\em cf.}~\eqref{P-transpose-b}):
\begin{align}
P_u^{\top}  (\begin{smallmatrix} f_u(1) & f_u(2) & f_u(3) \end{smallmatrix})^{\top}
& =  (\begin{smallmatrix} f_u(3) & f_u(3) & f_u(3) \end{smallmatrix})^{\top},\\
P_v^{\top}  (\begin{smallmatrix} f_u(1) & f_u(2) & f_u(3) \end{smallmatrix})^{\top}
&=  (\begin{smallmatrix} f_u(1) & f_u(3) & f_u(3) \end{smallmatrix})^{\top}\,.
\end{align}
Similarly, due to~\eqref{P-transpose-c} we have, \eg,
$(P^{\top}_{uv}f_{uv})(1,2) = f_{uv}(3,3)$, $(P^{\top}_{uv}f_{uv})(1,1) = f_{uv}(3,1)$ and so on.
\qed
\end{example}

\myparagraph{Strictly improving substitutions}
Let $I$ denote the identity mapping $\Real^\I\to \Real^\I$. Using \Cref{P:improving-eq-reformulation} and the linear extension $[p]$, we obtain that substitution $p$ is strictly improving iff the value of
\begin{multline}\label{equ:linearStrictImprovingMapping}
\min_{x\in\X}\lan f,\delta(x)-\delta(p(x))\ran = 
\min_{x\in\X}\lan f,(I-[p])\delta(x)\ran\\
 = \min_{x\in\X}\<(I-[p])^{\top} f,\delta(x)\>
= \min_{\mu\in\SM}\<(I-[p])^{\top} f,\mu\>
\end{multline}
is zero and $[p]\mu=\mu$ for all minimizers. 
Note that problem~\eqref{equ:linearStrictImprovingMapping} is of the same form as the energy minimization~\eqref{energy-M} with the cost vector $g = (I-[p]\T)f$, as introduced in~\eqref{equ:single-improving-energy}. 
\par
The sufficient condition for persistency~\cref{improving-eq-reformulation-relax} is obtained by relaxing the intractable marginal polytope $\M$ in~\eqref{equ:linearStrictImprovingMapping} to a tractable outer approximation $\Lambda \supset \SM$. 
\begin{definition}[\cite{shekhovtsov-14}]\label{def:linearStrictLPImprovingMapping}
Substitution $p$ is {\em strictly $\Lambda$-improving for the cost vector $f\in\BR^{\SI}$} (shortly, {\em strictly relaxed-improving}, or $p \in \BS_f$) if 
\begin{subequations}\label{equ:linearStrictLPImprovingMapping}
 \begin{align}\label{equ:linearStrictLPImprovingMapping-a}
	\textstyle \min_{\mu\in \Lambda}\<(I-[p])^{\top} f,\mu\>=0,\\
	\label{equ:linearStrictLPImprovingMapping-b}
	[p]\mu^*=\mu^* \mbox{ for all minimizers}.
 \end{align}
\end{subequations}
\end{definition}
In~\cref{sec:ApproximativeDualAlgorithm}, $\Lambda$ will be defined as the polytope of the standard LP relaxation but until then the arguments are general and require only that $\Lambda \supset \SM$.
Since $\Lambda$ includes all integer labelings, it is a sufficient condition for improving substitution and hence persistency.
\begin{corollary}\label{prop:LambdaImprovingIsImproving}
 If substitution $p$ is strictly $\Lambda$-improving for $f$ and~${\Lambda \supset \SM}$, then $p$ is strictly improving for $f$.
\end{corollary}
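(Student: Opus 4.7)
My plan is to apply \cref{P:improving-eq-reformulation} and use the inclusion $\SM\subset\Lambda$ to transfer the relaxed condition back to the discrete one. Concretely, I will show both clauses of the equivalent reformulation \eqref{improving-eq-reformulation} hold, whence $p$ is strictly improving for $f$.

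First, by the chain of identities \eqref{equ:linearStrictImprovingMapping} established earlier, we have
\begin{equation*}
\min_{x\in\X}\bigl(E_f(x)-E_f(p(x))\bigr)=\min_{\mu\in\SM}\langle(I-[p])\T f,\mu\rangle.
\end{equation*}
Since $\SM\subset\Lambda$, the minimum over $\SM$ is at least the minimum over $\Lambda$, which by hypothesis \eqref{equ:linearStrictLPImprovingMapping-a} equals zero. This establishes clause \eqref{improving-eq-reformulation-a} of \cref{P:improving-eq-reformulation}.

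Next I need to verify that $p(x)=x$ for every minimizer $x\in\X$ of $E_f(x)-E_f(p(x))$. For such a minimizer, the value of the minimum is in fact $0$ (as already established; one can also see this immediately from the fixed points of the idempotent map $p$). Hence $\delta(x)\in\SM\subset\Lambda$ achieves the value $0$ of $\langle(I-[p])\T f,\cdot\rangle$, so $\delta(x)$ is a minimizer over $\Lambda$ as well. Applying hypothesis \eqref{equ:linearStrictLPImprovingMapping-b} to $\mu^*=\delta(x)$ gives $[p]\delta(x)=\delta(x)$, and by \cref{Pro:P-transpose} this reads $\delta(p(x))=\delta(x)$. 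Since $\delta$ is injective, this forces $p(x)=x$, establishing clause \eqref{improving-eq-reformulation-b}.

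Both clauses of \cref{P:improving-eq-reformulation} hold, so $p$ is strictly improving for $f$. There is no hard step; the whole content is the monotonicity of the minimum under set inclusion, combined with the fact that integer-valued $\delta(x)$ lies in $\SM$ and therefore in $\Lambda$, which lets a condition verified on the larger set $\Lambda$ be imported to the smaller set $\SM$ and, through the lifting $\delta$, back to $\X$.
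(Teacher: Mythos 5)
Your proof is correct and follows exactly the route the paper intends: the paper leaves \cref{prop:LambdaImprovingIsImproving} as an immediate consequence of the remark that $\Lambda$ contains all integer labelings, and your argument is a careful spelling-out of that observation via \cref{P:improving-eq-reformulation}, the inclusion $\delta(\X)\subset\SM\subset\Lambda$, and the injectivity of $\delta$. Nothing is missing; the two clauses are verified precisely as the definitions require.
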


%
 
%
The problem~\eqref{equ:linearStrictLPImprovingMapping-a} will be called the {\em verification LP} and the decision problem to test for $p\in \BS_{ f}$, \ie to verify conditions~\eqref{equ:linearStrictLPImprovingMapping}, will be called the {\em verification problem}.


%
%
%

\section{Generic Persistence Algorithm}\label{sec:GenerilAlgorithm}
%
\myparagraph{Structure of $P^{2,y}$}
%
In~\cite{shekhovtsov-14} it was shown that the maximum persistency problem~\eqref{equ:maxPersistency} over the class of substitutions $\SP^{2,y}$ can be formulated as a single linear program, where the substitution is represented using auxiliary (continuous) variables.
Here we take a different approach based on observing a lattice-like structure of improving substitutions.
\par
Throughout this section we will assume that the test labeling $y\in\X$ is fixed. 
Let us compare two substitutions $p$ and $q$ by the sets of the labels they eliminate. A substitution $p\in\SP^{2,y}$ eliminates all labels in $\Y_v$, or equivalently all labels not in $p_v(\X_v)$.
\begin{definition}\label{def:order}
A substitution ${p\in\SP^{2,y}}$ is {\em better equal} than a substitution ${q\in\SP^{2,y}}$, denoted by $p \geq q$, if $(\forall v\in\V)$ it holds ${p_v(\X_v) \subset q_v(\X_v)}$.
\end{definition}
\begin{proposition}\label{Pro:lattice}
Let the partially ordered set ${(\BS_{ f} \cap \SP^{2,y}, \geq)}$ of subset-to-one strictly relaxed-improving substitutions has the maximum and let it be denoted $r$. Then $r$ is the unique solution of~\eqref{equ:maxPersistency} with $\SP=\SP^{2,y}$. 
\end{proposition}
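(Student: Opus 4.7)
The plan is to translate the partial order on $\SP^{2,y}$ directly into the objective of~\eqref{equ:maxPersistency}, show the objective is monotone with respect to~$\geq$, and then argue uniqueness using the fact that a subset-to-one substitution is completely recoverable from its image sets.

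First I would rewrite the objective. For $p\in\SP^{2,y}$ determined by the sets $\Y_v$, one has $p_v(i)\neq i$ precisely when $i\in\Y_v$, and since $\Y_v\subset \X_v\setminus\{y_v\}$, the image satisfies $p_v(\X_v)=\X_v\setminus\Y_v$. Consequently
\begin{equation*}
\sum_{v\in\V}\bigl|\{i\in\X_v \mid p_v(i)\neq i\}\bigr|
=\sum_{v\in\V}|\Y_v|
=\sum_{v\in\V}\bigl(|\X_v|-|p_v(\X_v)|\bigr).
\end{equation*}
Thus maximizing the persistency objective is equivalent to minimizing $\sum_v|p_v(\X_v)|$.

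Next I would check monotonicity. If $p\geq q$ in the sense of \cref{def:order}, then $p_v(\X_v)\subset q_v(\X_v)$ for every~$v$, which gives $|p_v(\X_v)|\leq |q_v(\X_v)|$ and hence the objective of~$p$ is at least that of~$q$. Assuming $r$ is the maximum of $(\BS_f\cap\SP^{2,y},\geq)$, it follows that $r\geq p$ for every $p\in\BS_f\cap\SP^{2,y}$, so $r$ attains the maximum of~\eqref{equ:maxPersistency} over $\SP^{2,y}$.

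For uniqueness, suppose $p\in\BS_f\cap\SP^{2,y}$ also attains the maximum. Then $r\geq p$ gives $r_v(\X_v)\subset p_v(\X_v)$ for every $v$, while equality of the objective values together with the identity $\sum_v|p_v(\X_v)|=\sum_v|r_v(\X_v)|$ forces $r_v(\X_v)=p_v(\X_v)$ for all~$v$. The main small obstacle is to conclude from equal images that $r=p$: since $p\in\SP^{2,y}$ is determined by $y$ and the sets $\Y_v=\X_v\setminus p_v(\X_v)$ via formula~\eqref{equ:pixelWiseMapping}, the equality of images across all nodes uniquely pins down the substitution, so $r=p$. This completes the argument and no deeper fact about $\BS_f$ is required beyond membership.
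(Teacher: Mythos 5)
Your proposal is correct and follows essentially the same route as the paper's proof: monotonicity of the objective $\sum_v|p_v(\X_v)|$ under the order $\geq$ gives optimality of the maximum $r$, and uniqueness follows because equal objective values combined with the inclusions $r_v(\X_v)\subset p_v(\X_v)$ force equal image sets. Your only addition is to spell out explicitly that a substitution in $\SP^{2,y}$ is determined by its image sets via~\eqref{equ:pixelWiseMapping}, a step the paper leaves implicit.
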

\begin{proof}
Since $r$ is the maximum, it holds $r \geq q$ for all $q$. From Definition~\ref{def:order} we have $r_v(\X_v) \subset q_v(\X_v)$ for all $v$ and thus $\sum_{v}|r_v(\X_v)| \leq \sum_v |q_v(\X_v)|$. Therefore $r$ is optimal to~\eqref{equ:maxPersistency}. Additionally, if $r \neq q$, then it holds $\sum_v|r_v(\X_v)| < \sum_v|q_v(\X_v)|$ and therefore $r$ is the unique solution to~\eqref{equ:maxPersistency}. 
\end{proof}
The existence of the maximum will formally follow from the correctness proof of the algorithm,~\cref{thm:maxMapping}. A stronger claim, which is not necessary for our analysis, but which may provide a better insight is that $(\BS_{ f} \cap \SP^{2,y}, \geq)$ is a lattice isomorphic to the lattice of sets with union and intersection operations. This is seen as follows. If both $p$ and $q$ are strictly-improving then so is their composition $r(x) = p(q(x))$, as can be verified by chaining inequalities~\eqref{def-eq:improving}. In $\SP^{2,y}$ the composition satisfies the property $r_u(\X_u) = p_u(\X_u) \cap q_u(\X_u)$, and can be identified with the {\em join} of $p$ and $q$ (the least $r$ such that $r \geq p$ and $r \geq q$). This can be shown to hold also for $\BS_{ f} \cap \SP^{2,y}$. It is this structure that allows to find the maximum in $\BS_{ f} \cap \SP^{2,y}$ by a relatively simple algorithm.
\myparagraph{Generic Algorithm}
Our generic primal algorithm, displayed in~\cref{alg:iterative-LP}, represents a substitution $p \in \SP^{2,y}$ by the sets $\Y_v$ of labels to be substituted with $y$, via~\eqref{equ:pixelWiseMapping}. 
\cref{alg1:init} initializes these sets to all labels but $y$. \cref{alg1:init-p} constructs the cost vector of the verification LP in condition~\eqref{equ:linearStrictLPImprovingMapping}.
\cref{alg1:test-p,alg1:check-condition} solve the verification LP and test whether sufficient conditions~\eqref{equ:linearStrictLPImprovingMapping} are satisfied via the following reformulation. 
\begin{restatable}{proposition}{SIcomponents}\label{SI-components}
For a given substitution $p$, let $\O^*\subseteq\Lambda$ denote the set of minimizers of the verification LP~\eqref{equ:linearStrictLPImprovingMapping} and 
\begin{equation}\label{equ:SI-components}
\O^*_v := \{i\in \X_v \mid (\exists \mu\in\O^*) \ \ \mu_v(i) > 0\},
\end{equation}
which is the support set of all optimal solutions in node $v$. Then $p\in\BS_f$ iff $(\forall v\in\SV\ \ \forall i\in\O^*_v)\ \ p_v(i) = i$.
\end{restatable}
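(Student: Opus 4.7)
The plan is to collapse the two conditions in~\cref{def:linearStrictLPImprovingMapping} into a single coordinate-wise statement on node marginals, and then translate that statement via the support sets $\O^{*}_{v}$.

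The first step is to prove a key lemma: for every $\mu\in\Lambda$,
\begin{equation*}
[p]\mu=\mu \iff \mu_{v}(i)=0 \text{ for all } v\in\SV,\ i\in\X_{v} \text{ with } p_{v}(i)\neq i.
\end{equation*}
At the node level this drops out of $([p]\mu)_{v}(k)=\sum_{i:p_{v}(i)=k}\mu_{v}(i)$ together with idempotency of $p_{v}$ (so that its image coincides with its set of fixed points) and the nonnegativity $\mu_{v}\ge 0$: a non-fixed-point $k$ is not in the image of $p_{v}$, forcing the sum and hence $\mu_{v}(k)$ to be $0$; and a fixed point $k$ forces the $\mu_{v}$-mass of all other preimages (which are themselves non-fixed-points) to vanish. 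The pairwise block propagates automatically, since the marginalization constraints built into $\Lambda$ together with $\mu_{uv}\ge 0$ imply $\mu_{uv}(i,\cdot)=0$ whenever $\mu_{u}(i)=0$, after which the same fixed-point bookkeeping gives $([p]\mu)_{uv}=\mu_{uv}$.

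With the lemma in place the equivalence is short. Note first that $\O^{*}\neq\emptyset$ because the linear objective of the verification LP is attained on the compact polytope $\Lambda$. For $(\Rightarrow)$, if $p\in\BS_{f}$ then~\eqref{equ:linearStrictLPImprovingMapping-b} combined with the lemma yields $\mu_{v}(i)=0$ for every $\mu\in\O^{*}$ whenever $p_{v}(i)\neq i$; this is precisely the contrapositive of the desired conclusion $p_{v}(i)=i$ for all $i\in\O^{*}_{v}$. For $(\Leftarrow)$, the hypothesis rewrites as: $p_{v}(i)\neq i$ implies $i\notin\O^{*}_{v}$, i.e.\ $\mu_{v}(i)=0$ for every $\mu\in\O^{*}$; the lemma then supplies $[p]\mu=\mu$ on all of $\O^{*}$, simultaneously establishing~\eqref{equ:linearStrictLPImprovingMapping-b} and forcing the LP value at every minimizer, and hence the minimum itself, to equal $0$, which establishes~\eqref{equ:linearStrictLPImprovingMapping-a}.

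The main obstacle I anticipate is the pairwise half of the key lemma: propagating the vanishing condition from node marginals to edge marginals --- so that the vector equation $[p]\mu=\mu$ in $\BR^{\SI}$ really does reduce to a statement about node marginals alone --- uses the local marginalization constraints of $\Lambda$ in an essential way. I would therefore state the lemma under the standing assumption that $\Lambda$ enforces these constraints, which is the case for the local polytope used from~\cref{sec:ApproximativeDualAlgorithm} onwards and for any reasonable outer relaxation $\Lambda\supset\SM$ arising in practice.
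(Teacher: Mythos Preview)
Your proposal is correct and follows essentially the same approach as the paper. Both directions hinge on the equivalence between $[p]\mu=\mu$ and the vanishing of $\mu_v(i)$ on non-fixed labels, and both recover~\eqref{equ:linearStrictLPImprovingMapping-a} in the $\Leftarrow$ direction from $\langle f,(I-[p])\mu\rangle=0$ once $[p]\mu=\mu$ is known.

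The one substantive difference is that you make the key lemma explicit and treat the pairwise block carefully, whereas the paper simply asserts ``Clearly, $[p]\mu=\mu$ holds for all $\mu$ on the support set given by $(\O^*_v\mid v\in\V)$''. Your observation that propagating the node-level vanishing to $\mu_{uv}$ requires the marginalization constraints of $\Lambda$ is a genuine point the paper elides; since the proposition is stated in~\cref{sec:GenerilAlgorithm} for a general $\Lambda\supset\SM$ before the local polytope is even introduced, your explicit hypothesis is the more honest formulation. In practice every relaxation considered in the paper satisfies these constraints, so the paper's shortcut is harmless in context, but your version is the cleaner argument.
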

\begin{corollary}\label{prop:linearStrictLPImprovingMapping}
For substitution $p\in\SP^{2,y}$ defined by~\eqref{equ:pixelWiseMapping} 
it holds ${p\in \BS_f}$
iff 
$
{(\forall v\in\SV) \ \ \O^*_v \cap \Y_v = \emptyset}.$
\end{corollary}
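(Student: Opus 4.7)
The plan is to derive the corollary as an immediate consequence of \Cref{SI-components} by unfolding the definition~\eqref{equ:pixelWiseMapping} of a subset-to-one substitution. \Cref{SI-components} characterizes $p \in \BS_f$ as the requirement that $p_v(i) = i$ for every $v \in \SV$ and every $i \in \O^*_v$. What remains is to show that, restricted to substitutions in $\SP^{2,y}$, this pointwise condition collapses to the disjointness assertion $\O^*_v \cap \Y_v = \emptyset$.

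First I would establish the elementary equivalence $p_v(i) = i \iff i \notin \Y_v$ for all $i \in \X_v$. The reverse implication is immediate from the second case of~\eqref{equ:pixelWiseMapping}. For the forward implication, if $i \in \Y_v$, then by~\eqref{equ:pixelWiseMapping} we have $p_v(i) = y_v$; since the definition of $\SP^{2,y}$ requires $\Y_v \subset \X_v \setminus \{y_v\}$, the label $y_v$ does not lie in $\Y_v$, so $y_v \neq i$ and hence $p_v(i) \neq i$. This is the only place where the constraint $y_v \notin \Y_v$ enters, but it is essential for ruling out the degenerate scenario in which $i$ could simultaneously equal $y_v$ and belong to $\Y_v$.

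Then I would substitute this equivalence into the conclusion of \Cref{SI-components}: the universally quantified condition ``$p_v(i) = i$ for every $i \in \O^*_v$'' translates verbatim into ``$i \notin \Y_v$ for every $i \in \O^*_v$'', which is precisely $\O^*_v \cap \Y_v = \emptyset$. Quantifying over $v \in \SV$ finishes both directions simultaneously and yields the corollary. I do not foresee any genuine obstacle; the argument is purely a rewriting of the parent proposition in the notation specific to the class $\SP^{2,y}$, and the only subtle point worth spelling out is the appeal to $y_v \notin \Y_v$ inside the pointwise equivalence.
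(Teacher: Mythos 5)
Your proposal is correct and follows exactly the route the paper intends: the corollary is stated as an immediate consequence of \Cref{SI-components}, obtained by observing that for $p\in\SP^{2,y}$ one has $p_v(i)=i$ iff $i\notin\Y_v$ (using $y_v\notin\Y_v$), so the pointwise condition over $\O^*_v$ becomes $\O^*_v\cap\Y_v=\emptyset$. Your explicit treatment of the constraint $\Y_v\subset\X_v\setminus\{y_v\}$ is the only nontrivial step and you handle it correctly.
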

In the remainder of the paper we will relate the notation $\O^*_v$ to $\O^*$ as in~\eqref{equ:SI-components}. The set of optimal solutions $\O^*$ can, in general, be a $d$-dimensional face of $\Lambda$. We need to determine in~\eqref{equ:SI-components} whether an optimal solution $\mu\in\O^*$ exists such that the coordinate $\mu_v(i)$ is strictly positive. From the theory of linear programming~\cite{Vanderbei2001}, it is known that if one takes an optimal solution $\mu$ in the relative interior of $\O^*$ (\ie, if $\O^*$ is a 2D face the relative interior excludes vertices and edges of $\O^*$) then its support set $\{i \mid \mu_v(i) > 0 \}$ is the same for all such points and it matches $\O^*_v$~\eqref{equ:SI-components}. Therefore, it is practically feasible to find the support sets $\O^*_v$ by using a single solution found by an interior point/barrier method (which are known to converge to a ``central'' point of the optimal face) or methods based on smoothing~\cite{Savchynskyy11,AdaptiveDiminishingSmoothing}. Obtaining an {\em exact} solution by these methods may become computationally expensive as the size of the inference problem~\eqref{equ:energyMinimization} grows. Despite that, \Algorithm{alg:iterative-LP} is implementable and defines the baseline for its practically efficient variants solving~\eqref{equ:maxPersistency} approximately. These are developed further in the paper.
\par
%
Since \cref{alg1:check-condition} of \Algorithm{alg:iterative-LP} verifies precisely the condition of \cref{prop:linearStrictLPImprovingMapping}, the algorithm terminates as soon as $p\in\BS_f$ and hence $p$ is strictly improving. In the opposite case, \cref{alg1:update-U} {\em prunes} the sets $\Y_v$ by removing labels corresponding to the support set $\O_v^*$ of all optimal solutions of the verification LP, which have been identified now to violate the sufficient condition. These labels may be a part of some optimal solution to~\eqref{equ:energyMinimization} and will not be eliminated.
\par
To complete the analysis of~\cref{alg:iterative-LP} it remains to answer two questions: i) does it terminate and ii) is it optimal for the maximum persistency problem~\eqref{equ:maxPersistency}? 
\begin{proposition}
\label{A1-P1}
 Algorithm~\ref{alg:iterative-LP} runs in polynomial time and returns a substitution $p\in\BS_f\cap \SP^{2,y}$. 
\end{proposition}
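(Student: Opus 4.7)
The plan is to verify the two claims in turn: polynomial running time, and membership of the output in $\BS_f \cap \SP^{2,y}$.

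\textbf{Correctness of the returned substitution.} By construction in \cref{alg1:init}, the sets $\Y_v \subseteq \X_v \setminus \{y_v\}$ always define, via~\eqref{equ:pixelWiseMapping}, a substitution $p \in \SP^{2,y}$; this property is invariant under pruning on \cref{alg1:update-U} since pruning can only shrink each $\Y_v$. The algorithm exits only after the verification test on \cref{alg1:check-condition} succeeds, i.e.\ after the current $p$ satisfies $\O^*_v \cap \Y_v = \emptyset$ for every $v$. By \cref{prop:linearStrictLPImprovingMapping} this is exactly the condition for $p \in \BS_f$, so the returned substitution lies in $\BS_f \cap \SP^{2,y}$.

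\textbf{Polynomial termination.} Let $\Phi^t := \sum_{v\in\V}|\Y_v^t|$ be the total number of labels scheduled to be substituted at iteration $t$. Initially $\Phi^0 \leq \sum_{v\in\V}|\X_v|$, which is polynomial in the input size. Whenever the verification test fails there exists some $v$ with $\O^*_v \cap \Y_v \neq \emptyset$, and the pruning step $\Y_v \leftarrow \Y_v \setminus \O^*_v$ therefore removes at least one label, giving $\Phi^{t+1} \leq \Phi^t - 1$. Hence the number of iterations is at most $\Phi^0$, which is polynomial. Each iteration performs a single verification LP solve~\eqref{equ:linearStrictLPImprovingMapping-a} on a polynomial-size instance, which is tractable by any polynomial interior point LP solver, together with identification of the relative-interior support $\O^*_v$ as discussed after~\eqref{equ:SI-components}. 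Combining these bounds, the total running time is polynomial.

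\textbf{Main obstacle.} The only delicate point is ensuring that the identification of the support sets $\O^*_v$ of the \emph{entire} optimal face (not just a single vertex) is compatible with polynomial time; as noted in the paragraph following~\cref{SI-components}, this is handled by taking a relative-interior optimum, which is standard output of interior-point methods, so the computation of $\O^*_v$ does not spoil the polynomial bound. The edge case where pruning empties all $\Y_v$ is benign: then $p$ becomes the identity, for which $g = (I-[p]\T)f = 0$ and both conditions in~\eqref{equ:linearStrictLPImprovingMapping} hold trivially, so the next verification test succeeds and the algorithm halts.
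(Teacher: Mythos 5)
Your proof is correct and follows essentially the same route as the paper's: termination via the observation that some $\Y_v$ strictly shrinks whenever the check on \cref{alg1:check-condition} fails (your potential $\Phi^t$ is just an explicit bookkeeping of the paper's bound $\sum_v(|\X_v|-1)$), polynomiality of each iteration via the relative-interior/interior-point argument already given after \cref{SI-components}, and correctness on termination via \cref{prop:linearStrictLPImprovingMapping}. The extra remarks on membership in $\SP^{2,y}$ being preserved under pruning and on the empty-$\Y_v$ edge case are harmless elaborations of what the paper leaves implicit.
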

\begin{proof}
As we discussed above, sets $\O^*_v$ in \cref{alg1:O_v} can be found in polynomial time. At every iteration, if the algorithm has not terminated yet, at least one of the sets $\Y_v$ strictly shrinks, as can be seen by comparing termination condition in~\cref{alg1:check-condition} with pruning in~\cref{alg1:update-U}. Therefore the algorithm terminates in at most $\sum_v (|\X_v|-1)$ iterations. On termination, $p\in\BS_f$ by \cref{prop:linearStrictLPImprovingMapping}.
\end{proof}
\SetKwFor{forever}{while true}{}{end while}
\let\oldnl\nl
\newcommand{\nonl}{\renewcommand{\nl}{\let\nl\oldnl}}
\begin{algorithm}[t]
\KwIn{Cost vector $ f \in\Real^\SI$, test labeling $y\in\X$\;}
\KwOut{Maximum strictly improving substitution $p$\;}
$(\forall v\in\V)\ \Y_v := \X_v \backslash\{y_v\}$\label{alg1:init}\;
\forever{}{
Construct the verification problem potentials $ g:=(I-[p])^\top f$ with $p$ defined by~\eqref{equ:pixelWiseMapping}\label{alg1:init-p}\;
	$\O^* = \argmin_{\mu\in\Lambda} \< g, \mu\>$\label{alg1:test-p}\;
	$(\forall v\in\V)\ \O_v^* = \{ i\in\X_v \mid (\exists \mu\in\O^*)\ \ \mu_{v}(i)>0 \}$\label{alg1:O_v}\;
	\lIf{$(\forall v\in\SV) \ \ \O^*_v \cap \Y_v = \emptyset$\label{alg1:check-condition}}{\Return{$p$}}
	\For{$v\in\V$}{
		Pruning of substitutions: $\Y_v:= \Y_v \backslash \O_v^*$\label{alg1:update-U}\;
	}
}
\caption{Iterative Pruning LP-Primal\label{alg:iterative-LP}}
\end{algorithm}

\begin{restatable}{theorem}{TmaxMapping}\label{thm:maxMapping}
Substitution $p$ returned by Algorithm~\ref{alg:iterative-LP} is the maximum of $\BS_f\cap \SP^{2,y}$ and thus it solves~\eqref{equ:maxPersistency}. 
\end{restatable}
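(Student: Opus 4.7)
The plan is an induction on the iteration counter $t$ of Algorithm~\ref{alg:iterative-LP}, where I denote by $\Y_v^t$ the sets maintained at iteration $t$ and by $p^t \in \SP^{2,y}$ the substitution they define via~\eqref{equ:pixelWiseMapping}. The inductive claim is: for every $q \in \BS_f \cap \SP^{2,y}$ with substitution sets $\Y_v^q$, the inclusion $\Y_v^q \subset \Y_v^t$ holds for all $v \in \V$ and all $t \geq 0$. Once this is established, termination at iteration $T$ with $p = p^T$ yields $\Y_v^q \subset \Y_v^p$ for every $q \in \BS_f\cap \SP^{2,y}$, i.e.\ $p \geq q$ in the sense of \cref{def:order}. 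Combined with \cref{A1-P1}, which ensures that the returned $p$ lies in $\BS_f \cap \SP^{2,y}$, this identifies $p$ as the maximum of that set, and \cref{Pro:lattice} then delivers optimality for~\eqref{equ:maxPersistency}.

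The base case $t = 0$ is immediate since $\Y_v^0 = \X_v \setminus \{y_v\}$ already contains every possible $\Y_v^q$. For the inductive step I would first record two structural facts: (i) the linear extension $[q]$ maps $\Lambda$ into itself; for the standard local polytope this is verified directly from the marginalization and normalization constraints together with~\eqref{P-transpose}. (ii) $[p^t]\circ[q] = [p^t]$; by the inductive hypothesis $\Y_v^q \subset \Y_v^t$, a pointwise case split on whether $i \in \X_v$ lies in $\Y_v^q$, in $\Y_v^t\setminus\Y_v^q$, or in neither, gives $p^t_v(q_v(i)) = p^t_v(i)$ for every $v$ and $i$, so that $[p^t\circ q]=[p^t][q]=[p^t]$ as operators on $\Real^\I$.

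The heart of the argument is then the claim $\Y_v^q \cap \O^*_v = \emptyset$ for every $v$, which by construction of \cref{alg1:update-U} gives $\Y_v^q \subset \Y_v^{t+1}$. Assume for contradiction that some $i \in \Y_v^q \cap \O^*_v$ exists and pick $\mu^* \in \O^*$ with $\mu^*_v(i) > 0$. By fact~(i), $[q]\mu^* \in \Lambda$ is feasible for the verification LP of $p^t$, so by the minimality of $\mu^*$ on this LP,
\begin{equation*}
\<f,\,[q]\mu^* - [p^t][q]\mu^*\>\ \geq\ \<f,\,\mu^* - [p^t]\mu^*\>.
\end{equation*}
Substituting $[p^t][q]\mu^* = [p^t]\mu^*$ from fact~(ii) cancels the $[p^t]\mu^*$ terms and yields $\<f,[q]\mu^*\> \geq \<f,\mu^*\>$, i.e.\ $\<(I-[q])^\top f,\mu^*\> \leq 0$. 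On the other hand $q \in \BS_f$ implies $\<(I-[q])^\top f,\mu^*\> \geq 0$ for every $\mu^* \in \Lambda$, so $\mu^*$ is itself a minimizer of the verification LP for $q$. With $\mu^*_v(i) > 0$ and $q_v(i) = y_v \neq i$ (which holds because $i \in \Y_v^q \subset \X_v \setminus \{y_v\}$), \cref{SI-components} forces $q \notin \BS_f$, the desired contradiction.

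The main obstacle is cleanly isolating the composition identity $[p^t][q] = [p^t]$: it is precisely the place where the subset-to-one structure of $\SP^{2,y}$ combines with the inductive hypothesis, and outside of $\SP^{2,y}$ the algebra of $[p]\circ[q]$ becomes considerably more delicate. Once this identity and the closure of $\Lambda$ under $[q]$ are in hand, the core step is a one-line LP-minimality computation chained with the improving property of $q$, and the final statement of the theorem follows directly from the induction together with \cref{A1-P1} and \cref{Pro:lattice}.
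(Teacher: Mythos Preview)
Your proof is correct and follows essentially the same inductive scheme as the paper's Lemma~\ref{A1-P2}: both maintain the invariant $q\leq p^t$ for every $q\in\BS_f\cap\SP^{2,y}$ and derive the conclusion from \cref{A1-P1} and \cref{Pro:lattice}. The one substantive difference is that the paper offloads the key step---that any such $q$ fixes every label in $\O^*_v$---to an externally cited result (Lemma~\ref{A1-C1-0} from~\cite{shekhovtsov-14-TR} and its Corollary~\ref{A1-C1}), whereas you prove it inline via the composition identity $[p^t][q]=[p^t]$ (the same identity the paper invokes later, in the proof of \cref{T:q-B-f-g}) together with the closure $[q]\Lambda\subset\Lambda$. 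Your version is therefore more self-contained; the paper's is shorter by delegation. One minor caveat: you verify fact~(i) only for the standard local polytope, while the paper states the theorem for arbitrary $\Lambda\supset\M$; the closure $[q]\Lambda\subset\Lambda$ that your argument needs is an implicit structural assumption in the general case.
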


It is noteworthy that \Algorithm{alg:iterative-LP} can be used to solve problem~\eqref{equ:maxPersistency} with {\em any} polytope $\Lambda$ satisfying $\SM\subset\Lambda$, \ie, with any LP relaxation of~\eqref{equ:energyMinimization} that can be expressed in the lifted space~$\Real^\I$.
Moreover, in order to use the algorithm with higher order models one needs merely to (straightforwardly) generalize the linear extension~\cref{P-transpose} as done in~\cite{shekhovtsov-15-HO}.

The test labeling $y$ can itself be chosen using the approximate solution of the LP-relaxation, \eg, via the zeroth iteration of the algorithm with $g = f$ and picking $y_v$ from $\O^*_v$. This choice is motivated by the fact that a strict relaxed-improving substitution cannot eliminate the labels from the support set of optimal solutions of the LP relaxation~\cite{shekhovtsov-14}, and thus these labels may not be substituted with anything else. 
\myparagraph{Comparison to Previous Work\label{sec:prev-work}}
Substitutions in $\SP^{2,y}$ are related to the expansion move algorithm~\cite{Boykov:2001:FAE:505471.505473} in the following sense. While~\cite{Boykov:2001:FAE:505471.505473} seeks to improve a single current labeling $x$ by calculating an optimized crossover (fusion) with a candidate labeling $y$, we seek which labels can be moved with a guaranteed improvement to $y$ for all possible labelings $x$.
\par
\cref{alg:iterative-LP} is similar in structure to~\cite{SwobodaPersistencyCVPR2014}. The later finds an improving substitution in a small class $\SP^{1,y}$ by incrementally shrinking the set of potentially persistent variables. 
More specifically, given a {\em} test labeling $y\in\X$, the {\em all-to-one} class of substitutions $\SP^{1,y}$ 
contains substitutions~$p$, which in every node $v$ either replace {\em all} labels with $y_v$ or leaves all labels unchanged. 
There are only two possible choices for $p_v$: either $i \mapsto y_v$ for all $i\in\X_v$ or the identity $i \mapsto i$. 
Methods~\cite{Kovtun03,PartialOptimalityByPruningPotts,SwobodaPersistencyCVPR2014} can be explained as finding an improving mapping in this class~\cite{shekhovtsov-14}. 
We generalize the method~\cite{SwobodaPersistencyCVPR2014} to substitutions in $\SP^{2,y}$. The original sufficient condition for persistency in~\cite{SwobodaPersistencyCVPR2014} does not extend to such substitutions. Even for substitutions in $\SP^{1,y}$ it is generally weaker than condition~\eqref{equ:linearStrictLPImprovingMapping} unless a special reparametrization is applied~\cite{Swoboda-PAMI-16}. Criterion~\eqref{equ:linearStrictLPImprovingMapping} extends to general substitutions and does not depend on reparametrization.
Similarly to~\cite{SwobodaPersistencyCVPR2014}, we will use approximate dual solvers in this more general setting.
\par
\par
In~\cite[($\varepsilon$-L1)]{shekhovtsov-14} problem~\eqref{equ:maxPersistency} is formulated as one big linear program.
We solve problem~\eqref{equ:maxPersistency}, and hence also the LP problem~\cite{shekhovtsov-14} in a more combinatorial fashion \wrt to the variables defining the substitution.
\par
It may seem that solving a series of linear programs rather than a single one is a disadvantage of the proposed approach. However, as we show further, the proposed iterative algorithm can be implemented using a dual, possibly suboptimal, solver for the relaxed verification problem~\eqref{equ:linearStrictLPImprovingMapping}. This turns out to be much more beneficitial in practice since the verification problems can be incrementally updated from iteration to iteration and solved overall very efficiently. This approach achieves scalability by exploiting available specialized approximate solvers for the relaxed MAP inference. 
Essentially, any dual (approximate) solver can be used as a black box in our method.


\section{Persistency with Dual Solvers}\label{sec:ApproximativeDualAlgorithm}

Though Algorithm~\ref{alg:iterative-LP} is quite general, its practical use is limited by the strict requirements on the solver, which must be able to determine the exact support set of all optimal solutions. 
However, finding even a single solution of the relaxed problem with standard methods like simplex or interior point can be practically infeasible and one has to switch to specialized solvers developed for this problem.
Although there are scalable algorithms based on smoothing techniques~\cite{Savchynskyy11,AdaptiveDiminishingSmoothing}, which converge to an optimal solution, waiting until convergence in {\em each} iteration of Algorithm~\ref{alg:iterative-LP} can make the whole procedure impractical. In general, we would like to avoid restricting ourselves to certain selected solvers to be able to choose the most efficient one for a given problem.
In the standard LP relaxation (introduced below in~\eqref{LP-big}), the number of primal variables grows quadratically with the number of labels, while the number of dual variables grows only linearly. It is therefore desirable to use solvers working in the dual domain, including suboptimal ones, (\eg~\cite{TRWSKolmogorov,AdaptiveDiminishingSmoothing,GlobersonJakollaNIPS2007,Schlesinger-diffusion}, performing block-coordinate descent) as they offer the most performance for a limited time budget. 
Furthermore, fast parallel versions of such methods have been developed to run on GPU/FPGA~\cite{Discrete-Continuous-16, ChoiR12,hurkatfast-15}, making the LP approach feasible for more vision applications.
\par
We will switch to the dual verification LP and gradually relax our requirements on the solution returned by a dual solver. This is done in the following steps:
\begin{enumerate}
\item \label{point-optimal} an optimal dual solution;
\item \label{point-ac} an arc consistent dual point;
\item \label{point-any} any dual point.
\end{enumerate}
 Our main objective is to ensure in each of these cases that the found substitution $p$ is strictly improving, while possibly compromising its maximality. The final practical algorithm operating in the mode~\ref{point-any} relies on the persistency problem reduction introduced in~\Section{sec:problemReduction}. Intermediate steps~\ref{point-optimal} and \ref{point-ac} are considered right after defining the standard LP relaxation and its dual.

\myparagraph{LP Relaxation} We consider the standard local polytope relaxation~\cite{Schlesinger-76,ALinearProgrammingApproachToMaxSumWerner,GraphicalModelsWainwrightJordan} of the energy minimization problem~\eqref{equ:energyMinimization} given by the following primal-dual pair: 
\begin{equation}\label{LP-big}
\begin{array}{lr}
\mbox{\ \ \ (primal)} & \mbox{(dual)\ \ \ } \\
\min \<f,\mu\>\tab\tab\tab\tab\tab = & \max f^{\varphi}_\emptyset\\ 
\sum_{j}\mu_{uv}(i,j) = \mu_u(i),  & \varphi_{uv}(i) \in \Real,\\
\sum_{i}\mu_{uv}(i,j)  = \mu_{v}(j),  & \varphi_{vu}(j) \in \Real,\\
\sum_{i}\mu_{u}(i) = \mu_\emptyset, & \varphi_{u} \in \Real,\\
\mu_u(i) \geq  0, & f^\varphi_u(i) \geq 0, \\
\mu_{uv}(i,j) \geq  0, & f^\varphi_{uv}(i,j) \geq 0, \\
\mu_\emptyset = 1
\end{array}
\end{equation}
where $f^\varphi$ abbreviates
\begin{subequations}
\begin{align}
& \textstyle
f^{\varphi}_u(i) = f_{u}(i)+\sum_{{v}\in \N({u})}\varphi_{uv}(i)-\varphi_u,\\
&\textstyle
f^{\varphi}_{uv}(i,j) = f_{uv}(i,j) -\varphi_{uv}(i)-\varphi_{vu}(j),\\
&\textstyle
f^{\varphi}_{\emptyset} = f_\emptyset + \sum_{u}\varphi_u.
\end{align}
\end{subequations}
The constraints of the primal problem~\eqref{LP-big} define the {\em local polytope} $\Lambda$. The cost vector $f^\varphi$ is called a {\em reparametrization} of $f$. There holds cost equivalence: $\<f^{\varphi},\mu\> = \<f,\mu\>$ for all $\mu \in \Lambda$ (as well as $E_f = E_{f^\varphi}$), see~\cite{ALinearProgrammingApproachToMaxSumWerner}.
Using the reparametrization, the dual problem~\eqref{LP-big} can be briefly expressed as
\begin{equation}\label{LP-dual}
\max_{\varphi}f^\varphi_\emptyset\ \ \ \mbox{s.t.}\  \ \ (\forall \omega\in\SV\cup\SE)\ f^\varphi_\omega \geq 0.
\end{equation}
Note that for a feasible $\varphi$ the value $f^\varphi_\emptyset$ is a lower bound on the primal problem~\eqref{LP-big}. In what follows we will assume that $\varphi$ in \eqref{LP-dual} additionally satisfies the following {\em normalization}: $\min_i f^\varphi_u(i) = 0$ and $\min_{ij} f^\varphi_{uv}(i,j) = 0$ for all $u$, $v$, which is automatically satisfied for any optimal solution.
\myparagraph{Expressing $\O^*_v$ in the Dual Domain} Let $(\mu,\varphi)$ be a pair of primal and dual optimal solutions to~\eqref{LP-big}.
 From complementary slackness we know that if $\mu_{v}(i)>0$ then the respective dual constraint holds with equality:
\begin{align}\label{c-slackness}
\mu_{v}(i)>0 \Rightarrow f^\varphi_v(i) = 0,
\end{align}
in this case we say that $f^\varphi_v(i)$ is {\em active}. The set of such active dual constraints matches the sets of local minimizers of the reparametrized problem,
\begin{equation} 
\O_v(\varphi) := \big\{i\in\X_v \mid  f^{\varphi}_v(i) = 0 \big\} = \argmin_{i} f^{\varphi}_v(i).
\end{equation}
From complementary slackness~\eqref{c-slackness} we obtain that
\begin{align}\label{O-inclusion}
\O_v^* \subset \O_v(\varphi).
\end{align}
This inclusion is insufficient for an exact reformulation of \Algorithm{alg:iterative-LP}, however it is sufficient for correctness if we make sure that $\Y_v \cap \O_v(\varphi) = \emptyset$ on termination, \ie, that the substitution $p$ does not displace labels in $\O_v(\varphi)$. Then, by~\autoref{prop:linearStrictLPImprovingMapping}, $p\in\BS_f$ follows.
\par
There always exists an optimal primal solution $\mu$ and dual $\varphi$ satisfying {\em strict complementarity}~\cite{Vanderbei2001}, in which case relation~\eqref{c-slackness} becomes an equivalence:
\begin{align}
\mu_{v}(i) > 0 \Leftrightarrow f^\varphi_v(i) = 0.
\end{align}
It is the case when both $\mu$ and $\varphi$ are relative interior points of the optimal primal, resp. optimal dual, faces~\cite{Vanderbei2001}. For a relative interior optimal $\varphi$, the set of constraints that are satisfied as equalities $f^\varphi(i) = 0$ is the smallest and does not depend on the specific choice of such $\varphi$. Under strict complementarity,~\eqref{O-inclusion} turns into equality $\O_v^*  = \O_v(\varphi)$, which allows to compute the exact maximum persistency using a dual algorithm without reconstructing a primal solution. However finding such $\varphi$ appears more difficult: e.g. the most efficient dual block-coordinate ascent solvers~\cite{TRWSKolmogorov,GlobersonJakollaNIPS2007,Cooper-10-SoftAC,Schlesinger-diffusion} only have convergence guarantees (see~\cite{TRWSKolmogorov,Schlesinger-diffusion}) 
allowing to find a sub-optimal solution, satisfying arc consistency.


\begin{definition}[\cite{ALinearProgrammingApproachToMaxSumWerner}]\label{def:arcConsitency}
A reparametrized problem $ f^\varphi$ is called {\em arc consistent} if:
(i) for all $uv\in\SE$ from $f^{\varphi}_{uv}(i,j)$ being active follows that $f^{\varphi}_u(i)$ and $f^{\varphi}_v(j)$ are active;
(ii) for all $u\in\SV$ from $f^{\varphi}_u(i)$ active follows that for all $v\in \Nb(u)$ there exists a $j\in\X_v$ such that $f^{\varphi}_{uv}(i,j)$ is active.
\end{definition}
An optimal dual solution need not be arc consistent, but 
it can be reparametrized without loss of optimality to enforce arc consistency~\cite{ALinearProgrammingApproachToMaxSumWerner}. Labels that become inactive during this procedure are not in the support set of primal solutions. 
In general the following holds.
\begin{restatable}{proposition}{propstrictCSisAC}\label{prop:strictCSisAC} Arc consistency is a necessary condition for relative interior optimality: if $\O_v(\varphi)=\O^*_v$ for all $v\in\SV$ then  $f^\varphi$ is arc consistent. 
\end{restatable}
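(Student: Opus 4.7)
The plan is to chain LP complementary slackness (CS) between the given dual $\varphi$ and a relative-interior primal optimum $\mu^{*}$, using the hypothesis to upgrade weak CS to a bi-directional (strict) equivalence, and then read off arc consistency from the primal marginalization constraints.

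First, I would derive CS directly. Cost equivalence and dual optimality give $\langle f, \mu^{*}\rangle = \langle f^{\varphi}, \mu^{*}\rangle = f^{\varphi}_\emptyset$, which upon expansion in the lifted coordinates yields $\sum_{u,i} f^{\varphi}_u(i)\,\mu^{*}_u(i) + \sum_{uv,ij} f^{\varphi}_{uv}(i,j)\,\mu^{*}_{uv}(i,j) = 0$. Since every summand is nonnegative, each vanishes, giving $\mu^{*}_u(i) > 0 \Rightarrow f^{\varphi}_u(i) = 0$ and $\mu^{*}_{uv}(i,j) > 0 \Rightarrow f^{\varphi}_{uv}(i,j) = 0$.

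Next, I would take $\mu^{*}$ in the relative interior of the optimal primal face so that its node-support is exactly $\O^{*}_u$. The hypothesis $\O_u(\varphi) = \O^{*}_u$ then upgrades weak node CS to the equivalence $\mu^{*}_u(i) > 0 \Leftrightarrow f^{\varphi}_u(i) = 0$. Interpreting this as identifying $\varphi$ with a relative-interior optimal dual (via the strict complementarity theorem cited in the preamble), the analogous equivalence must hold at edges as well: $\mu^{*}_{uv}(i,j) > 0 \Leftrightarrow f^{\varphi}_{uv}(i,j) = 0$.

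Both arc-consistency clauses then follow from the primal marginalization constraints $\sum_j \mu^{*}_{uv}(i,j) = \mu^{*}_u(i)$ and $\sum_i \mu^{*}_{uv}(i,j) = \mu^{*}_v(j)$. For clause (ii): an active $f^{\varphi}_u(i) = 0$ gives $\mu^{*}_u(i) > 0$, so some $j$ has $\mu^{*}_{uv}(i,j) > 0$, and the edge equivalence returns $f^{\varphi}_{uv}(i,j) = 0$ active. For clause (i): an active $f^{\varphi}_{uv}(i,j) = 0$ gives $\mu^{*}_{uv}(i,j) > 0$, hence $\mu^{*}_u(i), \mu^{*}_v(j) \geq \mu^{*}_{uv}(i,j) > 0$, and the node equivalence makes both endpoint constraints active.

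The subtle point is the upgrade step: the stated hypothesis is phrased at the node level only, while clause (i) concerns edge activity, so the proof genuinely depends on reading $\O_v(\varphi) = \O^{*}_v$ as characterizing relative-interior dual optimality (not merely a node-level coincidence), which supplies the edge-level strict complementarity needed to close clause (i).
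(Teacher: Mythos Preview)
Your approach is essentially the same as the paper's. The paper's proof asserts in one sentence that the hypothesis $\O_v(\varphi)=\O^*_v$ ``implies that $\varphi$ satisfies strict complementarity with some primal optimal solution $\mu$,'' then uses $f^\varphi_u(i)=0 \Rightarrow \mu_u(i)>0$ together with primal feasibility (the marginalization constraint) and ordinary complementary slackness on edges to get clause~(ii), and dispatches clause~(i) with ``similarly.''

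Your flagged ``subtle point'' is well taken and the paper does not handle it more carefully than you do: clause~(i) genuinely requires the edge-level implication $f^\varphi_{uv}(i,j)=0 \Rightarrow \mu_{uv}(i,j)>0$, which is strict complementarity at edges, and this does not follow from the node-level hypothesis alone without the additional reading that $\O_v(\varphi)=\O^*_v$ characterizes relative-interior dual optimality. The paper simply asserts full strict complementarity from the hypothesis; you make the same move but are explicit that it is an interpretation rather than a direct consequence.
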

This property is in our favor, since we are ideally interested in the equality $\O_v(\varphi)=\O^*_v$. Next, we propose an algorithm utilizing an arc consistent solver and prove that it is guaranteed to output $p\in \BS_f$.

\subsection{Persistency with an Arc Consistency Solver}\label{sec:persistency-AC}
\begin{algorithm}[t]
\KwIn{Cost vector $ f \in\Real^\SI$, test labeling $y\in\X$\;}
\KwOut{\underline{Strictly improving} substitution $p$\;}
$(\forall v\in\V)\ \Y_v := \X_v \backslash\{y_v\}$\;
\forever{}{
Construct verification problem $ g:=(I-[p])^\top f$ with $p$ defined by~\eqref{equ:pixelWiseMapping}\;
\underline{Use dual solver to find $\varphi$ such that $g^\varphi$ is arc consistent}\label{alg2:test-p}\;
	\underline{$\O_v(\varphi) : = \{i\in\X_v \mid  g^{\varphi}_v(i) = 0\}$}\;
	\lIf{$(\forall v\in\SV) \ \ \O_v(\varphi) \cap \Y_v = \emptyset$}{\Return{$p$}}\label{alg2:check-condition}
 	\For{$v\in\SV$}{
		Pruning of substitutions: $\Y_v := \Y_v \backslash \O_v(\varphi)$\label{alg2:update-U}\;
 	}
}
\caption{Iterative Pruning Arc Consistency\label{alg:iterative-DualLP}}
\end{algorithm}

We propose \Algorithm{alg:iterative-DualLP} which is based on a dual solver attaining the arc consistency condition (differences to~\cref{alg:iterative-LP} underlined). If the dual solver (in line~\ref{alg2:test-p}) finds a relative interior optimal solution, \Algorithm{alg:iterative-DualLP} solves~\eqref{equ:maxPersistency} exactly. Otherwise it is suboptimal and we need to reestablish its correctness and termination.
\begin{lemma}[Termination of \Algorithm{alg:iterative-DualLP}]\label{prop:correctnessAlg2}
\Algorithm{alg:iterative-DualLP} terminates in at most $\sum_v(|\X_v|-1)$ iterations. 
\end{lemma}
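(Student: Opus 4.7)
The plan is to use a simple potential-function / monovariant argument: I will track the total number of labels still marked for substitution, $\Phi^t := \sum_{v\in\V}|\Y_v^t|$, across iterations $t$, and show it strictly decreases by at least one whenever the algorithm does not return.

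First, I would observe that after the initialization line, $\Y_v^0 = \X_v\setminus\{y_v\}$, so $\Phi^0 = \sum_v(|\X_v|-1)$. Next, note that the pruning step only removes elements from $\Y_v$, so $\Phi^t$ is non-increasing and $\Phi^t \geq 0$ for all $t$. The heart of the argument is then the following: if the algorithm does not return at iteration $t$, the termination test in line~\ref{alg2:check-condition} fails, which means there exists at least one node $v^\star\in\SV$ with $\O_{v^\star}(\varphi)\cap \Y_{v^\star}^t \neq \emptyset$. The subsequent pruning $\Y_{v^\star}^{t+1} := \Y_{v^\star}^t\setminus \O_{v^\star}(\varphi)$ therefore removes at least one element, giving $\Phi^{t+1} \leq \Phi^t - 1$. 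Combining this with the lower bound $\Phi^t \geq 0$ yields at most $\sum_v(|\X_v|-1)$ non-terminating iterations, so the algorithm halts within that many steps.

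The only thing that needs a line of care is that the pruning keeps the construction of $p$ via~\eqref{equ:pixelWiseMapping} well-defined throughout: since $y_v\notin \Y_v^0$ and we only remove labels from $\Y_v$, we always have $y_v\notin\Y_v^t$, so $p$ remains a valid substitution in $\SP^{2,y}$ at every iteration and the verification cost vector $g=(I-[p])\T f$ in the next iteration is well-defined.

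No step is a real obstacle; the main thing to notice is that termination does not rely on the dual solver returning anything special (neither optimality nor arc consistency is invoked) — it follows purely from the pruning rule and the form of the termination test. Optimality/arc consistency will only be needed later in order to prove that the returned $p$ lies in $\BS_f$, not for termination.
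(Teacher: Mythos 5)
Your proof is correct and is essentially the paper's own argument: the termination test failing forces the pruning step to remove at least one label from some $\Y_v$, and the initial total $\sum_v(|\X_v|-1)$ bounds the number of such removals. The extra remarks on well-definedness of $p$ and on the irrelevance of the dual solver's quality are accurate but do not change the route.
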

\begin{proof}
In case the return condition in line~\ref{alg2:check-condition} is not satisfied, $\O_v \cap \Y_v \neq \emptyset$ for some $v$ and the pruning in line~\ref{alg2:update-U} excludes at least one label from $\Y_v$.
\end{proof}

\begin{restatable}[Correctness of \Algorithm{alg:iterative-DualLP}]{lemma}{PdualstopAC}\label{P:dual-stop-AC}
If $(\forall v\in\SV) \ \ \O_v(\varphi) \cap \Y_v = \emptyset$ holds for an arc consistent dual vector $\varphi$, then $\varphi$ is optimal. 
\end{restatable}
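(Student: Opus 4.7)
The plan is to invoke LP duality: I would exhibit a primal $\mu\in\Lambda$ with $\langle g,\mu\rangle=g^\varphi_\emptyset$, after which weak duality will force $\varphi$ to attain the LP minimum of the verification LP $\min_{\mu\in\Lambda}\langle g,\mu\rangle$ and hence be optimal. Since $g^\varphi$ is non-negative away from the $\emptyset$ coordinate, any such $\mu$ must be supported on the active set, \ie, $\mu_v(i)=0$ for $i\notin\O_v(\varphi)$ and $\mu_{uv}(i,j)=0$ for $(i,j)\notin\O_{uv}(\varphi):=\{(i,j):g^\varphi_{uv}(i,j)=0\}$, so that only the constant term survives in $\langle g^\varphi,\mu\rangle$.

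First I would unpack the hypothesis. The condition $\O_v(\varphi)\cap\Y_v=\emptyset$ gives $\O_v(\varphi)\subseteq\X_v\setminus\Y_v$, the labels on which $p_v$ is the identity, so $g_v(i)=f_v(i)-f_v(p_v(i))=0$ for every $i\in\O_v(\varphi)$. Arc-consistency property~(i), $\O_{uv}(\varphi)\subseteq\O_u(\varphi)\times\O_v(\varphi)$, propagates this to edges and yields $g_{uv}(i,j)=0$ for $(i,j)\in\O_{uv}(\varphi)$. Thus the unreparametrized cost $g$ itself vanishes on the arc-consistent active set.

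Next I would construct $\mu\in\Lambda$ supported on that active set. Arc-consistency property~(ii) ensures that every active vertex label extends to an active edge pair on every incident edge, which is the standard non-degeneracy condition under which the face of $\Lambda$ obtained by forcing zero mass outside the active set is non-empty. Concretely one picks compatible vertex marginals on each $\O_v(\varphi)$ and fills in each edge marginal via a transportation-style construction on the bipartite support $\O_{uv}(\varphi)$. For such a $\mu$, the preceding step yields $\langle g,\mu\rangle=0$ directly, while cost equivalence of the reparametrization also gives $\langle g,\mu\rangle=\langle g^\varphi,\mu\rangle=g^\varphi_\emptyset$ because the vertex and edge summands of $g^\varphi$ vanish on the support. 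Equating these two expressions forces $g^\varphi_\emptyset=0$.

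The conclusion now follows from weak duality $g^\varphi_\emptyset\leq\min_{\mu'\in\Lambda}\langle g,\mu'\rangle$: the feasible point $\mu$ with value $0=g^\varphi_\emptyset$ (or alternatively $\delta(y)$, which satisfies $\langle g,\delta(y)\rangle=E_f(y)-E_f(p(y))=0$ since $y_v\notin\Y_v$ implies $p(y)=y$) forces equality, so $\varphi$ attains the primal minimum and is LP-optimal. The main obstacle lies in the gluing step of the construction of $\mu$: turning the per-edge transportation feasibilities afforded by property~(ii) into a single globally feasible point of $\Lambda$ is non-trivial and relies on the classical fact that arc consistency on a pairwise graphical model certifies non-emptiness of the corresponding face of the local polytope; a careful direct argument along the graph, or a reference to this standard result, settles it.
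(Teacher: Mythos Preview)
Your approach is sound in outline, but it leans on an external fact at exactly the point you flag, whereas the paper's argument is self-contained and exploits more of the available structure. You only use that $g$ vanishes on the arc-consistent active pairs $\O_{uv}(\varphi)$; the paper uses the stronger observation that $g_{uv}$ vanishes on the \emph{entire} product $(\X_u\backslash\Y_u)\times(\X_v\backslash\Y_v)$, which makes the reparametrized pairwise cost $g^\varphi_{uv}(i,j)=-\varphi_{uv}(i)-\varphi_{vu}(j)$ \emph{modular} on that block. Now pick any $z_u\in\O_u(\varphi)$ per node. Arc-consistency~(ii) supplies, for each edge $uv$, some $j$ with $g^\varphi_{uv}(z_u,j)=0$ and some $i$ with $g^\varphi_{uv}(i,z_v)=0$; the modularity identity
\[
g^\varphi_{uv}(z_u,z_v)+g^\varphi_{uv}(i,j)=g^\varphi_{uv}(z_u,j)+g^\varphi_{uv}(i,z_v)=0
\]
together with $g^\varphi_{uv}\geq 0$ forces $g^\varphi_{uv}(z_u,z_v)=0$. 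Thus the \emph{integer} point $\delta(z)$ satisfies complementary slackness with $\varphi$, giving dual optimality and $g^\varphi_\emptyset=E_g(z)=0$ in one stroke.

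What your route would buy is generality: it would apply whenever $g$ happens to vanish on the active set, irrespective of any product structure. What the paper's route buys is that the ``gluing'' obstacle simply never arises---no appeal to the non-trivial lemma that an arc-consistent support yields a non-empty face of $\Lambda$---and it produces an integer primal certificate rather than a relaxed one. Since you already noted that $g$ vanishes on all of $\X_v\backslash\Y_v$ (not just on $\O_v(\varphi)$), the modularity shortcut was within reach; it is the one idea your proposal is missing.
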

It follows that when \Algorithm{alg:iterative-DualLP} terminates, the found arc consistent solution $\varphi$ is optimal, in which case inclusion ~\eqref{O-inclusion} is satisfied and the found substitution $p$ is guaranteed to be in $\BS_f$.
\par
\myparagraph{Solvers Converging to Arc Consistency}
One can see that arc consistency is only required on termination of \Algorithm{alg:iterative-DualLP}. In the intermediate iterations we may as well perform the pruning step, line~\ref{alg2:update-U}, without waiting for the solver to converge. This motivates the following  practical strategy: 
\begin{itemize}
\item Perform a number of iteration towards finding an arc-consistent dual point $\varphi$;
\item Check whether there are some labels to prune, \ie, $(\exists u) \O_u(\varphi) \cap \Y_u \neq \emptyset$;
\item Terminate if $\varphi$ is arc consistent and there is nothing to prune; otherwise, perform more iterations towards arc consistency.
\end{itemize}
\begin{figure}[tb]
\centering
\setlength{\tabcolsep}{0pt}
\begin{tabular}{cc}
\begin{tabular}{c}\includegraphics[width=0.45\linewidth]{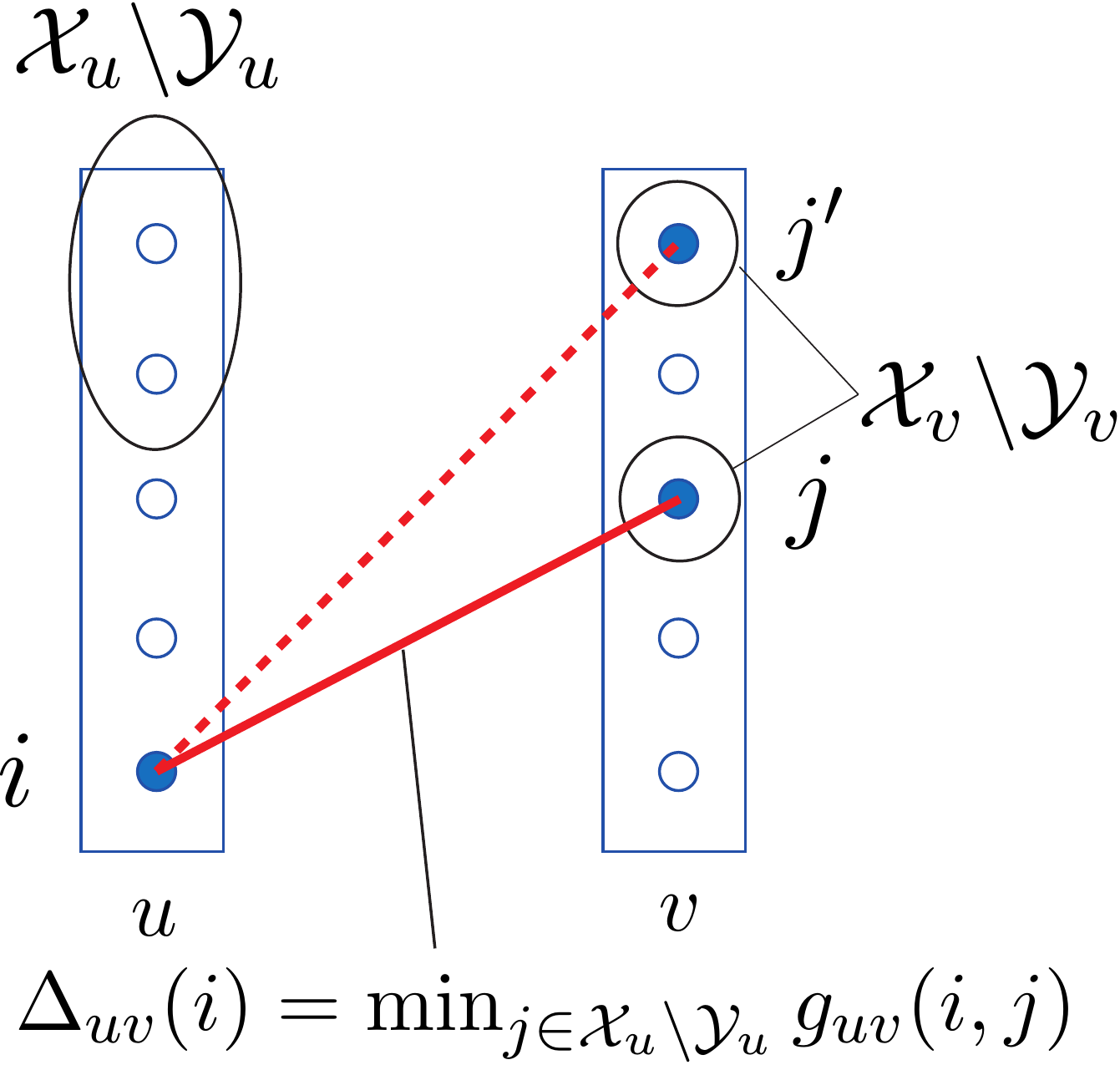}\end{tabular}\ \ \ \ & 
\begin{tabular}{c}\includegraphics[width=0.45\linewidth]{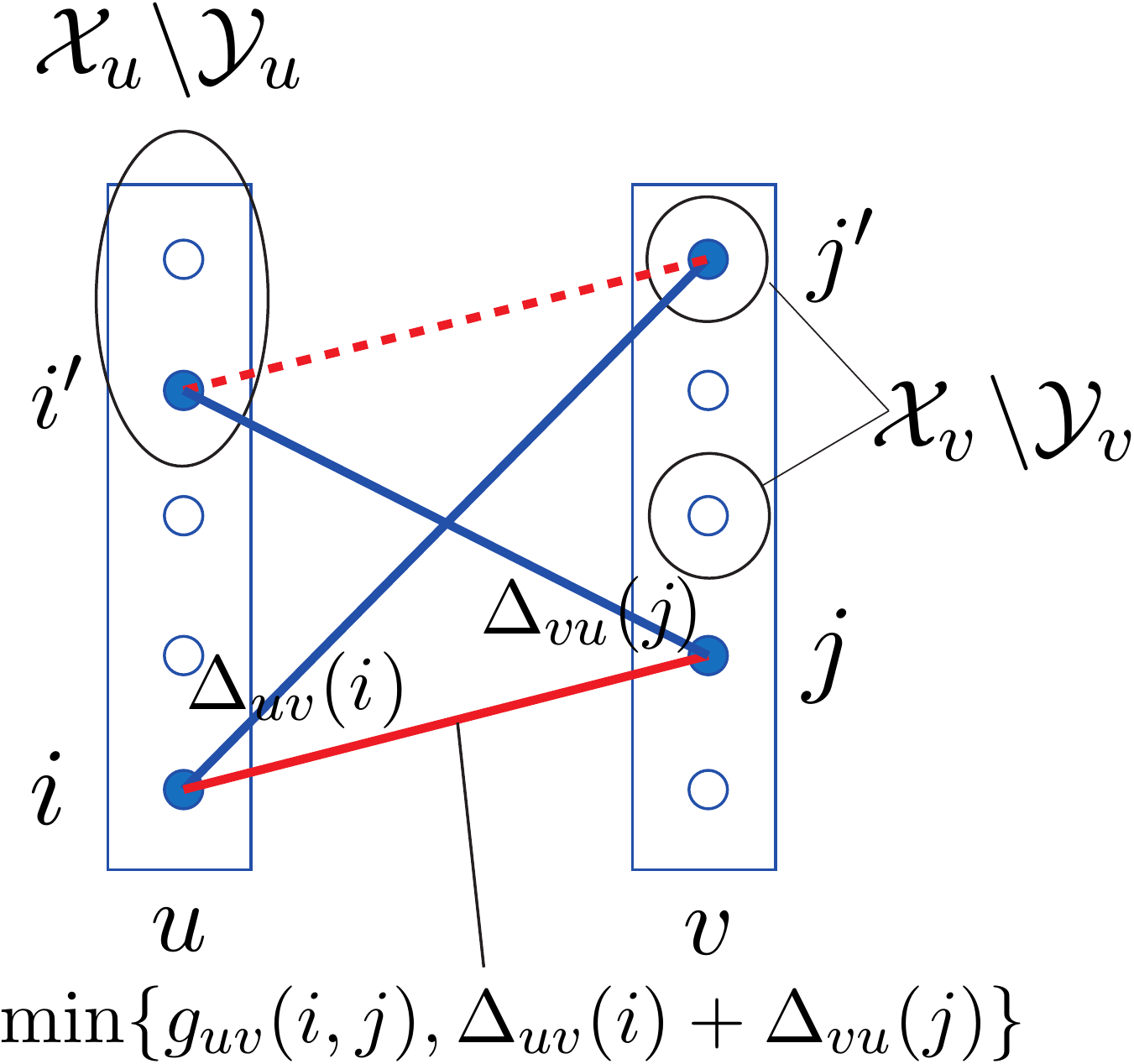}\end{tabular}\\
(a) & (b)
\end{tabular}
\caption{\label{fig:reduction}
Illustration for the reduction. Labels $\X_u \backslash \Y_u$ are not displaced by $p$ hence their associated unary and pairwise costs are zero in $g = (I-[p]\T)f$. In case (a) the indicated pairwise costs are replaced with their minimum. In case (b) the value of $g_{uv}(i,j)$ can be decreased, assuming all reductions of type (a) and their symmetric counterparts are already performed. The amount of decrease matches the value of the mixed derivative (non-submodularity) associated to $i,i'$ paired with $j,j'$.}
\end{figure}
If the solver is guaranteed to eventually find an arc consistent solution, the overall algorithm will either terminate with an arc consistent and (by~\autoref{P:dual-stop-AC}) optimal $\varphi$ or there will be some labels to prune.
However, we have to face the question what happens if the dual solver does not find an arc consistent solution in finite time. In this case the algorithm can be iterating infinitely with no pruning available. At the same time there is no guarantee that a pruning step will not occur at some point and thus if we simply terminate the algorithm we get no persistency guarantees. Even if the dual solver was guaranteed to converge in a finite number of iterations, it is in principle possible that the time needed for a pruning to succeed would be proportional to the time of convergence, making the whole algorithm  very slow. Instead, it is desirable to guarantee a valid result while allowing only a fixed time budget for the dual solver. We will overcome this difficulty with the help of the reduced verification LP presented next.

\section{Verification Problem Reduction}\label{sec:problemReduction}
\SetKwFunction{dualSolver}{{\ttfamily dual\_update}}
\SetKwFunction{dualCorrect}{{\ttfamily dual\_correct}}
Algorithms \ref{alg:iterative-LP} and \ref{alg:iterative-DualLP} iteratively solve verification problems. We can replace the verification LP solved in step~\ref{alg1:test-p} by a simpler, reduced one, {\em without loss of optimality} of the algorithms.
\begin{definition}\label{def:reduction}
Let $g:=(I-[p])^\top f$ be the cost vector of the verification LP. The {\em reduced cost vector} $\bar g$ is defined as
\begin{subequations}\label{equ:reductionTheorem}
\begin{align}
& \bar g_v(i) := g_v(i),\ v \in\SV;\ \bar g_\emptyset = 0; \\
	\label{equ:reductionTheorem-c}
  &\overline g_{uv}(i,j) := \\
	\notag
	& \left\{
	\setlength{\arraycolsep}{0pt}
  \begin{array}{ll}
   0, & i\notin \Y_u, j\notin \Y_v\,,\\
   \Delta_{vu}(j):=\min_{i'\notin \Y_u} g_{uv}(i',j), & i\notin \Y_u, j\in \Y_v\,,\\
   \Delta_{uv}(i):=\min_{j'\notin \Y_v} g_{uv}(i,j'), & i\in \Y_u, j\notin \Y_v\,,\\
	 \min\{\Delta_{vu}(j) +\Delta_{uv}(i), g_{uv}(i,j)\}, \ \ \ \ & i\in \Y_u, j\in \Y_v\,.
  \end{array}
\right.
\end{align}
\end{subequations}
\end{definition}
The reduction is illustrated in \Figure{fig:reduction}. Taking into account that $g_{uv}(i',j') = 0$ for $i'\in \X_u\backslash{\Y_u}$, $j'\in \X_v\backslash{\Y_v}$, the reduction can be interpreted as forcing the inequality
\begin{equation}\label{partial-submodularity}
g_{uv}(i,j')+g_{uv}(i',j) - g_{uv}(i,j) - g_{uv}(i',j') \geq 0, 
\end{equation}
\ie, the non-negativity of mixed discrete derivatives, for all four-tuples $i\in \Y_{u}$, $j\in \Y_{v}$, $i'\notin \Y_v$, $j'\notin \Y_u$.
The cost vector~$\bar g$ is therefore a {\em partial submodular truncation} of $g$. 

Recall that \Algorithm{alg:iterative-LP} on each iterations prunes all substitutions $q\leq p$ that do not belong to $\BS_f$ based on the solutions of the verification LP.
The following theorem reestablishes optimality of this step with the above reduction.

\begin{restatable}[Reduction]{theorem}{TReduction}\label{T:reduction 1-2}
Let $p\in\SP^{2,y}$ and $\bar g$ be the corresponding reduced cost vector constructed as in Def.~\ref{def:reduction}. 
Let also $q\in\SP^{2,y}$, $q\le p$. Then $q\in \BS_f$ iff $q\in \BS_{\bar g}$.
\end{restatable}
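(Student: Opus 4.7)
The plan is to prove the iff by composing two equivalences: $q \in \BS_f \iff q \in \BS_g \iff q \in \BS_{\bar g}$. The first is an algebraic observation exploiting $q \le p$; the second is the technical core.

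For the first equivalence, note that $q \le p$ means $\Y^q_v \subset \Y^p_v$ for every $v$, and since $y_v \notin \Y^p_v$, a direct check gives $p_v(q_v(i)) = p_v(i)$ for every $i\in\X_v$: if $i \in \Y_v^q$ then $q_v(i) = y_v$ and $p_v(y_v) = y_v = p_v(i)$, and if $i \notin \Y_v^q$ then $q_v(i) = i$ trivially. Hence $p \circ q = p$, and by the lifting identity $[p][q] = [p]$. Transposing gives $[q]\T[p]\T = [p]\T$, so
$$
(I-[q])\T(I-[p])\T = I - [p]\T - [q]\T + [q]\T[p]\T = I - [q]\T.
$$
Applied to $f$, this yields $(I-[q])\T g = (I-[q])\T f$. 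The verification LPs for $q$ against $f$ and against $g$ therefore have identical cost vectors, so $q \in \BS_f \iff q \in \BS_g$.

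For the second equivalence, I would use that $\<(I-[q])\T h, \mu\> = \<h, \mu - [q]\mu\> = 0$ for every $h$ whenever $[q]\mu = \mu$, so the verification LP has minimum $\le 0$ for any $h$. Thus $q \in \BS_h$ boils down to: the minimum equals $0$ and is attained only where $[q]\mu = \mu$. I would prove that $\bar g$ and $g$ agree in this regard by showing $\<(I-[q])\T g, \mu\>$ and $\<(I-[q])\T \bar g, \mu\>$ have the same sign on $\Lambda$, and their zero sets coincide.

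The main obstacle is analyzing the difference
$$
\<g - \bar g, \mu - [q]\mu\> = \sum_{uv,i,j}\bigl[(g-\bar g)_{uv}(i,j) - (g-\bar g)_{uv}(q_u(i),q_v(j))\bigr]\mu_{uv}(i,j)
$$
via case analysis over the four regions of~\cref{def:reduction}, further split by whether each index lies in $\Y^q$ or $\Y^p \setminus \Y^q$. In each case the marginalization constraints $\sum_j \mu_{uv}(i,j) = \mu_u(i)$ of $\Lambda$ should let the truncation terms telescope into a non-negative combination that vanishes exactly when $[q]\mu = \mu$. The trickiest sub-case is $i \in \Y_u^p,\ j \in \Y_v^p$, where the $\min$ truncation $\bar g_{uv}(i,j) = \min(\Delta_{uv}(i) + \Delta_{vu}(j), g_{uv}(i,j))$ introduces a kink in $g - \bar g$; here I anticipate invoking the partial submodularity inequality~\eqref{partial-submodularity}, viewed as a reparametrization-style bound, together with the LP constraints to absorb the discrepancy. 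Chaining the two equivalences then yields $q \in \BS_f \iff q \in \BS_{\bar g}$.
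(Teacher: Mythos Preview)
Your first equivalence $q\in\BS_f \Leftrightarrow q\in\BS_g$ is correct and is exactly the paper's \cref{T:q-B-f-g}: from $p\circ q = p$ one gets $(I-[q])\T(I-[p])\T f = (I-[q])\T f$, so the two verification LPs coincide.

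The gap is in the second equivalence. Your plan is to analyse
\[
\langle g-\bar g,\ (I-[q])\mu\rangle
\]
by case splitting and hope that the marginal constraints of $\Lambda$ make it telescope into a non-negative quantity vanishing exactly when $[q]\mu=\mu$. Even if that succeeds, it yields only the inequality $\langle \bar g,(I-[q])\mu\rangle \le \langle g,(I-[q])\mu\rangle$, hence only the implication $q\in\BS_{\bar g}\Rightarrow q\in\BS_g$. It says nothing about the reverse implication: since $\bar g\le g$ componentwise, positivity of $\langle g,(I-[q])\mu\rangle$ gives no lower bound on $\langle \bar g,(I-[q])\mu\rangle$. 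And the difference does not vanish identically on $\Lambda$ (it depends on the actual values of $g$ through the $\min$'s, not merely on a reparametrization), so ``same sign and same zero set'' cannot be obtained from the difference alone. Your remark about invoking the partial-submodularity inequality~\eqref{partial-submodularity} does not close this, because that inequality is precisely what the truncation \emph{enforces} on $\bar g$; it is not a hypothesis you can use to bound $\langle \bar g,(I-[q])\mu\rangle$ from below for an arbitrary $\mu\in\Lambda$ with $[q]\mu\neq\mu$.

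The paper handles the hard direction $q\in\BS_g\Rightarrow q\in\BS_{\bar g}$ by passing to the dual. It splits the reduction into two steps (\cref{T:reduction 1} and \cref{T:reduction 2}) and, for each, uses the characterization \cref{T:char}: $q\in\BS_g$ provides a reparametrization $\varphi$ with $g^\varphi_{uv}(q_u(i),q_v(j))\le g^\varphi_{uv}(i,j)$ componentwise; one then checks, case by case, that the \emph{same} $\varphi$ satisfies $\bar g^\varphi_{uv}(q_u(i),q_v(j))\le \bar g^\varphi_{uv}(i,j)$, which via \cref{T:char} gives $q\in\BS_{\bar g}$. This dual certificate is the missing ingredient in your proposal; without it (or an equivalent device) the $\Rightarrow$ direction does not go through.
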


From \cref{T:reduction 1-2} and 
\cref{prop:linearStrictLPImprovingMapping} it follows that $q\in \BS_{f}$ iff $q_u(\O^*_u) = \O^*_u$, where $\O^*_u$ are the support sets of optimal solutions to the {\em reduced verification LP},
\begin{align}\label{reduced-LP}
\textstyle \argmin_{\mu\in\Lambda} \<\bar g, \mu\>.
\end{align}
Therefore it is valid for algorithms \ref{alg:iterative-LP} and \ref{alg:iterative-DualLP} 
to consider this reduced LP and prune all substitutions $q$ that do not satisfy the property $q_v(\O_v^*) = \O_v^*$.
The optimal relaxed solutions and their support sets can in general differ from those of the original verification LP, however for the purpose of the algorithm 
it is an equivalent replacement 
potentially affecting only the order in which substitutions are pruned.
\par
The reduction has the following advantages:
\begin{itemize}
\item subsets of labels $\X_v\backslash \Y_v$ can be contracted to a single representative label $y_v$, because associated unary and pairwise costs are equal;
\item It will allow (see~\Section{sec:any-dual}) to relax the requirements on approximate dual solvers needed to establish termination and correctness of the algorithm.
\item It is useful for the speed up heuristics (\Section{sec:speedups}). In particular, it is easier to find a labeling with a negative cost since we have decreased many edge costs. It will be shown that such a labeling allows for an early stopping of the dual solver and a pruning of substitution without loss of maximality. 
\end{itemize}

\par

\section{Persistency with a Finite Number of Dual Updates}\label{sec:any-dual}
We assume that a suboptimal dual solver is iterative and can be represented by a procedure $\dualSolver$, which given a current dual point $\varphi$ makes a step resulting in a new dual point and a guess of a primal integer solution $x$. 
\par
In this setting we propose \Algorithm{Alg:iterative-AC}. In its inner loop, the algorithm calls \dualSolver (\autoref{trws-dual-update}) checks whether a speed-up shortcut is available (\autoref{trws-cut}) and verifies whether it can already terminate (lines \ref{trws-verification}-\ref{trws-return-step}). If neither occurs in a certain number of iterations (stopping condition in \autoref{TRWS-stop-condition-line}), the pruning based on the currently active labels is executed (line~\ref{trws-mark-step}). After that the cost vector $\bar g$ is rebuilt, but the dual solver continues from the last found dual point (warm start).
\par
The speed-ups will be explained in the next section, they are not critical for the overall correctness. Now we focus on the new termination conditions (lines \ref{trws-verification}-\ref{trws-return-step}).
A correction step (line~\ref{trws-verification}) is introduced whose purpose is to move the slacks from pairwise terms to unary terms so that active labels become more decisive. This procedure is defined in \autoref{Alg:correction}. The correction is not intermixed with dual updates but serves as a proxy between the solver and the termination conditions. It has the following property.

\begin{procedure}[t]
\SetKwProg{myproc}{Procedure}{}{}
\For{$uv\in\E$}{
$(\forall i\in\X_u)\ \ \varphi_{uv}(i) := \varphi_{uv}(i) + \min_{ij}{\bar g^{\varphi}_{uv}(i,j)}$\; \label{correct1}
$(\forall i\in\X_u)\ \ \varphi_{uv}(i) := \varphi_{uv}(i) + \min_{j}{\bar g^{\varphi}_{uv}(i,j)}$\;
\label{correct2} 
$(\forall j\in\X_v)\ \ \varphi_{vu}(j) := \varphi_{vu}(j) + \min_{i}{\bar g^{\varphi}_{uv}(i,j)}$\;
\label{correct3} 
}
$(\forall u\in\V)$\ $\varphi_{u} := \varphi_{u} + \min_{i}g^{\varphi}_{u}(i)$\label{correct4}\tcc*{Normalize}
\Return{$\varphi$}\;
\SetProcNameSty{texttt}
\caption{dual correct($\varphi$,$\bar g$)\label{Alg:correction}}
\end{procedure}

\begin{lemma}\label{lem:correction-props}Output $\varphi$ of \autoref{Alg:correction} is feasible and satisfies
\begin{align}
(\forall u\in\SV) \min_{i\in\SX_u}g^\varphi_{u}(i)=0, \label{correction-props-a}\\
\hspace{-10pt}(\forall uv\in\SE, ij\in \X_{uv}) \min_{i'\in\SX_u}\bar g^\varphi_{uv}(i',j)=\hspace{-2pt}\min_{j'\in\SX_v}\bar g^\varphi_{uv}(i,j') \hspace{-1.5pt} = \hspace{-1pt} 0. \label{correction-props-b}
\end{align}
Moreover, if the input $\varphi$ is feasible, the lower bound $f^{\varphi}_{\emptyset}$ does not decrease.
\end{lemma}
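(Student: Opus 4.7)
The plan is to trace the effect of each of the four reparametrization steps of \cref{Alg:correction} on $\bar g^\varphi$ and on $\varphi$, exploiting that adding a quantity $c$ to $\varphi_{uv}(i)$ subtracts $c$ from $\bar g^\varphi_{uv}(i,j)$ for every $j$ (and analogously in the $v$ variable), while adding $c$ to $g^\varphi_u(i)$. Since $\bar g_v = g_v$ in the unary part and $\bar g \le g$ pointwise on the pairwise part, any conclusion $\bar g^\varphi \ge 0$ immediately implies dual feasibility $g^\varphi \ge 0$.

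For pairwise non-negativity and condition~\eqref{correction-props-b}, I would first observe that \cref{correct1} uniformly shifts $\bar g^\varphi_{uv}$ so that $\min_{i,j}\bar g^\varphi_{uv}(i,j)=0$, hence $\bar g^\varphi_{uv}\ge 0$. Then \cref{correct2} is a standard arc-consistent pull establishing $\min_{j'}\bar g^\varphi_{uv}(i,j') = 0$ for every $i$, and \cref{correct3} its symmetric counterpart establishing $\min_{i'}\bar g^\varphi_{uv}(i',j) = 0$ for every $j$. The hard part will be to check that \cref{correct3} does not destroy the property just established by \cref{correct2}. For this, for each fixed $i$ I would pick $j^*(i)$ attaining $\bar g^\varphi_{uv}(i,j^*(i))=0$ after \cref{correct2}; since the subsequent subtraction $e_v(j^*(i)) := \min_{i'}\bar g^\varphi_{uv}(i',j^*(i))$ must lie in $[0,\bar g^\varphi_{uv}(i,j^*(i))]=\{0\}$, the entry at $(i,j^*(i))$ remains zero, preserving the $u$-side arc consistency. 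Pointwise non-negativity is also preserved because \cref{correct3} subtracts from $\bar g^\varphi_{uv}(\cdot,j)$ exactly its current minimum.

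Condition~\eqref{correction-props-a} and unary feasibility follow directly from \cref{correct4}, which by construction yields $\min_i g^\varphi_u(i)=0$ and hence $g^\varphi_u\ge 0$, completing the proof of feasibility. For the lower-bound claim, note that when the input $\varphi$ is feasible the constant added in \cref{correct1} is non-negative, and the quantities added in \cref{correct2} and \cref{correct3} are minima of non-negative values, hence also non-negative. Since each of these three steps can only increase $g^\varphi_u(i)$, the initial non-negativity of $g^\varphi_u$ propagates, so on entry to \cref{correct4} we have $\min_i g^\varphi_u(i)\ge 0$ and $\varphi_u$ is incremented by a non-negative amount. Consequently $f^\varphi_\emptyset = f_\emptyset + \sum_u \varphi_u$ does not decrease.
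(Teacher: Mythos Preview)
Your proof is correct and follows essentially the same line-by-line trace as the paper's proof. The paper's argument is terser and simply asserts that lines~\ref{correct2}--\ref{correct3} ``turn to zero the minimal pairwise value attached to each label,'' whereas you additionally verify (via the $j^*(i)$ argument) that the subtraction in \cref{correct3} does not destroy the row-minimum property established by \cref{correct2}; this is a genuine detail the paper glosses over, and your treatment is more complete on that point.
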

\begin{proof}
Line~\ref{correct1} of \autoref{Alg:correction} moves a constant from an edge to node. This turns the minimum of terms $g^{\varphi}_{uv}(i,j)$ to zero. Lines~\ref{correct2} and~\ref{correct3} turn to zero the minimal pairwise value attached to each label, which provides~\eqref{correction-props-b}. Line~\ref{correct4} provides~\eqref{correction-props-a}.
In case of feasibility of the initial $\varphi$, which implies $g^{\varphi}\ge 0$, all values of $\varphi$ can only increase during steps~\ref{correct1}-\ref{correct3} and hence the unary potentials $g^{\varphi}_u$ remain non-negative. Therefore step~\ref{correct4} can not decrease the lower bound value~$f^{\varphi}_{\emptyset}$.
\end{proof}
According to Lemma~\ref{lem:correction-props} \autoref{Alg:correction} can not worsen the lower bound attained by a dual solver. 
The following theorem guarantees that when no further pruning is possible, the corrected dual point constitutes an optimal solution, ensuring persistency.
\begin{theorem}\label{T:Correction}
Let $\varphi$ be a dual point for reduced problem $\bar g$ satisfying~\eqref{correction-props-a}-\eqref{correction-props-b}. Then either
\begin{enumerate}
\item $g^{\varphi}_{\emptyset} = 0$, $\varphi$ is dual optimal and $\delta(y)$ is primal optimal, or
\item $(\exists u \in \V) \ \O_u(\varphi) \cap \Y_u \neq \emptyset$.
\end{enumerate}
\end{theorem}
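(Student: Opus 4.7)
The plan is to establish the contrapositive of conclusion~2 implying conclusion~1: assuming $\O_u(\varphi)\cap\Y_u=\emptyset$ for every $u$, I will show $g^\varphi_\emptyset=0$. Because $\varphi$ is dual feasible and $\langle \bar g,\delta(y)\rangle = \bar g_\emptyset+\sum_v\bar g_v(y_v)+\sum_{uv}\bar g_{uv}(y_u,y_v)=0$ (all three terms vanish by Definition~\ref{def:reduction} since $y_u\notin\Y_u$ for every $u$), weak LP duality will then promote $g^\varphi_\emptyset=0$ to the full conclusion of dual optimality of $\varphi$ and primal optimality of $\delta(y)$.

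The first ingredient is structural. Inspecting Definition~\ref{def:reduction} shows that $\bar g_{uv}(i,j)$ depends on $i$ only through the membership $i\in\Y_u$; in particular $\bar g_{uv}(i,j)=\bar g_{uv}(i',j)$ for any $i,i'\in\X_u\setminus\Y_u$ and any $j$. Hence $\bar g^\varphi_{uv}(i,j)-\bar g^\varphi_{uv}(i',j)=\varphi_{uv}(i')-\varphi_{uv}(i)$ is independent of $j$; taking $\min_j$ and invoking~\eqref{correction-props-b} (both minima equal $0$) forces $\varphi_{uv}(i)=\varphi_{uv}(i')$, so $\varphi_{uv}$ is constant on $\X_u\setminus\Y_u$ with a value I denote $\varphi_{uv}^0$, and symmetrically $\varphi_{vu}$ takes a constant value $\varphi_{vu}^0$ on $\X_v\setminus\Y_v$. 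Because $g_u(i)=0$ for $i\notin\Y_u$ by construction of $g=(I-[p])^\top f$, $g^\varphi_u(i)$ is itself constant on $\X_u\setminus\Y_u$; condition~\eqref{correction-props-a} and the case-2 failure force this constant to equal $0$, which gives $\X_u\setminus\Y_u\subseteq\O_u(\varphi)$ (in particular $y_u\in\O_u$) and $\varphi_u=\sum_{v\in\N(u)}\varphi_{uv}^0$. Summing yields
\begin{equation*}
g^\varphi_\emptyset \;=\; \sum_u \varphi_u \;=\; \sum_{uv\in\SE}\bigl(\varphi_{uv}^0+\varphi_{vu}^0\bigr) \;=\; -\sum_{uv\in\SE}\bar g^\varphi_{uv}(y_u,y_v) \;\le\; 0,
\end{equation*}
with equality iff $\bar g^\varphi_{uv}(y_u,y_v)=0$ on every edge.

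The main obstacle is this edge-wise equality, which I argue by contradiction. Suppose $\bar g^\varphi_{uv}(y_u,y_v)>0$ for some $uv$. By the constancy of $\varphi_{vu}$ on $\X_v\setminus\Y_v$, $\bar g^\varphi_{uv}(y_u,j)=\bar g^\varphi_{uv}(y_u,y_v)>0$ for every $j\notin\Y_v$, so~\eqref{correction-props-b} applied at $i=y_u$ forces the zero row-minimum to occur at some $j^*\in\Y_v$; combined with $\bar g_{uv}(y_u,j^*)=\Delta_{vu}(j^*)$ this gives $\varphi_{vu}(j^*)=\Delta_{vu}(j^*)-\varphi_{uv}^0$. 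Symmetrically, some $i^*\in\Y_u$ satisfies $\varphi_{uv}(i^*)=\Delta_{uv}(i^*)-\varphi_{vu}^0$. Now the truncation case of~\eqref{equ:reductionTheorem-c} supplies the crucial bound $\bar g_{uv}(i^*,j^*)\le\Delta_{vu}(j^*)+\Delta_{uv}(i^*)$, and substituting the two identities for $\varphi_{uv}(i^*),\varphi_{vu}(j^*)$ gives
\begin{equation*}
\bar g^\varphi_{uv}(i^*,j^*) \;=\; \bar g_{uv}(i^*,j^*)-\Delta_{uv}(i^*)-\Delta_{vu}(j^*)-\bar g^\varphi_{uv}(y_u,y_v) \;\le\; -\bar g^\varphi_{uv}(y_u,y_v) \;<\; 0,
\end{equation*}
contradicting dual feasibility. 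Hence $\bar g^\varphi_{uv}(y_u,y_v)=0$ on every edge and $g^\varphi_\emptyset=0$.

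The truncation step above is the delicate one: the whole argument rests on the $\min\{\Delta_{vu}(j)+\Delta_{uv}(i),g_{uv}(i,j)\}$ clause in Definition~\ref{def:reduction}, without which $\bar g_{uv}(i^*,j^*)$ could exceed $\Delta_{uv}(i^*)+\Delta_{vu}(j^*)$ and no contradiction would follow. Everything else (the constancy of $\varphi_{uv}$ on $\X_u\setminus\Y_u$, the identification of $y_u\in\O_u(\varphi)$, and the telescoping into $\sum_{uv}\bar g^\varphi_{uv}(y_u,y_v)$) is a routine unpacking of conditions~\eqref{correction-props-a}--\eqref{correction-props-b} against the case-by-case reduction formula.
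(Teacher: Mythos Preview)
Your proof is correct and takes essentially the same approach as the paper: both argue by contrapositive, and both use the truncation clause of Definition~\ref{def:reduction} (what the paper calls the \emph{partial submodularity} of $\bar g$) to force $\bar g^\varphi_{uv}$ to vanish at the test labeling on every edge, concluding via primal--dual optimality. The paper's version is more concise---it picks an arbitrary $z_u\in\O_u(\varphi)$, obtains $\bar g^\varphi_{uv}(z_u,z_v)=0$ from the one-line submodularity inequality $\bar g^\varphi_{uv}(z_u,z_v)+\bar g^\varphi_{uv}(i,j)\le \bar g^\varphi_{uv}(z_u,j)+\bar g^\varphi_{uv}(i,z_v)=0$, and invokes complementary slackness directly---whereas you spend extra effort establishing that $\varphi_{uv}$ is constant on $\X_u\setminus\Y_u$ so as to work specifically with $y_u$ and compute $g^\varphi_\emptyset$ explicitly, but the decisive inequality is the same.
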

\begin{proof}
Assume (b) does not hold: $(\forall u\in\V)$ \ \ $\O_u(\varphi) \subseteq \X_u \backslash \Y_u$. Let us pick in each node $u$ a label $z_u\in\O_u(\varphi)$. As ensured by~\eqref{correction-props-b}, for each edge $uv$ there is a label $j\in \X_v$ such that  $\bar g^{\varphi}_{uv}(z_u,j) = 0$ and similarly, there exists $i\in \X_u$ such that $\bar g^{\varphi}_{uv}(i,z_v) = 0$. By partial submodularity of $\bar g$, we have 
\begin{align}\label{ac-U-eq1-n}
\bar g^{\varphi}_{uv}(z_u,z_v) + \bar g^{\varphi}_{uv}(i,j) \leq \bar g^{\varphi}_{uv}(z_u,j)+ \bar g^{\varphi}_{uv}(i,z_v) = 0.
\end{align}
Therefore, $\bar g^{\varphi}_{uv}(z_u,z_v) \leq -\bar g^{\varphi}_{uv}(i,j) \leq 0$. Hence $\bar g^{\varphi}_{uv}(z_u,z_v) = 0$ and it is active. Therefore $\delta(z)$ and dual point $\varphi$ satisfy complementarity slackness conditions and hence they are primal-dual optimal and $g^{\varphi}_{\emptyset} = E_{\bar g}(z) = 0 = E_{\bar g}(y)$.
\end{proof}
\begin{theorem}[Termination and Correctness of \Algorithm{Alg:iterative-AC}]\label{T:TRWS-term}
For any stopping condition in \autoref{TRWS-stop-condition-line}, \Algorithm{Alg:iterative-AC} terminates in at most $\sum_v (|\X_v|-1)$ outer iterations and returns $p\in\BS_f$.
\end{theorem}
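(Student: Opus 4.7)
The plan is to establish termination and correctness separately, both hinging on the new correction step and \cref{T:Correction}.

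For termination, I would observe that each pass through the outer loop either returns at line~\ref{trws-return-step} or proceeds to pruning at line~\ref{trws-mark-step}. In the pruning case the inner return test $(\forall u)\ \O_u(\varphi)\cap\Y_u=\emptyset$ has just failed, so some $\O_u(\varphi)\cap\Y_u\neq\emptyset$; the pruning step then strictly shrinks that $\Y_u$. Since $\sum_v|\Y_v|$ starts at $\sum_v(|\X_v|-1)$ and strictly decreases at every non-terminating outer iteration while remaining a non-negative integer, the outer loop terminates within $\sum_v(|\X_v|-1)$ iterations, irrespective of which stopping criterion is used in line~\ref{TRWS-stop-condition-line}.

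For correctness, the algorithm can exit only via line~\ref{trws-return-step}, which is reached after a call to \texttt{dual\_correct} and only when $(\forall u)\ \O_u(\varphi)\cap\Y_u=\emptyset$. By \cref{lem:correction-props} the corrected $\varphi$ is dual-feasible for the current reduced verification cost $\bar g$ and satisfies the hypotheses~\eqref{correction-props-a}--\eqref{correction-props-b} of \cref{T:Correction}. Of the two alternatives in that theorem, case~(b) is ruled out by the return test just verified, so we are in case~(a): $\varphi$ is dual-optimal for the reduced verification LP with $g^{\varphi}_{\emptyset}=0$ and $\delta(y)$ is primal-optimal. Complementary slackness~\eqref{c-slackness} applied to this optimal pair yields $\O_v^*\subseteq\O_v(\varphi)$ for every $v$, where $\O_v^*$ is the support set~\eqref{equ:SI-components} of optimal solutions of $\min_{\mu\in\Lambda}\<\bar g,\mu\>$. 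Combined with the return condition this forces $\O_v^*\cap\Y_v=\emptyset$ for all $v$. Then \cref{prop:linearStrictLPImprovingMapping} applied with cost vector $\bar g$ yields $p\in\BS_{\bar g}$, and \cref{T:reduction 1-2} (taken with $q=p$) transfers this to $p\in\BS_f$.

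The main subtlety I anticipate is ensuring that the speed-up shortcut at line~\ref{trws-cut} does not compromise either claim. Since that branch only diverts control to the pruning step without issuing a return, it cannot cause the algorithm to output an unverified $p$; and because every entry into pruning is still preceded by a failure of the return test, the monotone decrease of $\sum_v|\Y_v|$ holds along every execution path, keeping the $\sum_v(|\X_v|-1)$ bound valid. A secondary point worth spelling out is that the correction step may only increase $f^{\varphi}_{\emptyset}$ (again by \cref{lem:correction-props}), so it neither destroys the progress of the dual solver between outer iterations nor the arguments above.
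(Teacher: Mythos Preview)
Your proposal is correct and follows essentially the same route as the paper: termination via strict decrease of $\sum_v|\Y_v|$ whenever the return test fails, and correctness via \cref{T:Correction} to certify dual optimality of the corrected $\varphi'$, then complementary slackness to get $\O^*_v\subseteq\O_v(\varphi')$, and finally \cref{prop:linearStrictLPImprovingMapping}. The paper's proof compresses the last step by invoking \cref{prop:linearStrictLPImprovingMapping} directly for $f$; you make the intermediate passage through $\BS_{\bar g}$ and \cref{T:reduction 1-2} explicit, which is arguably cleaner since the algorithm actually works with $\bar g$.

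One small imprecision in your speed-up discussion: the branch at line~\ref{trws-cut} does not divert to the pruning step at line~\ref{trws-mark-step}; it applies the \emph{pruning cut} (a min-cut subproblem, \cref{sec:speedups}) and jumps to \texttt{rebuild\_energy}. The reason this still enforces monotone decrease is not a failed return test but rather that $E_{\bar g}(x)<0$ forces $x_v\in\Y_v$ for some $v$ and the cut subproblem then necessarily removes at least one such label (as the paper argues after~\eqref{eq:cut}). Your conclusion is right, only the justification for that branch should be adjusted.
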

\begin{proof}
When the algorithm has not yet terminated some further pruning is guaranteed to be possible (compare conditions in lines~\ref{trws-return-step} and \ref{trws-mark-step}). The iteration limit follows.
When \Algorithm{Alg:iterative-AC} terminates, from \Theorem{T:Correction} it follows that $\varphi'$ is dual optimal and hence $\O_u(\varphi') \supset \O_u^*$. Therefore, $(\forall u\in\SV) \ \ \O^*_u \cap \Y_u = \emptyset$, which is sufficient for $p$ to be strictly $\Lambda$-improving according to \autoref{prop:linearStrictLPImprovingMapping}.
\end{proof}
\par
In \cite{SSS-15-IR-Ar} we prove that a similar result holds for a TRW-S iteration without correction by arguing on complete chain subproblems instead of individual nodes. The correction might be needed in case the algorithm does not keep slacks on the nodes, \eg for SRMP~\cite{ReweightedMessagePassingRevisitedKolmogorov}.
\par
The stopping condition in \autoref{TRWS-stop-condition-line} of Algorithm~\ref{Alg:iterative-AC} controls the aggressiveness of pruning. Performing fewer iterations may result only in the found $p$ not being the maximum, but in any case it is guaranteed that the Algorithm~\ref{Alg:iterative-AC} does not stall and identifies {\em a correct persistency}. In the case when the solver does have convergence and optimality guarantees, the time budget controls the degree of approximation to the maximum persistency. 
\par
\begin{algorithm}[t]
\KwIn{Problem $f\in\Real^\SI$, test labeling $y\in\X$\;}
\KwOut{Improving substitution $p \in \SP^{2,y} \cap {\BS_f}$\;}
$(\forall u\in\V)$ $\Y_u := \X_u\backslash\{y_u\}$\;
Set $\varphi$ to the initial dual solution if available\;
\forever{}{
  Apply single node pruning \label{rebuild_energy}\tcc*{speed-up}
	Construct reduced verification LP $\bar g$ from $f$ and current sets $\SY_u$, according to 
	\autoref{def:reduction}\;
	\Repeat{{\rm any stopping condition (\eg, iteration limit)}}{
	\label{ip-trws-repeat}
		$(\varphi,x) := \dualSolver(\bar g, \varphi)$\label{trws-dual-update}\;
		\If(\tcc*[h]{speed-up}){$E_{\bar g}(x) < 0$}{\label{trws-cut}
		Apply pruning cut with $x$\;
		{\bf goto} step~\ref{rebuild_energy} to rebuild $\bar g$\;
		}
		\tcc{Verification of Optimality}
		$\varphi' := \dualCorrect(\bar g, \varphi)$\label{trws-verification}\;
		$\O_u := \{i \mid \bar g^{\varphi'}_u(i) = 0 \}$\; 
		\lIf{$(\forall u\in\V)\ \O_u\cap \Y_u = \emptyset$}{\Return{$p$} by~\eqref{equ:pixelWiseMapping}}\label{trws-return-step}
	}\label{TRWS-stop-condition-line}
	Prune: $(\forall u\in\V)$ $\Y_u := \Y_u \backslash \O_u$\label{trws-mark-step}\;
}
\SetKwProg{myproc}{Procedure}{}{}
  \myproc{\dualSolver($\bar g$,$\varphi$)}{
	\KwIn{Cost vector $\bar g$, dual point $\varphi$\;}
  \KwOut{New dual point $\varphi$, approximate primal integer solution $x$\;}
	}
\caption{\label{Alg:iterative-AC}
Efficient Iterative Pruning
}
\end{algorithm}

\section{Speed-ups}\label{sec:speedups}
\subsection{Inference Termination Without Loss of Maximality} 
Next, we propose several sufficient conditions to quickly prune some substitutions without worsening the final solution found by the algorithm. As follows from Definition~\ref{def:linearStrictLPImprovingMapping}, existence of a labeling $x$ such that $\lan (I-[p])^\top f,\delta(x)\ran \leq 0$ and $x\neq p(x)$ is sufficient to prove that substitution $p$ is {\em not} strictly $\Lambda$-improving. Hence one could consider updating the current substituttion $p$ without waiting for an exact solution of the inference problem in line~\ref{alg1:test-p}. 
The tricky part is to find labels that can be pruned without loss of optimality of the algorithm.  
Lemma~\ref{lemma:LPrefinement} below suggests to solve a simpler verification LP, $\min_{\mu\in\Lambda'} \<\bar g, \mu\>$ over a subset $\Lambda'$ of $\Lambda$. This does not guarantee to remove all non-improving substitutions (which implies one has to switch to $\Lambda$ afterwards), but can be much more efficient than the optimization over $\Lambda$. After the lemma we provide two examples of such efficient procedures.
\begin{restatable}{lemma}{LPrefinement}\label{lemma:LPrefinement}
Let $p\in\SP^{2,y}$ and $\bar g$ be defined by~\eqref{equ:reductionTheorem} (depends on $p$). Let $q\in\BS_{f}\cap\SP^{2,y}$, $q\leq p$, $Q=[q]$. Let $\Lambda'\subset\Lambda$, $Q(\Lambda') \subset \Lambda'$ and $\O^* = \argmin_{\mu\in\Lambda'} \<\bar g, \mu\>$.
Then $(\forall v\in\SV) \ \ q_v(\O^*_v) =  \O^*_v$.
\end{restatable}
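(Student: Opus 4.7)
The plan is to show that any relaxed-optimal solution over the smaller polytope $\Lambda'$ is already fixed by the linear extension $Q$, and then to read off the claim from the coordinate-wise meaning of $Q\mu=\mu$.

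First, I would transfer the hypothesis from $f$ to the reduced cost vector $\bar g$. Since $q \le p$ and $q\in\BS_f \cap \SP^{2,y}$, \Theorem{T:reduction 1-2} gives $q\in\BS_{\bar g}$. Unfolding \cref{def:linearStrictLPImprovingMapping} for $\bar g$ and $q$, this means
\begin{equation*}
\langle (I-Q)^{\top}\bar g,\mu\rangle \ge 0 \quad \text{for all } \mu\in\Lambda,
\end{equation*}
\ie\ $\langle \bar g,\mu\rangle \ge \langle \bar g, Q\mu\rangle$ on $\Lambda$, with equality forcing $Q\mu=\mu$.

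Next, I would combine this with optimality over $\Lambda'$. Pick any $\mu^*\in\O^*$. By assumption $Q\mu^*\in\Lambda'$, so the definition of $\O^*$ as the argmin over $\Lambda'$ yields $\langle \bar g,\mu^*\rangle \le \langle \bar g, Q\mu^*\rangle$. Together with the reverse inequality obtained above, equality holds, so $\mu^*$ minimizes $\langle (I-Q)^{\top}\bar g,\cdot\rangle$ over $\Lambda$. The strictly relaxed-improving property of $q$ then gives $Q\mu^*=\mu^*$.

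It remains to translate the fixed-point identity $Q\mu^*=\mu^*$ into a statement about $\O^*_v$. Writing out $Q$ node-wise as in \cref{example:linear-extension}, for $q\in\SP^{2,y}$ with associated subsets $\Y_v^q$ (recall $q_v(i)=y_v$ for $i\in\Y_v^q$ and $q_v(i)=i$ otherwise), the component $(Q_v\mu^*_v)(i)$ vanishes on $\Y_v^q$ while the mass of $\mu^*_v$ on $\Y_v^q$ is collected at $y_v$. Hence $Q\mu^*=\mu^*$ forces $\mu^*_v(i)=0$ for every $i\in\Y_v^q$. Since this holds for every $\mu^*\in\O^*$, the definition~\eqref{equ:SI-components} yields $\O^*_v\cap\Y_v^q=\emptyset$, \ie\ $q_v(i)=i$ for all $i\in\O^*_v$, which gives $q_v(\O^*_v)=\O^*_v$ as required.

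The main obstacle I anticipate is being careful about two chained transfers: the reduction $\BS_f\Rightarrow\BS_{\bar g}$ via \Theorem{T:reduction 1-2} (which relies on $q\le p$ and restricts to $\SP^{2,y}$), and the passage from the global inequality on all of $\Lambda$ to an equality by the restricted optimality on $\Lambda'$; the invariance $Q(\Lambda')\subset\Lambda'$ is precisely what closes this argument, so any generalization would have to preserve it.
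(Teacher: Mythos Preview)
Your proof is correct and follows essentially the same route as the paper's: use \Theorem{T:reduction 1-2} to pass from $q\in\BS_f$ to $q\in\BS_{\bar g}$, then combine the resulting inequality $\langle \bar g,\mu\rangle \ge \langle \bar g, Q\mu\rangle$ on $\Lambda$ with optimality over $\Lambda'$ and the invariance $Q(\Lambda')\subset\Lambda'$ to conclude $Q\mu^*=\mu^*$ for every $\mu^*\in\O^*$. The paper phrases this as a one-line contradiction argument, whereas you write out the two inequalities explicitly and give more detail on the final coordinate-wise translation to $q_v(\O^*_v)=\O^*_v$; the logical content is identical.
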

Note, while \Theorem{T:reduction 1-2} is necessary and sufficient for pruning, \Lemma{lemma:LPrefinement} is only sufficient.
\par
\myparagraph{Pruning of Negative Labelings} 
Assume we found an integer labeling $x$ such that $E_{\bar g}(x) \leq 0$ and $p(x)\neq x$. 
Lemma~\ref{lemma:LPrefinement} gives an answer, for which nodes $v$ the label $x_v$ can be pruned from the set $\Y_v$ without loss of optimality. Define the following restriction of the polytope $\Lambda$:
\begin{align}
{\Lambda_{x}=\{\mu \in \Lambda \mid (\forall v\in\V)\ \mu(y_v)+\mu(x_v)=1\}\subset \Lambda}.
\end{align} 
Polytope $\Lambda_{x}$ corresponds to the restriction of $\Lambda$ to the label set $\{y_v,x_v\}$ in each node $v\in\SV$.
According to Lemma~\ref{lemma:LPrefinement} we need to solve the problem 
\begin{equation}\label{eq:cut}
\textstyle
\O^* := \argmin_{\mu\in\Lambda_x} \<\bar g, \mu\>\,
\end{equation}
and exclude $x_v$ from $\Y_v$ if $x_v \in \O^*_v$. Due to the partial submodularity of $\bar g$ the problem~\eqref{eq:cut} is submodular and can be solved by min-cut/max-flow algorithms~\cite{EnergiesAndGraphCutsKolmogorov}. Because $x$ was found to have non-positive energy, it is necessarily that for some nodes $v$ there will hold $x_v \in \O^*_v \cap \Y_v$ and therefore some pruning will take place.

\myparagraph{Single Node Pruning} Let us consider ``a single node'' polytope $\Lambda_{u,i}:=\{\mu\in\Lambda \mid \mu_u(y_u)+\mu_u(i)=1; (\forall v\neq u)\ \mu_v(y_v)=1\}$. It is a special case of $\Lambda_x$ when $y$ and $x$ differ in a single node $u$ only and $x_u=i$. In this case problem~\eqref{eq:cut} amounts to calculating
$
\bar g_u(x_u) + \sum_{v\in\N(u)} \bar g_{uv}(x_u,y_v)\,.
$ 
If this value is non-positive, $x_u$ must be excluded from $\Y_u$. 
The single node pruning can be applied to all pairs $(u,i)$ exhaustively, but it is more efficient to keep track of the nodes for which sets $\Y_v$ have changed (either due to a negative labeling pruning, active labels pruning in \autoref{trws-mark-step} or the single node pruning itself) and check their neighbors. 

\subsection{Efficient Message Passing}
The main computational element in dual coordinate ascent solvers like TRWS or MPLP is {\em passing a message}, \ie, an update of the form $\min_{i\in\X_u}(f_{uv}(i,j) + a(i))$. 
In many practical cases the message passing for $f$ can be computed in time linear in the number of labels~\cite{Hirata-1996,Meijster2000,Aggarwal-1987}.
This is the case when $f_{uv}$ is a convex function of $i{-}j$ (\eg, $|i{-}j|$, $(i{-}j)^2$) or a minimum of few such functions (\eg Potts model is $\min(1,|i{-}j|)$). 
However, in \Algorithm{alg:iterative-LP} we need to solve the problem with the cost vector $g = (I-P\T)f$, resp. $\bar g$~\eqref{equ:reductionTheorem} if we apply the reduction.
It turns out that whenever there is a fast message passing method for $f$, the same holds for $\bar g$. 
\begin{restatable}[Fast message passing]{theorem}{TFastMsg}\label{T:message passing reduced complexity}
Message passing for an edge term $\bar g_{uv}$~\eqref{equ:reductionTheorem} can be reduced to that for $f_{uv}$ in time $O(|\Y_u|+|\Y_v|)$.
\end{restatable}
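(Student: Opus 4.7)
The plan is to decompose $\bar g_{uv}$ into a separable term plus a restricted copy of $f_{uv}$, and then distribute the outer $\min_i$ over the two pieces. Setting $\alpha(i) := \Delta_{uv}(i)\leftbb i \in \Y_u \rightbb$, $\beta(j) := \Delta_{vu}(j)\leftbb j \in \Y_v \rightbb$ and $c := f_{uv}(y_u, y_v)$, a four-case check against~\eqref{equ:reductionTheorem} shows that
\begin{equation*}
\bar g_{uv}(i,j) \;=\; \min\!\bigl(\alpha(i) + \beta(j),\ h_{uv}(i,j)\bigr),
\end{equation*}
where $h_{uv}(i,j) = f_{uv}(i,j) - c$ for $(i,j) \in \Y_u \times \Y_v$ and $h_{uv}(i,j) = +\infty$ otherwise.

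Substituting this identity into the message update yields
\begin{equation*}
m(j) := \min_i\bigl(\bar g_{uv}(i,j) + a(i)\bigr) \;=\; \min\!\bigl(\beta(j) + M,\; m_f(j) - c\bigr),
\end{equation*}
with $M := \min_i(\alpha(i) + a(i))$ and $m_f(j) := \min_{i \in \Y_u}(f_{uv}(i,j) + a(i))$, using the convention $m_f(j) - c = +\infty$ for $j \notin \Y_v$. Crucially, $m_f(j)$ coincides with the standard $f_{uv}$-message evaluated on the padded input vector $\tilde a(i) = a(i)$ if $i \in \Y_u$ and $\tilde a(i) = +\infty$ otherwise, so any fast routine available for $f_{uv}$ (distance transforms for convex interactions, the $O(n)$ Potts update, etc.) transfers verbatim without any change of algorithm.

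It remains to argue that the extra bookkeeping is linear. After the reduction, labels in $\X_u \setminus \Y_u$ (resp.\ $\X_v \setminus \Y_v$) may be contracted to the single representative $y_u$ (resp.\ $y_v$), whence $\Delta_{uv}(i) = f_{uv}(i,y_v) - c$ for $i \in \Y_u$ and $\Delta_{vu}(j) = f_{uv}(y_u,j) - c$ for $j \in \Y_v$. Hence the tables $\alpha, \beta$ and the scalar $M$ are produced in a single sweep of cost $O(|\Y_u|+|\Y_v|)$, and combining $\beta(j) + M$ with $m_f(j) - c$ into $m(j)$ is another $O(|\Y_v|)$ pass. The main obstacle is exactly this precomputation: without the label contraction supplied by the reduction, naive evaluation of $\Delta_{uv}$ and $\Delta_{vu}$ already costs $\Omega(|\Y_u|\cdot|\X_v\setminus \Y_v|)$, so the contraction is what secures the linear overhead; it also guarantees that the $+\infty$-padding of $a$ is handled harmlessly by specialized $O(n)$ distance-transform implementations.
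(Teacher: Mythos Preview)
Your proof is correct and follows essentially the same route as the paper's. Both reduce the $\bar g_{uv}$-message to (i) a separable piece $\alpha(i)+\beta(j)$ handled by a single scalar $M=\min_i(\alpha(i)+a(i))$ plus an $O(|\Y_v|)$ sweep, and (ii) a restricted $f_{uv}$-message over $\Y_u\times\Y_v$, which inherits the fast $O(|\Y_u|+|\Y_v|)$ routine. The paper arrives there by an explicit case split on $j\in\Y_v$ versus $j\notin\Y_v$ and then regrouping three min-terms; your identity $\bar g_{uv}(i,j)=\min\bigl(\alpha(i)+\beta(j),\,h_{uv}(i,j)\bigr)$ is a cleaner packaging of the same decomposition and makes the distribution of $\min_i$ over the two branches immediate. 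The one place where the paper is slightly more careful is in stating that the restricted $f$-message is computed directly on the index sets $\Y_u,\Y_v$ (citing non-uniform min-convolution), whereas you invoke $+\infty$-padding of $a$ together with label contraction; both are valid, but it is worth being explicit that the $O(|\Y_u|+|\Y_v|)$ bound for the $f$-message relies on running the distance-transform on the \emph{contracted} label sets, not on the full $\X_u,\X_v$ with infinities.
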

This complexity is proportional to the size of the sets $\Y_u$. The more labels are pruned from sets $\Y_u$ in the course of the algorithm, the less work is required.
\par
Note, that contrary to limiting the number of iterations of a dual solver, described in \Section{sec:any-dual}, the speedups presented in this section do {\em not} sacrifice the persistence maximality~\eqref{equ:maxPersistency}. In our experiments for some instances, Algorithm~\ref{Alg:iterative-AC} finished before ever reaching step~\ref{trws-mark-step}. In such cases the found substitution $p\in\BS_f$ is the maximum. 

\section[Experimental Evaluation]{Experimental Evaluation\footnote{The implementation of our method is available through 
\url{http://cmp.felk.cvut.cz/~shekhovt/persistency}}
}\label{sec:experiments}
\begin{table}[tb]
\setlength{\tabcolsep}{2pt}
\resizebox{\linewidth}{!}{%
\begin{tabularx}{\linewidth}{|r|X|}
\hline
\raggedright
\small \IRICPLEX & Our \Algorithm{alg:iterative-LP} (Iterative Relaxed Inference) using CPLEX~\cite{CPLEX}.\\
\small \IRITRWS & Our \Algorithm{Alg:iterative-AC} using TRW-S~\cite{TRWSKolmogorov}. Initial solution uses at most 1000 iterations (or the method has converged). All speedups. \\
\small \PBPCPLEX & Method~\cite{SwobodaPersistencyCVPR2014} with CLPEX.\\

\small \PBPTRWS & Method~\cite{SwobodaPersistencyCVPR2014} with TRW-S. \\ 
\small \Shekhovt & Single LP formulation of the maximum strong persistency~\cite{shekhovtsov-14} solved with CPLEX.\\
\small \Kovtun & One-against-all method of Kovtun~\cite{Kovtun-10}.\\
\small \MQPBO & Multilabel QPBO~\cite{PartialOptimalityInMultiLabelMRFsKohli}.\\
\small \MQPBO-10 & MQPBO with 10 random permutations, accumulating persistency.\\
%
\hline
\end{tabularx}
}
\caption{List of Evaluated Methods}
\end{table}

\begin{table*}[!tb]
\setlength{\tabcolsep}{5pt}
\centering
\begin{tabular}{|l|rl|rl|rl|rl|rl|}
\hline
Problem family & \multicolumn{2}{c|}{\PBPCPLEX}
&  \multicolumn{2}{c|}{\PBPTRWS}
&  \multicolumn{2}{c|}{\Shekhovt}  
&  \multicolumn{2}{c|}{\IRICPLEX}
&  \multicolumn{2}{c|}{\IRITRWS}\\
\hline
\tt 10x10 Potts-3 &  0.18s & 58.46\%  &  0.05s & 58.38\%  &  0.05s & {\bf 72.27}\%  &  0.18s & {\bf 72.27}\% &  0.04s & 72.21\% \\
\tt 10x10  full-3 &  0.24s & \ \,2.64\%  &  0.09s & \ \,1.22\%  &  0.06s & {\bf 62.90}\%  &  0.24s & {\bf 62.90}\% &  0.05s & 62.57\% \\
\hline
\tt 20x20 Potts-3 &  3.25s & 73.95\%  &  0.21s & 68.49\%  &  0.87s & {\bf 87.38}\%  &  2.43s & {\bf 87.38}\% &  0.06s & {\bf 87.38}\% \\
\tt 20x20  full-3 &  2.81s & \ \,0.83\%  &  0.37s & \ \,0.83\%  &  0.95s & {\bf 72.66}\%  &  3.03s & {\bf 72.66}\% &  0.07s & 72.31\% \\
\hline
\tt 20x20 Potts-4 & 12.45s & 23.62\%  &  0.39s & 18.43\%  & 19.40s & {\bf 74.28}\%  &  8.56s & {\bf 74.28}\% &  0.08s & 73.63\% \\
\tt 20x20  full-4 &  3.96s & \ \,0.01\%  &  0.39s & \ \,0.01\%  & 21.08s & \ \,{\bf6.58}\% 
& 12.41s & \ \,{\bf 6.58}\% &  0.08s & \ \,{\bf 6.58}\%\\
\hline
\end{tabular}
\caption{Performance evaluation on random instances of~\cite{shekhovtsov-14}. For each problem family (size, type of potentials and number of labels) average performance over 100 samples is given. To allow for precise comparison all methods are initialized with the same test labeling $y$ found by LP relaxation. \IRITRWS closely approximates \IRICPLEX, which matches \Shekhovt, and scales much better. 
}
\label{table:random}
\end{table*}
\begin{table*}[!t]
\setlength{\tabcolsep}{3pt}
\centering
\begin{tabular}{|lggg|rr|rr|rr|rr|rr|}
\hline
Problem family & \#I& \#L& \#V &
\multicolumn{2}{c|}{
\MQPBO} &
\multicolumn{2}{c|}{\MMQPBO} &
\multicolumn{2}{c|}{\Kovtun} &
\multicolumn{2}{c|}{\PBPTRWS} &
\multicolumn{2}{c|}{\IRITRWS} \\
\hline
\texttt{mrf-stereo}        &3& 16-60 &$>100000$&\multicolumn{2}{c|}{$\dagger$}& \multicolumn{2}{c|}{$\dagger$} & 
{\bf 1.0s} & 0.23\% &
2.5h & 13\% & 
{\tblue 117s} & {\tblue \textbf{73.56\%}} \\
\texttt{mrf-photomontage}  &2& 5-7 &$\leq 514080$&
93s & 22\% &
866s & 16\% &
{\bf 0.4s} & 15.94\% &
3.7h & 16\% &
{\tblue 483s} & {\tblue\textbf{41.98\%}} \\
\texttt{color-seg}         &3& 3-4 &$\leq424720$&
22s & 11\% &
87s & 16\% &
{\bf 0.3s} & 98\% &
1.3h & 99.88\% &
{\tblue 61.8s} & {\tblue \textbf{99.95\%}} \\
\texttt{color-seg-n4}      &9& 3-12 &$\leq 86400$&
22s & 8\% &
398s & 14\% &
{\bf 0.2s} & 67\% &
321s & 90\% &
{\tblue 4.9s} & {\tblue \textbf{99.26\%}} \\ 
\texttt{ProteinFolding}    &21&$\leq 483$&$\leq 1972$&
685s & 2\% &
2705s & 2\% &
{\bf 0.02s} & 4.56\% &
48s & 18\% &
{\tblue 9.2s} & {\tblue \textbf{55.70\%}}\\
\texttt{object-seg}    & 5 & 4-8 & 68160 &
3.2s & 0.01\% &
\multicolumn{2}{c|}{$\dagger$}&
{\tblue \bf 0.1s} & 93.86\% &
138s & {\tblue 98.19\%}&
2.2s & {\tblue \textbf{100\%}}\\
\hline
\end{tabular}
\caption{Performance on OpenGM benchmarks. 
Columns \#I,\#L,\#V  denote the number of instances, labels and variables respectively. For each method an average over all instances in a family is reported. $\dagger$ -- result is not available (memory / implementation limitation).
}
\label{tab:Experiments}
\end{table*}
In the experiments we study how well we approximate the maximum persistency~\cite{shekhovtsov-14}, \autoref{table:random}; illustrate the contribution of different speedups, \autoref{exp:speedups}; give an overall performance comparison to a larger set of relevant methods, \autoref{tab:Experiments}; and provide a more detailed direct comparison to the most relevant scalable method~\cite{SwobodaPersistencyCVPR2014} using exact and approximate LP solvers, \autoref{tab:proteins}.
As a measure of persistency we use the percentage of labels eliminated by the improving substitution~$p$:
\definecolor{azure}{rgb}{0.3, 0.8, 1.0}
\definecolor{lgreen}{rgb}{0.75, 1, 0.75}
\begin{table*}[!t]
\centering
\setlength{\tabcolsep}{2pt}
\begin{tabular}{|l|G|c|c|c|c|c|}
\hline
Instance & Initialization & \multicolumn{5}{c|}{Extra time for persistency}\\
 & (1000 it.) & no speedups & \cellcolor{lgreen} +reduction & \cellcolor{lgreen!60!azure} +node pruning & \cellcolor{lgreen!30!azure} +labeling pruning & \cellcolor{azure} +fast msgs\\
\hline
Protein folding {\tt 1CKK} & 8.5s & 268s (26.53\%) & \cellcolor{lgreen} 168s (26.53\%) & \cellcolor{lgreen!60!azure} 2.0s (26.53\%) & \cellcolor{lgreen!30!azure} 2.0s (26.53\%) & \cellcolor{azure} 2.0s (26.53\%) \\
colorseg-n4 {\tt pfau-small} & 9.3s & 439s (88.59\%) & \cellcolor{lgreen} 230s (93.41\%) & \cellcolor{lgreen!60!azure} 85s (93.41\%) & \cellcolor{lgreen!30!azure} 76s (93.41\%) & \cellcolor{azure} 19s (93.41\%)\\
\hline
\end{tabular}
\caption{Evaluation of speedups on selected examples: computational time drops, as from left to right we add techniques described in \Section{sec:problemReduction}.
{\tt 1CKK}: an example when the final time for persistency is only a fraction of the initialization time.
{\tt pfau-small}: an example when times for initialization and persistency are comparable; speedups also help to improve the persistency as they are based on exact criteria.
}
\label{exp:speedups}
\end{table*}
\begin{table*}[!t]
\centering
\setlength{\tabcolsep}{2pt}
\begin{tabular}{|lgg|rr|rr|rr|rr|}
\hline
Instance & \#L & \#V &
\multicolumn{2}{c|}{\PBPCPLEX} &
\multicolumn{2}{c|}{\PBPTRWS} &
\multicolumn{2}{c|}{\IRICPLEX} &
\multicolumn{2}{c|}{\IRITRWS} \\
\hline
\texttt{1CKK} & $\leq 445$ & 38 &
2503s & 0\%&
46s & 0\%&
2758s & \bf \tblue 27\% &
\tblue 8.5+2s & 26.53\% \\
\texttt{1CM1} &$\leq 350$&37&
2388s & 0\% &
51s & 0\% &
4070s & \bf 34\% & 
\tblue 9+3.9s & \tblue 29.97\% \\
\texttt{1SY9} &$ \leq 425$&37&
1067s & 0\% &
67s & 0\% &
2629s & 51\% &
\tblue 11+4.2s & \bf \tblue 57.98\% \\
\texttt{2BBN} &$\leq 404$&37&
9777s & 0\% &
5421s & 0\% &
9677s & 9\% &
\tblue 16+4.3s & \bf \tblue 14.17\%\\
\texttt{PDB1B25} & $ \leq 81 $  &  1972 &
 325s & 22\% &
 120s & 22\% &
 1599s & 84\% &
\tblue 4.3+7.3s & \bf \tblue 87.84\% \\
\texttt{PDB1D2E} & $ \leq 81 $  &  1328 &
 483s & 59\% &
 83s & 59\% &
 154s & \bf 98\% &
\tblue 1.6+1.8s & \bf \tblue 98.25\% \\
\hline
\end{tabular}
\caption{Comparison to~\cite{SwobodaPersistencyCVPR2014} using exact and approximate LP solvers. Examples of hard {\tt ProteinFolding} instances~\cite{PIC2011,SideChainPredictionYanover}. For \IRITRWS the initialization + persistency time is given. An occasionally better persistency of \IRITRWS \vs\ \IRICPLEX is explained by different test labelings produced by the CPLEX and TRW-S solvers
(unlike in Table~\ref{table:random}). The results of \Shekhovt wold be identical to \IRICPLEX, as has been proven and verified on random instances.
}
\label{tab:proteins}
\end{table*}
\begin{align}\label{measure-labels}
\begin{array}{c}
\frac{\sum_{v\in\V}|\X_v \backslash p_v(\X_v)|}{\sum_{v\in\V}(|\X_v|-1)}  = \frac{\sum_{v\in\V}|\Y_v|}{\sum_{v\in\V}(|\X_v|-1)}. 
\end{array}
\end{align}
\myparagraph{Random Instances}
Table~\ref{table:random} gives comparison to~\cite{SwobodaPersistencyCVPR2014} and \cite{shekhovtsov-14} on random instances generated as in~\cite{shekhovtsov-14} (small problems on 4-connected grid with uniformly distributed integer potentials for ``{\tt full}'' model and of the Potts type for ``{\tt Potts}'' model, all not LP-tight). It can be seen that our exact \Algorithm{alg:iterative-LP} performs identically to the $\varepsilon$-L1 formulation~\cite{shekhovtsov-14}. Although it solves a series of LPs, as opposed to a single LP solved by $\varepsilon$-L1, it scales better to larger instances. Instances of size 20x20 in the $\varepsilon$-L1 formulation are already too difficult for CPLEX: it takes excessive time and sometimes returns a computational error. The performance of the dual \Algorithm{Alg:iterative-AC} confirms that we loose very little in terms of persistency but gain significantly in speed. 
\par
\myparagraph{Benchmark Problems}
Table~\ref{tab:Experiments} summarizes average performance on the OpenGM MRF benchmark~\cite{kappes-2015-ijcv}. The datasets include previous benchmark instances from computer vision~\cite{SzeliskiComparativeStudyMRF} and protein structure prediction~\cite{PIC2011,SideChainPredictionYanover} as well as other models from the literature. 
Results per instance are given in~\cref{sec:results-per-instance}.
\par
\myparagraph{Speedups}
In this experiment we report how much speed improvement was achieved with each subsequent technique of \Section{sec:speedups}. The evaluation in Table~\ref{exp:speedups} starts with a basic implementation (using only a warm start). The solver is allowed to run at most 50 iterations in the partial optimality phase until pruning is attempted. We expect that on most datasets the percentage of persistent labels improves when we apply the speedups (since they are without loss of maximality). 

\par
\myparagraph{Discussion\label{sec:discussion}}
Tables~\ref{table:random} and~\ref{tab:proteins} demonstrate that \IRITRWS, which is using a suboptimal dual solver, closely approximates the maximum persistency~\cite{shekhovtsov-14}. Our method is also significantly faster and scales much better.
The method~\cite{SwobodaPersistencyCVPR2014} is the closest contender to ours in terms of algorithm design.
Tables~\ref{table:random},~\ref{tab:Experiments} and~\ref{tab:proteins} clearly show that our method determines a larger set of persistent variables. This holds true with exact (CPLEX) as well as approximate (TRWS) solvers.
There are two reasons for that as discusssed in~\cref{sec:prev-work}. First, we optimize over a larger set of substitutions than~\cite{SwobodaPersistencyCVPR2014}, \ie, we identify per-label persistencies while~\cite{SwobodaPersistencyCVPR2014} is limited to the whole-variable persistencies. Second, even in the case of the whole-variable persistencies the criterion in~\cite{SwobodaPersistencyCVPR2014} is in general weaker than~\eqref{equ:linearStrictLPImprovingMapping} and depends on the initial reparametrization of the problem. 
This later difference does not matter for Potts models~\cite{Swoboda-PAMI-16}, the examples~\cref{fig:pfau,fig:ted}, but does matter, \eg, in \cref{fig:pano}.
%
%
%
 Although our method searches over a significantly larger space of possible substitutions, it needs fewer TRW-S iterations due to speedup techniques. Details on iteration counts can be found in~\cref{sec:results-per-instance}.
In the comparison of running time it should be taken into account that different methods are optimized to a different degree. 
Nevertheless, it is clear that the algorithmic speedups were crucial in making the proposed method much more practical than~\cite{SwobodaPersistencyCVPR2014} and~\cite{shekhovtsov-14} while maintaining high persistency recall quality.

\par
To provide more insights to the numbers reported, we illustrate in \cref{fig:teaser-progress,fig:pfau,fig:easy,fig:ted,fig:pano} some interesting cases. \cref{fig:pfau} shows ``the hardest'' instance of {\tt color-seg-n4} family. Identified persistencies allow to fix a single label in most of the pixels, but for some pixels more than one possible label remains. The {\em remainder} of the problem has the reduced search space $p(\X)$, which can be passed to further solvers. The {\tt tsukuba} image~\cref{fig:teaser-progress} is interesting because it has appeared in many previous works. The performance of graph-cut based persistency methods relies very much on strong unary costs, while the proposed method is more robust. 
\cref{fig:easy} shows an easy example from {\tt object-seg}, where LP relaxation is tight, the dual solver finds the optimal labeling $y$ and our verification LP confirms that this solution is unique. In \cref{fig:ted} we show a hard instance of {\tt mrf-stereo}. Partial reason for its hardness is integer costs, leading to non-uniqueness of the optimal solution. 
In \cref{fig:pano}, {\tt photomontage/pano} instance, we report $79\%$ solution completeness, but most of these $79\%$ correspond to trivial forbidden labels in the problem (very big unary costs). At the same time other methods perform even worse. This problem has hard interaction constraints. It seems that hard constraints and ambiguous solutions pose difficulties to all methods including ours.
\begin{figure}[t]
\centering
\setlength{\tabcolsep}{0pt}
\begin{tabular}{ll}
\begin{tabular}{c}
\labelgraphics[width=0.45\linewidth]{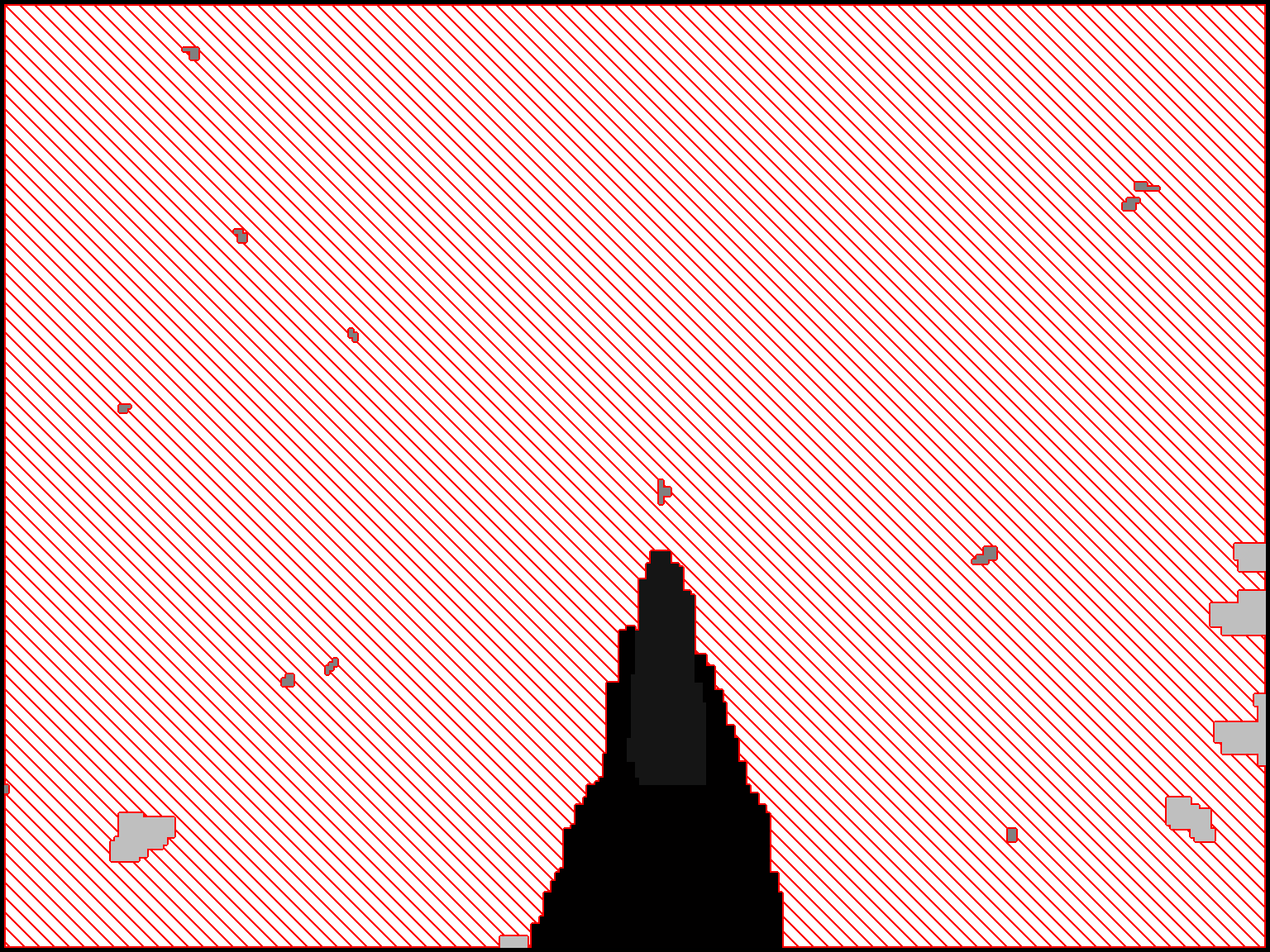}{\Kovtun: 1s, 5.6\%}%
\end{tabular}&\
\begin{tabular}{c}
\setlength{\fboxsep}{0pt}%
\fbox{\labelgraphics[width=0.45\linewidth]{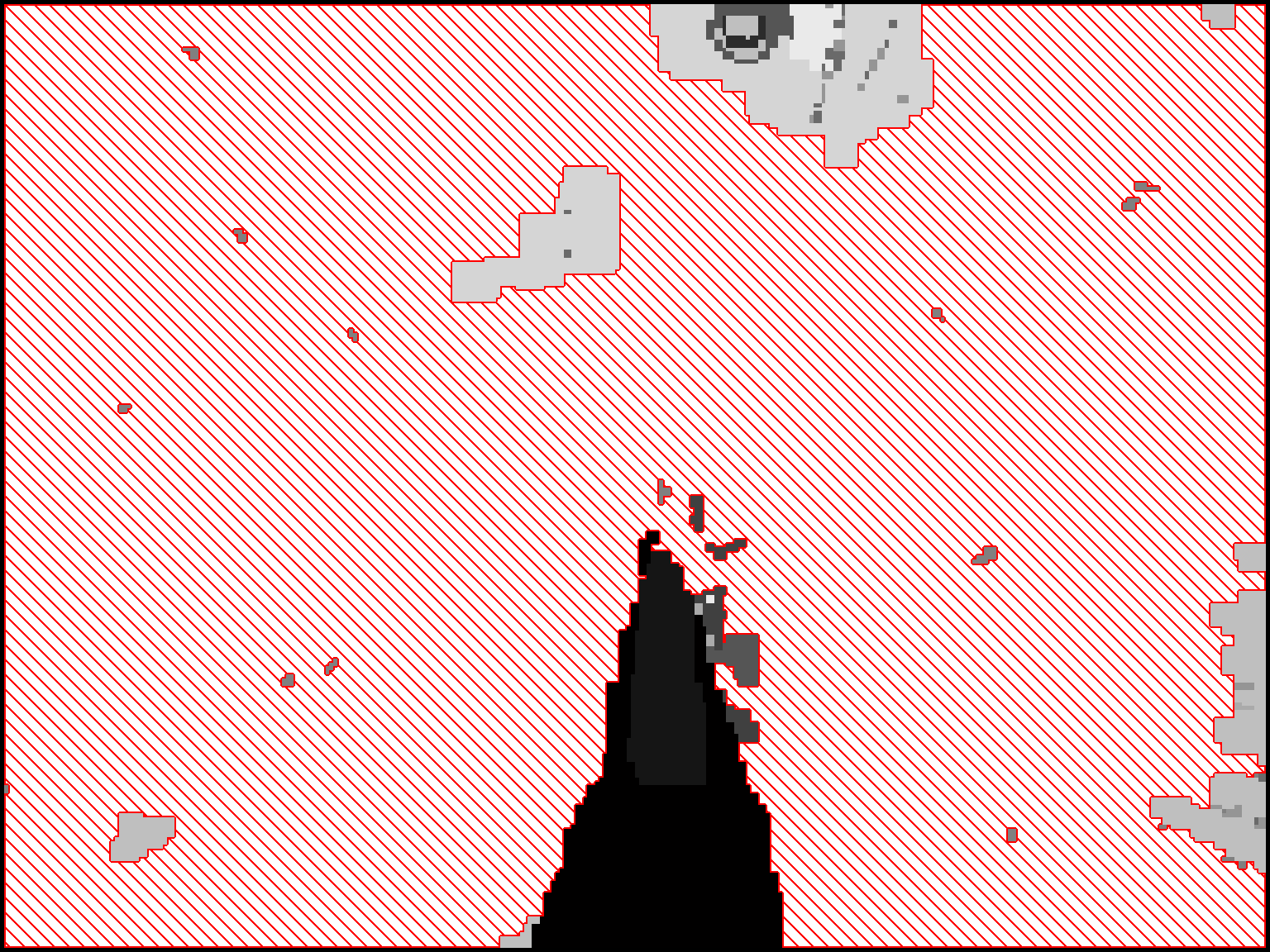}{\PBPTRWS: 13 min 10.43\%}}%
\end{tabular}
\\
\begin{tabular}{c}
\labelgraphics[width=0.45\linewidth]{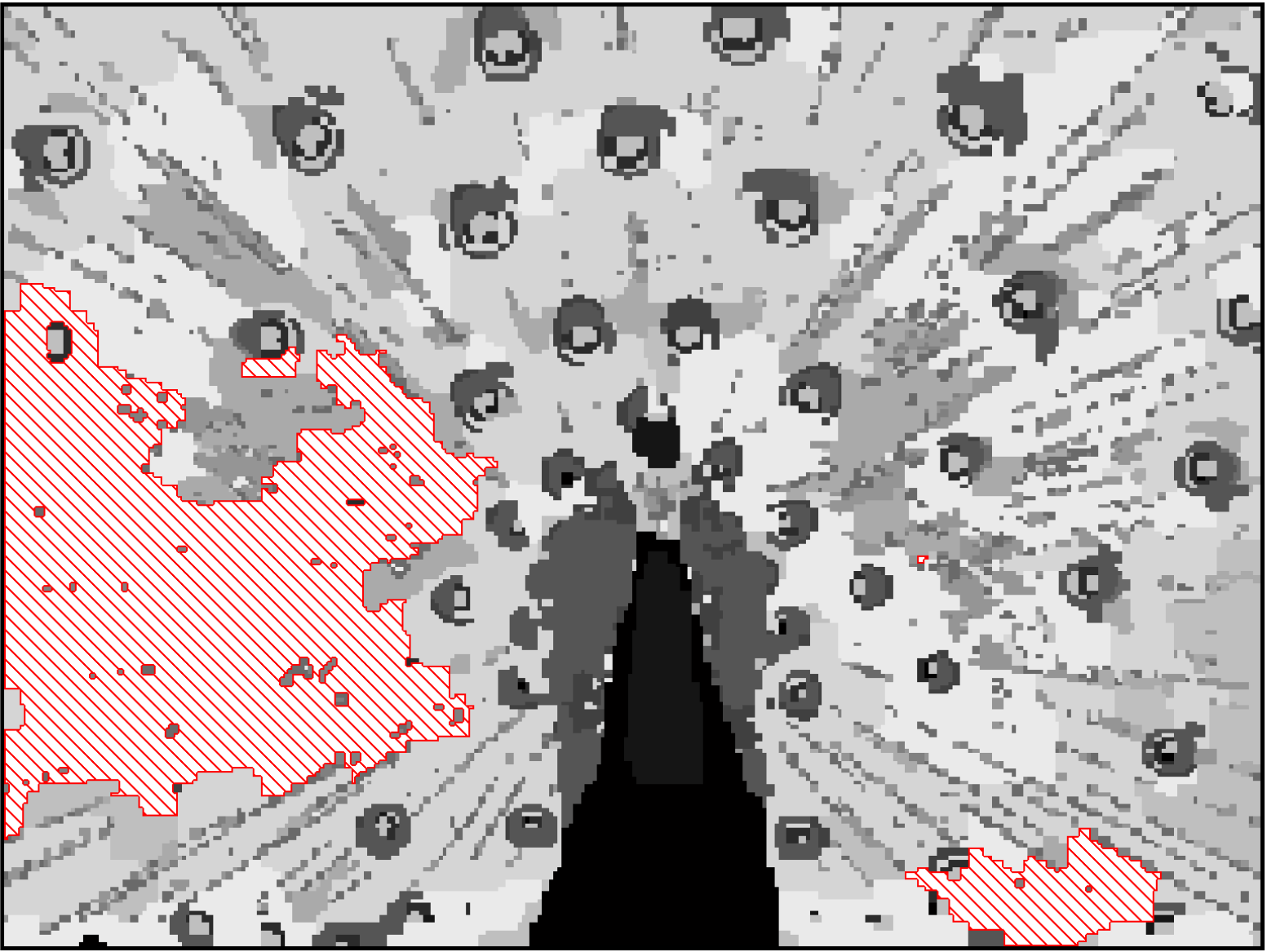}{Ours: 10+12s, 93.4\%}
\end{tabular}&\ 
\begin{tabular}{c}
\labelgraphics[width=0.5\linewidth]{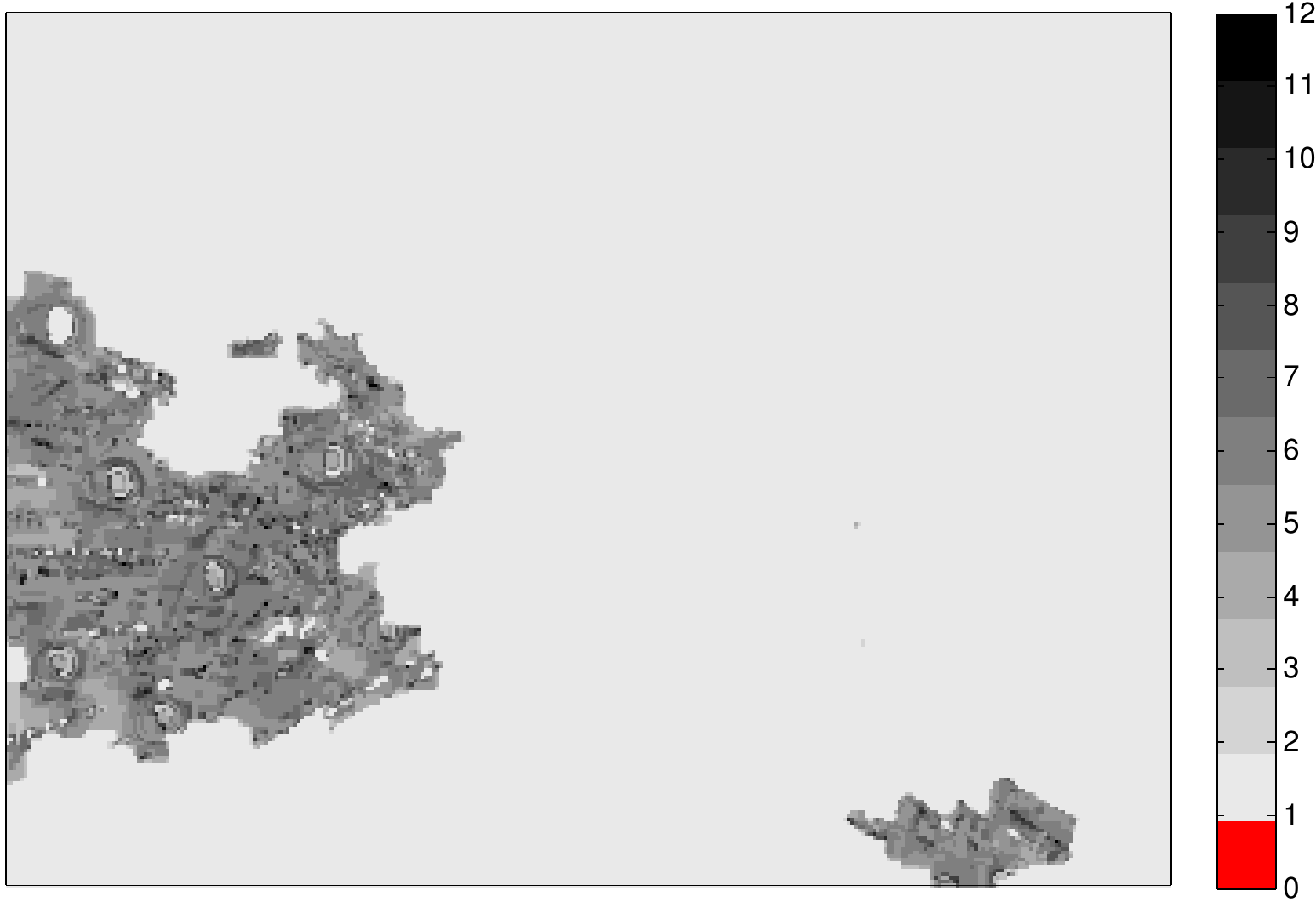}{Ours, remainder}
\end{tabular}
\end{tabular}
\caption{Performance on a hard segmentation problem. The remainder of the problem visualizes $|p_u(X_u)|$ for all pixels.
The result of the improved method \PBPTRWSO is the same as \PBPTRWS, because this is a Potts model.
\label{fig:pfau}}
\end{figure}

\begin{figure}[t]
\begin{tabular}{p{\linewidth}}
\begin{tabular}{C{0.333\linewidth}C{0.333\linewidth}C{0.333\linewidth}}
\labelgraphics[width=0.98\linewidth]{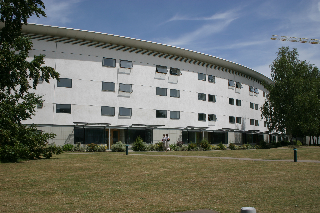}{\tt object-seg}&
\labelgraphics[width=0.98\linewidth]{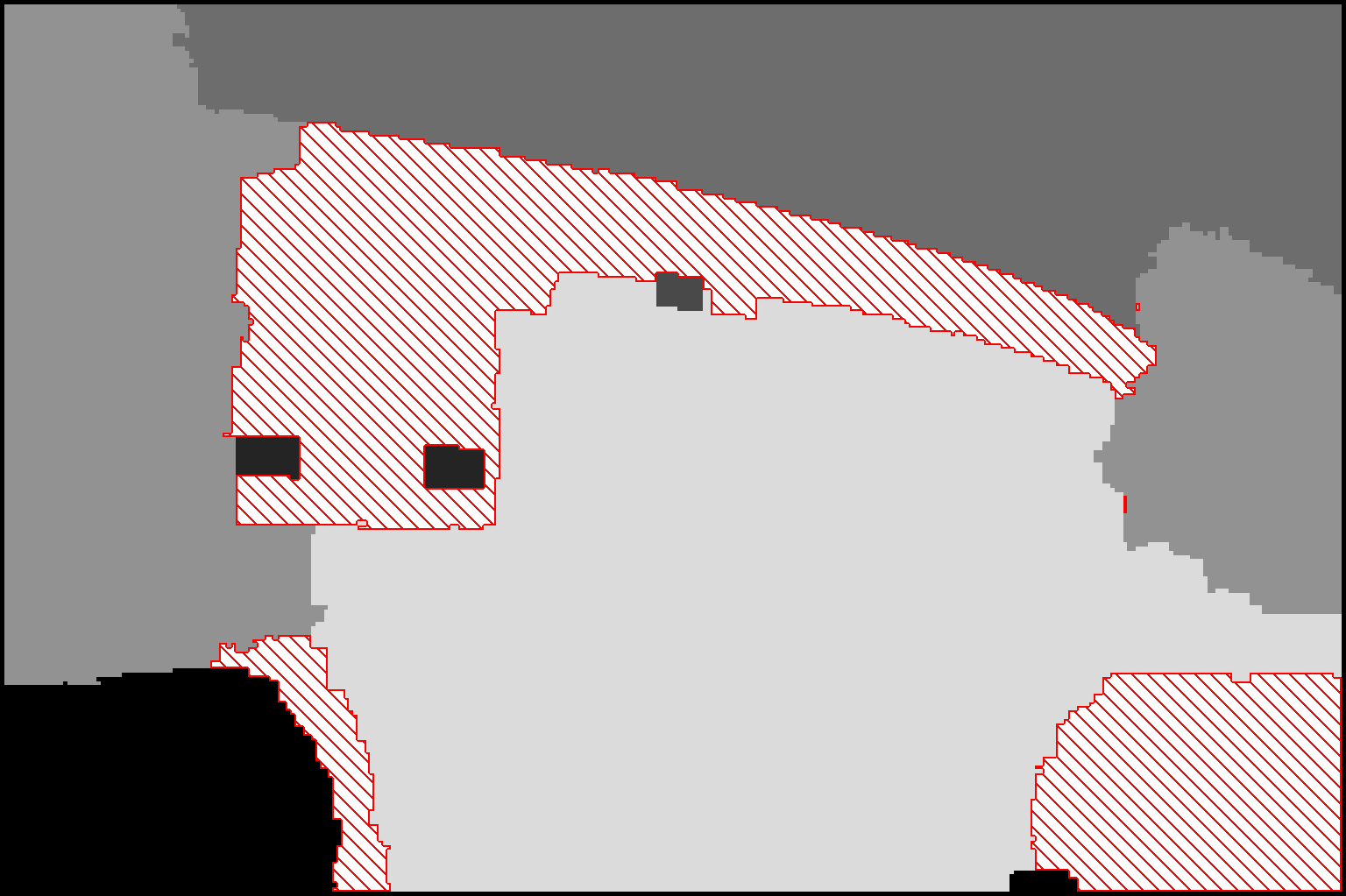}{\cite{Kovtun03}: 0.2s, 80.5\%}&
\labelgraphics[width=0.98\linewidth]{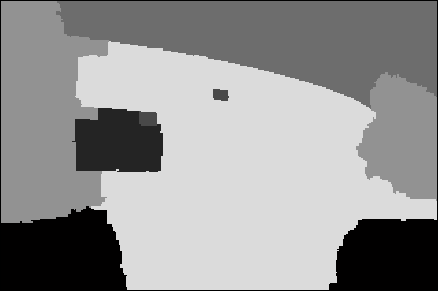}{Ours: 1.4s, 100\% \protect\\ (LP-tight)}
\end{tabular}
\end{tabular}
\caption{Examples of an easy problem, {\tt object-seg}. TRWS finds the optimal solution (zero integrality gap) and therefore our method as well as \PBPTRWS, \PBPTRWSO report 100\%. \label{fig:easy}
}
\end{figure}

\begin{figure}[t]
\centering
\begin{tabular}{C{0.45\linewidth}C{0.45\linewidth}}
\labelgraphics[width=0.98\linewidth]{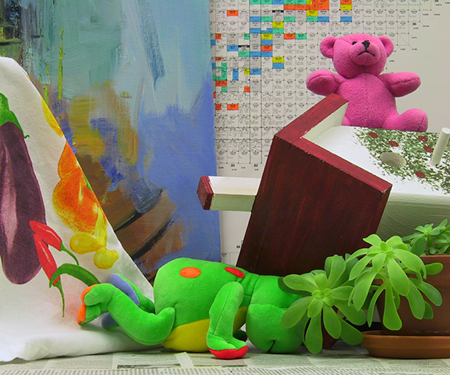}{\tt mrf-stereo}&
\labelgraphics[width=0.98\linewidth]{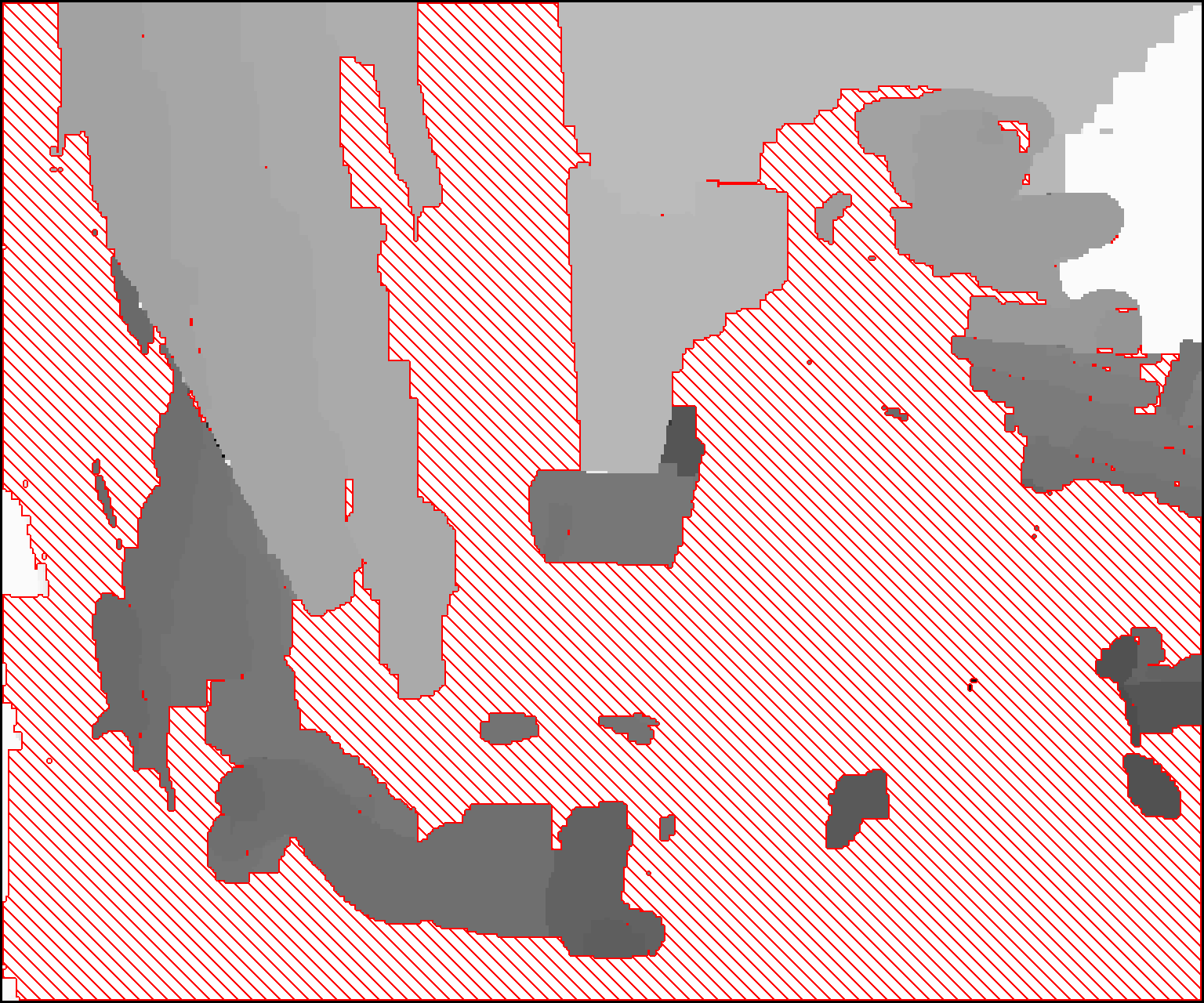}{\PBPTRWS: 1h, 38\%}\\
\labelgraphics[width=0.98\linewidth]{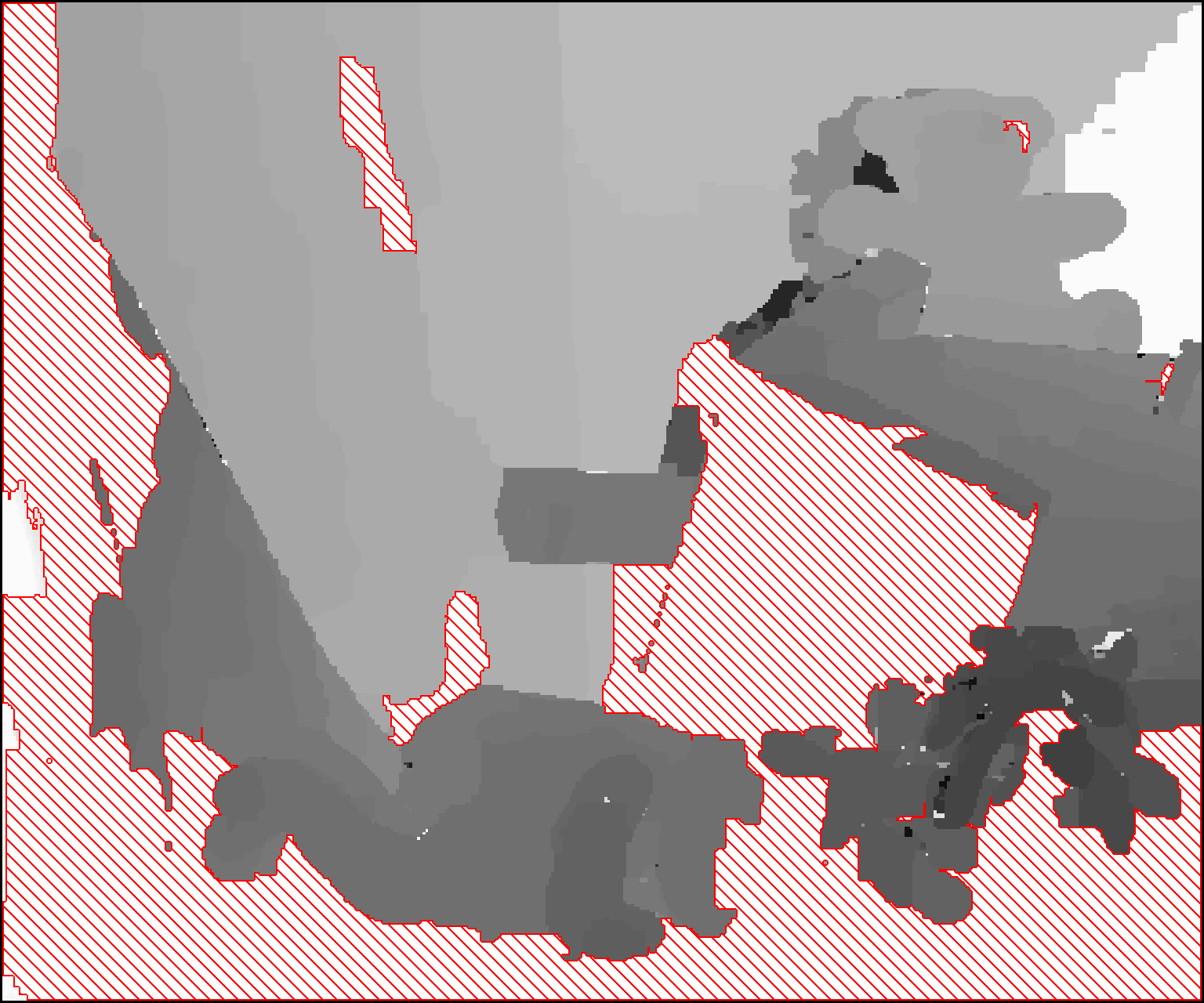}{Ours: 62+180s, 75\%}&
\labelgraphics[width=0.98\linewidth]{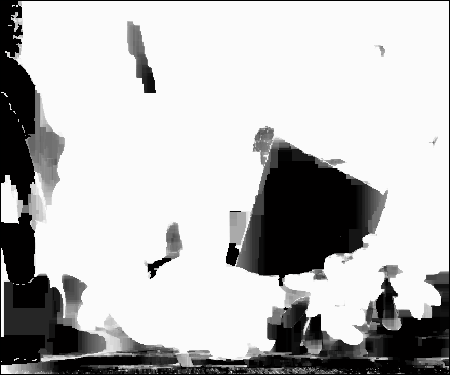}{Ours, remainder}
\end{tabular}
\caption{Examples of a hard stereo problem, {\tt ted-gm}. Method of \Kovtun gives 0.42\% in 2.5s and is therefore not displayed. The result \PBPTRWSO is the same as \PBPTRWS, because this is a Potts model too.
\label{fig:ted}
}
\end{figure}

\begin{figure*}[t]
\begin{tabular}{c}
\vspace{2pt}
\labelgraphics[width=0.2\linewidth]{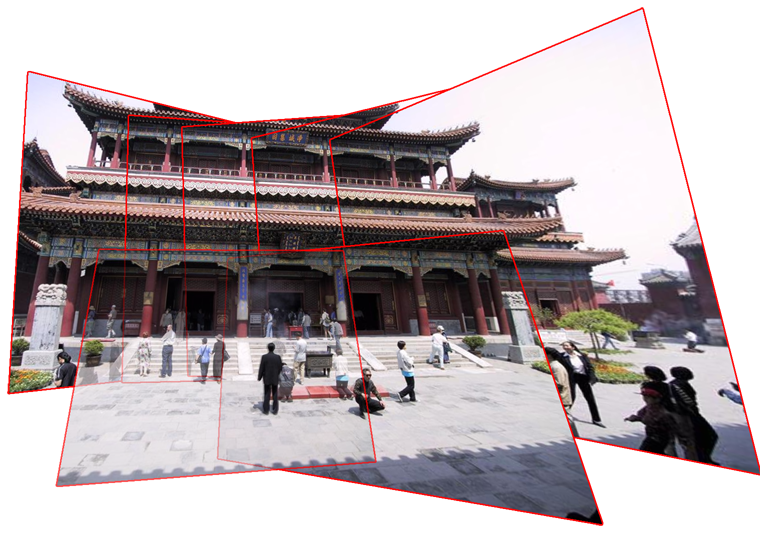}{\tt mrf-photomontage}
\end{tabular}\ %
\begin{tabular}{c}
\labelgraphics[width=0.25\linewidth]{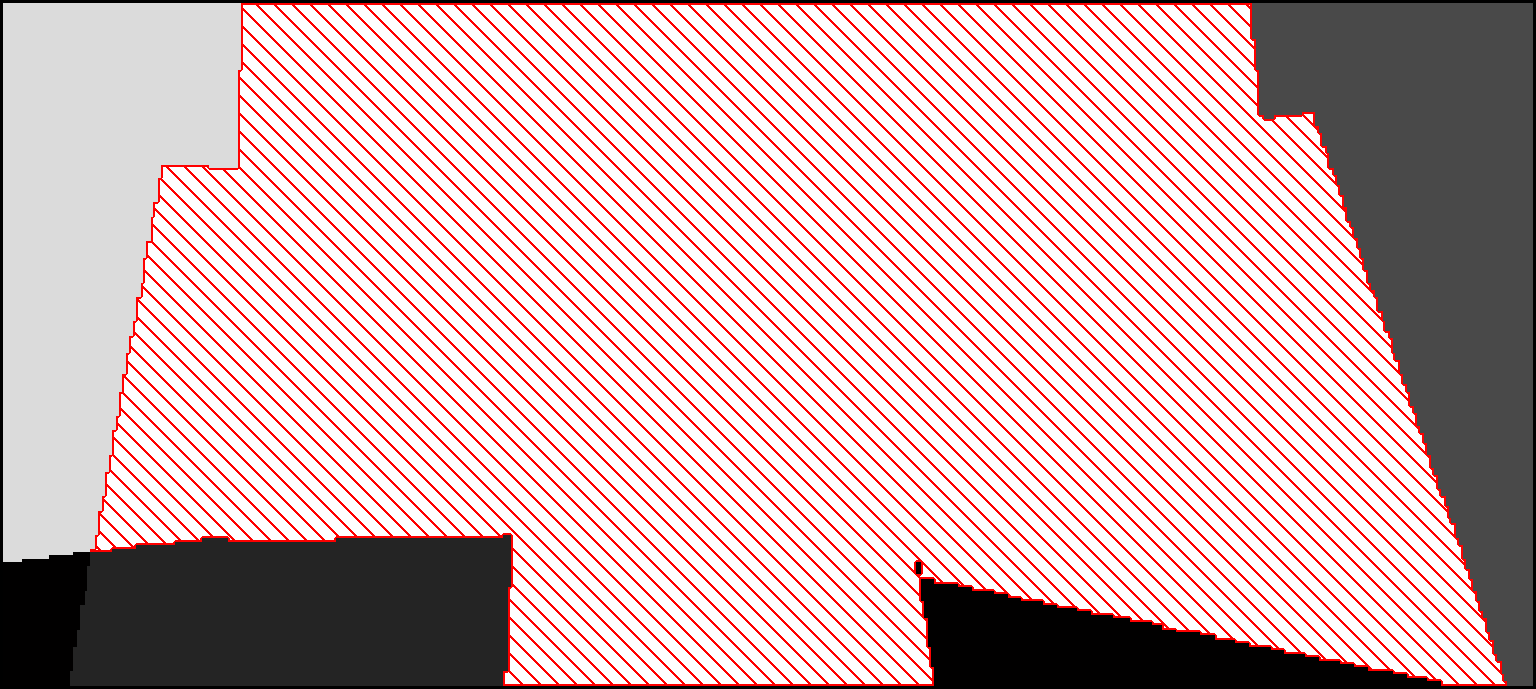}{\citet{Kovtun03}: 0.5s, 27.5\%}\\%
\end{tabular}\ %
\begin{tabular}{c}
\labelgraphics[width=0.25\linewidth]{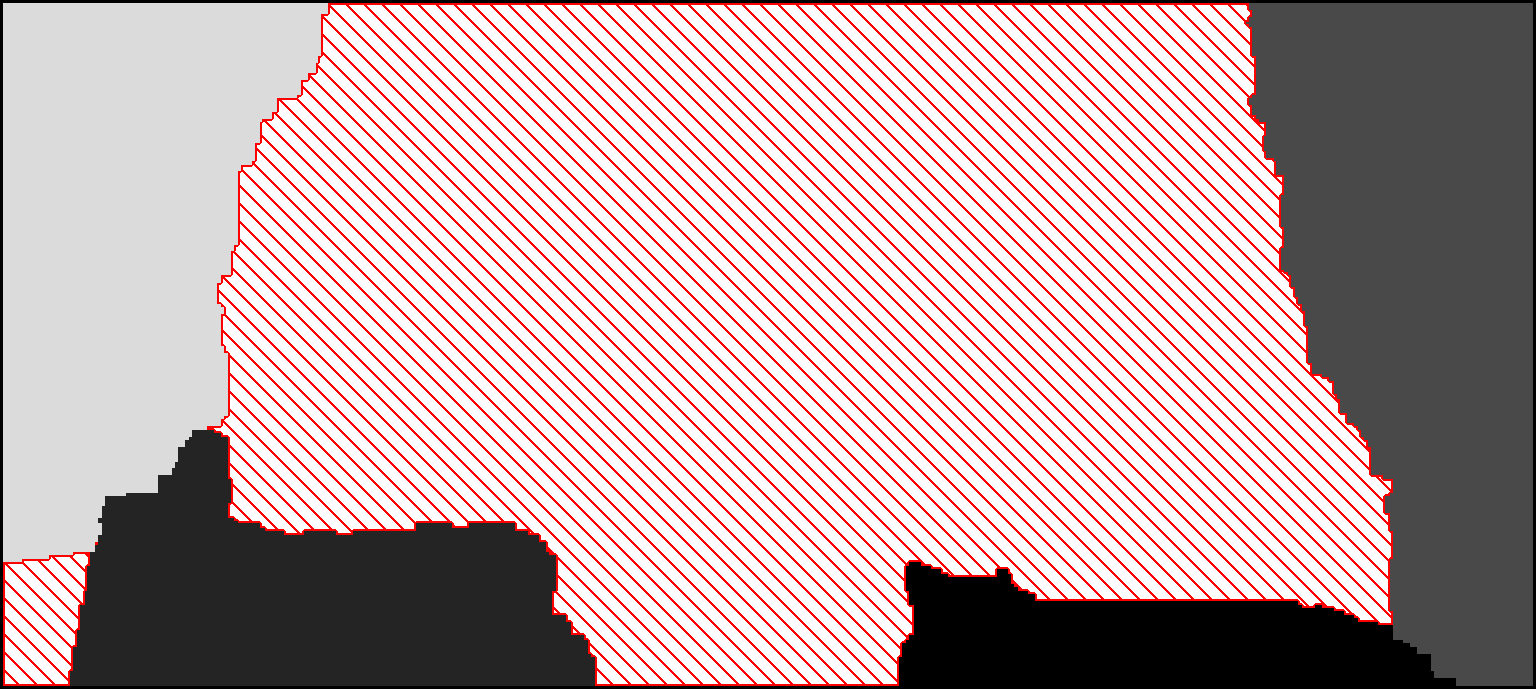}{\PBPTRWSO: 3h, 37.75\%}\\%
\end{tabular}\ %
\begin{tabular}{c}
\labelgraphics[width=0.28\linewidth]{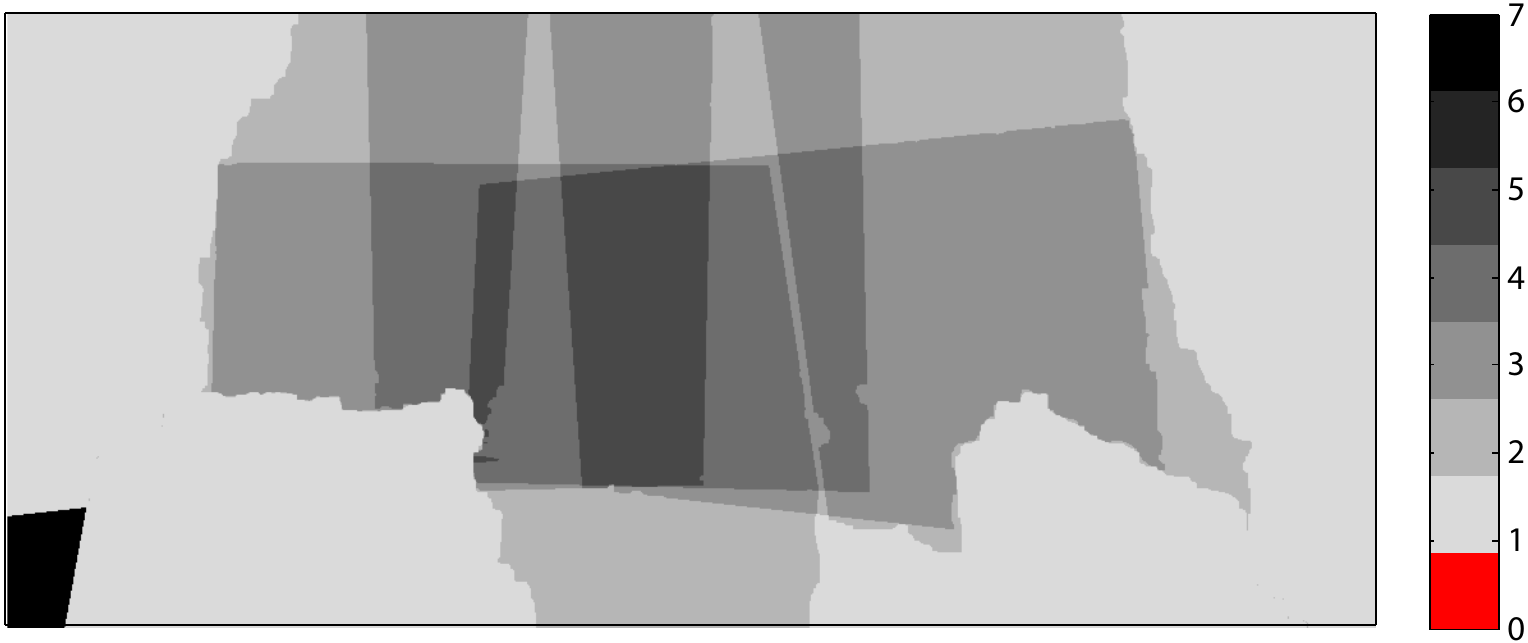}{Ours, remainder: 130+390s, 79.2\%}
\end{tabular}
\caption{Example of a very hard instance, {\tt photomontage}. The number of views covering a pixel is usually smaller than the total number of views  = number of labels. The $79\%$ by our method mostly originate from the elimination of these redundant labels. The result of \PBPTRWS gives $27.55\%$ (visually same as \citet{Kovtun03}) in 3.9h. \PBPTRWSO	improves on this due to choosing the optimal reparametrization~\cite{Swoboda-PAMI-16}. Note that our partial labeling (the part with one label remaining) is larger.
\label{fig:pano}
}
\end{figure*}
\section{Conclusions and Outlook}\label{sec:conclusions}

We presented an approach to find persistencies for an exp-APX-complete problem employing only solvers for a convex relaxation. Using a suboptimal solver for the relaxed problem, we still correctly identify persistencies while the whole approach becomes scalable.
Our method with an exact solver matches the maximum persistency~\cite{shekhovtsov-14} and with a suboptimal solver closely approximates it, outperforming state of the art persistency techniques~\cite{SwobodaPersistencyCVPR2014,PartialOptimalityInMultiLabelMRFsKohli,Kovtun-10}.
The speedups we have developed allow to achieve this at a reasonable computational cost making the method much more practical than the works~\cite{shekhovtsov-14,SwobodaPersistencyCVPR2014} we build on.
In fact, our approach takes an approximate solver, like TRW-S, and turns it into a method with partial optimality guarantees at a reasonable computational overhead.
\par
We believe that many of the presented results can be extended to higher order graphical models and tighter relaxations. Practical applicability with other approximate solvers can be explored.
A further research direction that seems promising is mixing different optimization strategies such as persistency and cutting plane methods.


\section*{Acknowlegement}
\small
Alexander Shekhovtsov was supported by the Austrian Science Fund (FWF) under the START project BIVISION, No. Y729.
Paul Swoboda and Bogdan Savchynskyy were supported by the German Research Foundation (DFG) within the program ``Spatio-/Temporal Graphical Models and Applications in Image Analysis'', grant GRK 1653. Bogdan Savchynskyy was also supported by European Research Council (ERC) under the European Union’s Horizon 2020 research and innovation program (grant agreement No 647769).

{\small
\bibliographystyle{apa}
\bibliography{bib/strings,bib/persistence-nips2014,bib/books,bib/optim,bib/max-plus-en,bib/kiev-en}
}
\clearpage
\appendices
\addtocontents{toc}{\protect\setcounter{tocdepth}{2}}
\pagestyle{plain}
\section{Proofs}
\subsection*{Proofs of the Generic Algorithms}\label{sec:A-main}
\SIcomponents*
\begin{proof}
Direction $\Rightarrow$.
Let $p\in\BS_f$. Assume for contradiction that $(\exists v\in\SV\ \ \exists i\in\O^*_v)\ \ p_v(i) \neq i$. Since $i\in\O^*_v$ there exists $\mu\in\O^*$ such that $\mu_{v}(i)>0$. Its image $\mu' = [p]\mu$ has $\mu_{v}(i) = 0$ due to $p_v(i) \neq i$ by evaluating the extension~\eqref{P-transpose}. This contradicts $[p]\mu=\mu$.
\par
Direction $\Leftarrow$.
Now let $(\forall v\in\SV\ \ \forall i\in\O^*_v)\ \ p_v(i) = i$. Clearly, $[p]\mu = \mu$ holds for all $\mu$ on the support set given by $(\O_v^* \mid v\in\V)$, hence for $\O^*$. It remains to show that the value of the minimum in~\eqref{equ:linearStrictLPImprovingMapping} is zero. For $\mu\in\O^*$ we have $[p]\mu=\mu$ and the objective in~\eqref{equ:linearStrictLPImprovingMapping}, $\<(I-[p])\T f,\mu \> = \<f, \mu-[p]\mu \>$ vanishes.
\end{proof}

%

\TmaxMapping*
\begin{proof}
The two following lemmas form a basis for the proof.
\begin{lemma}[\cite{shekhovtsov-14-TR}, Thm. 3(b)]\label{A1-C1-0}
Let $q\in \BS_ f$; $q\leq p$ and let $\O^* = \argmin_{\mu\in\Lambda} \< (I-[p])^\top f, \mu\>$, \ie, as in line~\ref{alg1:test-p} of Algorithm~\ref{alg:iterative-LP}. \\Then
$
{(\forall \mu\in \O^*)\ \ [q] \mu = \mu}
$.
\end{lemma}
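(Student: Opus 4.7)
The plan is to combine the defining property of $q\in \BS_f$ with a composition identity for comparable substitutions in $\SP^{2,y}$, and then to show that any $\bar\mu\in \O^*$ automatically minimizes the $q$-verification LP as well. The second condition in~\cref{def:linearStrictLPImprovingMapping} applied to $q$ will then force $[q]\bar\mu=\bar\mu$.

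First I would establish the composition identity $p\circ q=p$ (hence $[p][q]=[p]$) for any $q\le p$ in $\SP^{2,y}$. Recalling that $q\le p$ is equivalent to $\Y^q_v\subset \Y^p_v$ for all $v$, a short case analysis suffices: if $i\in \Y^q_v\subset \Y^p_v$, then $q_v(i)=y_v$ and $p_v(y_v)=y_v=p_v(i)$ because $y_v\notin \Y^p_v$ is fixed by $p_v$; if $i\notin \Y^q_v$, then $q_v(i)=i$ and $p_v(q_v(i))=p_v(i)$ trivially. I would also record the routine fact that $[q]\Lambda\subset \Lambda$: since $[q]$ acts on marginals by aggregation over preimages of $q$, it preserves the marginalization equalities and non-negativity of the local polytope (and hence of any $\Lambda\supset \M$ built analogously).

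Now fix $\bar\mu\in \O^*$ and consider $[q]\bar\mu\in \Lambda$. Using $[p][q]=[p]$, I compute
\[
\<(I-[p])^\top f,[q]\bar\mu\>
=\<f,[q]\bar\mu-[p][q]\bar\mu\>
=\<f,[q]\bar\mu-[p]\bar\mu\>
=\<(I-[p])^\top f,\bar\mu\>-\<(I-[q])^\top f,\bar\mu\>.
\]
The subtracted term is non-negative by the first condition of \cref{def:linearStrictLPImprovingMapping} applied to $q\in \BS_f$, so $[q]\bar\mu$ attains an objective in the $p$-verification LP no larger than that of $\bar\mu$. But $\bar\mu$ is a minimizer and $[q]\bar\mu\in \Lambda$, so the two values must coincide, which forces $\<(I-[q])^\top f,\bar\mu\>=0$. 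Since $q\in \BS_f$ the minimum of $\<(I-[q])^\top f,\cdot\>$ over $\Lambda$ equals $0$, hence $\bar\mu$ is a minimizer of the $q$-verification LP, and the second condition of \cref{def:linearStrictLPImprovingMapping} gives $[q]\bar\mu=\bar\mu$.

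The main obstacle is identifying the right composition identity $[p][q]=[p]$; without it the display above does not telescope into the difference of the $p$- and $q$-verification objectives. Once that identity (together with the elementary $[q]\Lambda\subset \Lambda$) is in hand, the remainder is a short squeeze argument: optimality of $\bar\mu$ in $\O^*$ together with the non-negativity built into the definition of $q\in \BS_f$ pins $\bar\mu$ as a $q$-minimizer, and the fixed-point clause of $\BS_f$ concludes.
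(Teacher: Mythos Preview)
Your argument is correct. The key identity $[p][q]=[p]$ for $q\le p$ is exactly the right lever; the paper itself invokes this same identity (stated as $PQ=P$) in the proof of \cref{T:q-B-f-g}, so your approach is fully aligned with the framework. Note, however, that the paper does not prove \cref{A1-C1-0} at all: it is quoted as \cite[Thm.~3(b)]{shekhovtsov-14-TR} and used as a black box inside the proof of \cref{thm:maxMapping}. So there is no in-paper proof to compare against; what you have written is a clean self-contained justification.

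One minor caveat: your closure claim $[q]\Lambda\subset\Lambda$ is asserted for ``any $\Lambda\supset\M$ built analogously'', which is slightly informal. For the local polytope this is immediate from the block structure of $[q]$, and the paper tacitly relies on it throughout (cf.\ the explicit hypothesis $Q(\Lambda')\subset\Lambda'$ in \cref{lemma:LPrefinement}); but for a truly arbitrary outer bound $\Lambda$ it would have to be stated as an assumption rather than derived.
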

\par
In the case of substitutions from the class $\SP^{2,y}$, the statement additionally simplifies as follows.
\begin{corollary}\label{A1-C1} Assume conditions of \autoref{A1-C1-0} and additionally, let 
$q \in \SP^{2,y}$ and $\O^*_v := \{i\in\X_v \mid (\exists \mu\in \O^*)\ \mu_v(i)>0 \}$ then 
\begin{equation}
(\forall v\in\V, \forall i\in \O^*_v)\ \ q_v(i) = i.
\end{equation}
\end{corollary}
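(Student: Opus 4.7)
The corollary is essentially an unpacking of \autoref{A1-C1-0} in the special case of the subset-to-one class. My plan is to start from the conclusion of \autoref{A1-C1-0}, namely $[q]\mu = \mu$ for every $\mu\in\O^*$, and interpret this equation coordinate-wise at the single node $v$. Using the definition of the linear extension in \eqref{P-transpose}, the action on a node block is $([q]\mu)_v(k) = \sum_{j\in\X_v:\ q_v(j)=k}\mu_v(j)$, so the equation $[q]\mu=\mu$ specializes to $\mu_v(k) = \sum_{j:\,q_v(j)=k}\mu_v(j)$ for every $k\in\X_v$.

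Next I would exploit the subset-to-one form $q\in\SP^{2,y}$. Writing $q_v$ via the substitution set $\Y_v^{q}\subset\X_v\setminus\{y_v\}$ as in~\eqref{equ:pixelWiseMapping}, a direct case analysis of the preimages under $q_v$ gives:
\begin{itemize}
\item if $k=y_v$, then $\{j:q_v(j)=k\} = \{y_v\}\cup \Y_v^{q}$, so the equation reduces to $\sum_{j\in \Y_v^{q}}\mu_v(j)=0$;
\item if $k\notin \Y_v^{q}\cup\{y_v\}$, then $\{j:q_v(j)=k\}=\{k\}$ and the equation is trivial;
\item if $k\in \Y_v^{q}$, then $\{j:q_v(j)=k\}=\emptyset$ and the equation reads $\mu_v(k)=0$.
\end{itemize}
Combined with the nonnegativity of $\mu$ on $\Lambda$, all three cases yield $\mu_v(k)=0$ for every $k\in \Y_v^{q}$ and every $\mu\in\O^*$.

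To finish, take $i\in\O^*_v$. By definition of $\O^*_v$ there is some $\mu\in\O^*$ with $\mu_v(i)>0$, so by the preceding paragraph $i\notin \Y_v^{q}$, and the definition~\eqref{equ:pixelWiseMapping} then forces $q_v(i)=i$. I do not anticipate a real obstacle: the only slightly delicate step is reading off the matrix entries of the node block of $[q]$ from~\eqref{P-transpose} correctly (transposing the relation $(P^\top f)_u(i)=f_u(q_u(i))$), and everything else is bookkeeping on the preimages of $q_v$.
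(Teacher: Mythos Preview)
Your proof is correct and rests on the same observation as the paper's: from $[q]\mu=\mu$ and the node-block formula $([q]\mu)_v(k)=\sum_{j:q_v(j)=k}\mu_v(j)$, any label $i$ with $q_v(i)\neq i$ has empty preimage (by idempotency, or by the explicit $\SP^{2,y}$ case analysis you give), hence $\mu_v(i)=0$. The paper packages this as a two-line contradiction argument rather than your forward case split, but the content is identical.
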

\begin{proof}
It follows similarly to \autoref{prop:linearStrictLPImprovingMapping}. Assume for contradiction that $(\exists v\in\SV\ \ \exists i\in\O^*_v)\ \ q_v(i) \neq i$. Since $i\in\O^*_v$ there exists $\mu\in\O^*$ such that $\mu_{v}(i)>0$. It's image $\mu' = [q]\mu$ has $\mu_{v}(i) = 0$ due to $q_v(i) \neq i$ by evaluating the extension~\eqref{P-transpose}. This contradicts to $[q]\mu=\mu$, the statement of \autoref{A1-C1-0}. 
\end{proof}
\begin{lemma}\label{A1-P2} Let $p^t$ denote the substitution $p$ computed in line~\ref{alg1:init-p} of \Algorithm{alg:iterative-LP} on iteration $t$. The algorithm maintains the invariant that $(\forall q\in \BS_ f \cap\SP^{2,y})$\ \ $q \leq p^t$. 
\end{lemma}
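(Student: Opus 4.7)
The plan is to prove the invariant by induction on the iteration counter $t$, using Corollary~\ref{A1-C1} as the main workhorse. Recall that every substitution $p\in\SP^{2,y}$ is determined by the sets $\Y_v$ via~\eqref{equ:pixelWiseMapping}, and that the order $p\geq q$ translates into $\Y_v^p\supseteq \Y_v^q$ for all $v$ (since $p_v(\X_v)=\X_v\setminus\Y_v^p$ when $y_v\notin\Y_v^p$). Thus the invariant to be proved is: after iteration $t$, every $q\in\BS_f\cap\SP^{2,y}$ satisfies $\Y_v^q\subseteq\Y_v^{p^t}$ for all $v\in\V$.

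For the base case $t=0$, \cref{alg1:init} sets $\Y_v^{p^0}=\X_v\setminus\{y_v\}$, which by definition of $\SP^{2,y}$ contains $\Y_v^q$ for every $q\in\SP^{2,y}$. For the inductive step, assume the invariant holds for $p^t$ and fix an arbitrary $q\in\BS_f\cap\SP^{2,y}$. By the inductive hypothesis, $q\leq p^t$ in the order of~\cref{def:order}. Moreover $q\in\BS_f$, so Corollary~\ref{A1-C1} applies to the pair $(p^t,q)$ with the support sets $\O^*_v$ computed in line~\ref{alg1:O_v}, yielding
\begin{equation}\label{P2-key}
(\forall v\in\V)(\forall i\in\O^*_v)\ \ q_v(i)=i.
\end{equation}
Since $q\in\SP^{2,y}$ satisfies $q_v(i)=i$ exactly when $i\notin\Y_v^q$, the statement~\eqref{P2-key} is equivalent to $\O^*_v\cap \Y_v^q=\emptyset$ for every $v$.

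The update in \cref{alg1:update-U} defines $\Y_v^{p^{t+1}}=\Y_v^{p^t}\setminus \O^*_v$. Combining with the inductive hypothesis $\Y_v^q\subseteq \Y_v^{p^t}$ and the disjointness $\Y_v^q\cap \O^*_v=\emptyset$ just established, we obtain $\Y_v^q\subseteq \Y_v^{p^t}\setminus\O^*_v=\Y_v^{p^{t+1}}$, which proves the invariant for iteration $t+1$. Since $q$ was arbitrary, this completes the induction.

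The only non-routine ingredient is the application of Corollary~\ref{A1-C1}, whose hypothesis $q\leq p^t$ is precisely the inductive hypothesis; once that is in place, the argument reduces to tracking inclusions of the substitution sets. I do not anticipate additional obstacles, since the order on $\SP^{2,y}$ is defined through set inclusion and the pruning step modifies $\Y_v^{p^t}$ exactly by removing $\O^*_v$, which Corollary~\ref{A1-C1} certifies to be disjoint from every relevant $\Y_v^q$.
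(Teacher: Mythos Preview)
Your proof is correct and follows essentially the same approach as the paper: induction on the iteration counter, with Corollary~\ref{A1-C1} supplying the key fact that every $q\in\BS_f\cap\SP^{2,y}$ with $q\leq p^t$ fixes all labels in $\O^*_v$. The only cosmetic difference is that you argue directly via the set inclusions $\Y_v^q\subseteq\Y_v^{p^t}\setminus\O^*_v$, whereas the paper phrases the inductive step as a proof by contradiction on the existence of a label $i$ with $p^{t+1}_v(i)=i$ but $q_v(i)\neq i$; the logical content is identical.
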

\begin{proof}
We prove by induction. The statement holds trivially for the first iteration. Assume it is true for the current iteration $t$. Then for any $q\in \BS_ f \cap \SP^{2,y}$ holds $q \leq p^t$ and therefore \autoref{A1-C1} applies. 
We can show that line~\ref{alg1:update-U} only prunes substitutions that are not in $\BS_ f \cap \SP^{2,y}$ as follows.

Let $p^{t+1}$ be the substitution on the next iteration, \ie computed by line~\ref{alg1:init-p} after pruning line~\ref{alg1:update-U}. 

Assume for contradiction that $\exists q \in \BS_ f \cap \SP^{2,y}$ such that $q \not\leq p^{t+1}$. By negating the definition and expanding,
\begin{subequations}
\begin{align}
(\exists v\in\SV)\ \ & p^{t+1}_v(\X_v) \not\subseteq q_v(\X_v), \\
\Leftrightarrow  (\exists v\in\SV\ \exists i\in \X_v)\ \ & i\in p^{t+1}_v(\X_v) \ \ \wedge \ \ i\not\in q_v(\X_v),\\
\label{A1-L2-eq1b}
\Leftrightarrow (\exists v\in\SV\ \exists i\in \X_v)\ \ & p^{t+1}_v(i)=i \ \ \wedge \ \ q_v(i)\neq i.
\end{align}
\end{subequations}
If $i$ was pruned in line~\ref{alg1:update-U}, $i\in\O_v^*$, then it must be that $q_v(i) = i$, which contradicts to~\eqref{A1-L2-eq1b}. Therefore 
\begin{equation}
(\exists v\in\SV\ \exists i\in \X_v\backslash \O_v^*)\ \ p^{t+1}_v(i)=i \ \ \wedge \ \ q_v(i)\neq i.
\end{equation}
However, in this case $p^{t+1}_v(i) = p^{t}_v(i) = i$ and $q \leq p^t$ fails to hold, which contradicts to the assumption of induction. Therefore $q \leq p^{t+1}$ holds by induction on every iteration.
\end{proof}
By \Proposition{A1-P1} the algorithm terminates and returns a substitution in $\BS_ f \cap \SP^{2,y}$. By Lemma~\ref{A1-P2} the returned substitution $p$ satisfies $p \geq q$ for all $q \in \BS_ f\cap \SP^{2,y}$. 
It is the maximum.
\end{proof}

\propstrictCSisAC*
\begin{proof}
 Condition $\O_v(\varphi) = \O_v^*$ implies that $\varphi$ satisfies strict complementarity with some primal optimal solution $\mu$. 
The strict complementarity implies that $(\forall i\in\X_v)$ $( f^{\varphi}_{u}(i) = 0 \Rightarrow \mu_{u}(i) >0)$. By feasibility of $\mu$, there must hold $(\forall {v}\in\N({u}))\ (\exists j\in\X_{v})\ \mu_{uv}(i,j)>0$. And by using complementary slackness again, it must be that $ f^{\varphi}_{uv}(i,j) = 0$. Similarly, the second condition of arc consistency is verified. It follows that $f^{\varphi}$ is arc consistent. 
\end{proof}
\subsection*{\vskip-2em\noindent Proofs of the Reduction}\label{sec:reduction-p2}
The proof of the reduction \Theorem{T:reduction 1-2} and \Lemma{lemma:LPrefinement} (used in speed-up heuristics) requires several intermediate results.
Recall that a correct pruning can be done when we have a guarantee to preserve all strictly improving substitutions $q$, assuming $q \leq p$. Therefore statements in this section are formulated for such pairs.
We will consider adjustments to the cost vector that preserve the set of strictly improving substitutions. These adjustments do not in general preserve optimal solutions to the associated LP relaxation.
\par
\begin{lemma}\label{T:q-B-f-g}
Let $q\leq p$. Then $q\in\BS_f$ iff $q\in\BS_g$ for $g = (I-[p])\T f$.
\end{lemma}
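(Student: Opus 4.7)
The plan is to reduce both sides of the equivalence to literally the same verification LP by showing that for every $\mu \in \Lambda$
\[
\langle (I-[q])^\top g, \mu \rangle \;=\; \langle (I-[q])^\top f, \mu \rangle.
\]
Once this pointwise equality of objectives is in hand, the minimum value of~\eqref{equ:linearStrictLPImprovingMapping-a} over $\Lambda$ and its set of minimizers coincide under $f$ and under $g$, so both parts of~\eqref{equ:linearStrictLPImprovingMapping} hold for $q$ under one cost vector iff they hold under the other, i.e.\ $q\in\BS_f \Leftrightarrow q\in\BS_g$.

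The real content is the algebraic identity $(I-[p])(I-[q]) = I-[q]$, which reduces to $[p][q] = [p]$, and I would first establish this at the level of substitutions by showing $p\circ q = p$. Using Definition~\ref{def:order}, the assumption $q\le p$ in $\SP^{2,y}$ is equivalent to $\Y^q_v \subseteq \Y^p_v$ for every $v\in\V$ (the labels substituted by $q$ form a subset of those substituted by $p$). Then for any $i\in \X_v$ I would split into two cases: if $i\in \Y^q_v$, then $q_v(i)=y_v$ and, since $i\in\Y^q_v\subseteq \Y^p_v$, also $p_v(y_v)=y_v=p_v(i)$; if $i\notin \Y^q_v$, then $q_v(i)=i$ and $p_v(q_v(i))=p_v(i)$ trivially. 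In both cases $p_v(q_v(i))=p_v(i)$, so $p\circ q = p$.

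Next I would lift this to the linear extensions. A direct check of the adjoint formula~\eqref{P-transpose} shows that linear extensions respect composition, $[p\circ q] = [p][q]$: for any cost vector $h$,
\[
\bigl([q]^\top [p]^\top h\bigr)_u(i) = ([p]^\top h)_u(q_u(i)) = h_u\bigl(p_u(q_u(i))\bigr),
\]
and analogously for the edge and constant blocks, which matches the adjoint of $[p\circ q]$. Combined with $p\circ q = p$ this yields $[p][q] = [p]$, and therefore
\[
(I-[p])(I-[q]) \;=\; I-[p]-[q]+[p][q] \;=\; I-[q].
\]
Substituting this into $\langle (I-[q])^\top g,\mu\rangle = \langle f,(I-[p])(I-[q])\mu\rangle$ gives the required objective equality. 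The only delicate point is the substitution-level identity $p\circ q = p$, which is specific to the class $\SP^{2,y}$ and the $\leq$ ordering; once it is in place, the rest is immediate linear algebra.
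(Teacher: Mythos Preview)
Your proof is correct and follows essentially the same route as the paper: the key identity $(I-[p])(I-[q]) = I-[q]$, derived from $[p][q]=[p]$ (equivalently $p\circ q = p$), yields the pointwise equality $\langle g,(I-[q])\mu\rangle = \langle f,(I-[q])\mu\rangle$, from which the equivalence of the two verification conditions is immediate. You simply spell out in more detail what the paper asserts in one line (``Since $q\leq p$ there holds $PQ=P$''), verifying $p\circ q = p$ at the substitution level and that linear extensions respect composition.
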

\begin{proof}
Let $Q = [q]$, $P = [p]$. Since $q\leq p$ there holds $PQ = P$. It implies $(I-P)(I-Q) = (I-Q)$.
Therefore, 
\begin{align}\label{q-BSfg}
&\<g,(I-Q)\mu\> = \<(I-P)\T f,(I-Q)\mu\> \\
&= \<f,(I-P)(I-Q)\mu\> = \<f,(I-Q)\mu\>.
\end{align}
Assume $\mu\in\Lambda$ is such that $Q \mu \neq \mu$. Equality~\eqref{q-BSfg} ensures that $\<g,(I-Q)\mu\> > 0$ iff $\<f,(I-Q)\mu\> > 0$. The theorem follows from definition of $\BS_f$, $\BS_g$.
\end{proof}
To reformulate the condition $q \in \BS_f$ we will use the following dual characterization.
\begin{theorem}[Characterization~\cite{shekhovtsov-14-TR}]\label{T:char}
Let $P = [p]$. Then 
\begin{align}
(\forall \mu\in\Lambda)\ \<f, P \mu \> \leq \<f,\mu\>
\end{align}
iff there exists a reparametrization $\varphi$ such that
\begin{align}\label{dual-improve-comp}
P\T f^\varphi \leq f^\varphi.
\end{align}
\end{theorem}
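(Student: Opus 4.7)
My plan is to handle the two implications separately. The ``$\Leftarrow$'' direction is a direct chain: for any $\mu\in\Lambda$,
\[
\<f, P\mu\> = \<f^\varphi, P\mu\> = \<P\T f^\varphi, \mu\> \leq \<f^\varphi, \mu\> = \<f, \mu\>,
\]
using that $P$ maps $\Lambda$ into itself (easy to verify on marginals, as the $P_u$ blocks are $0/1$ matrices), that reparametrization preserves scalar products on $\Lambda$, and that $\mu \geq 0$ turns the componentwise inequality $P\T f^\varphi \leq f^\varphi$ into the inner-product inequality in the middle step.

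For the harder direction ``$\Rightarrow$'', I would apply strong LP duality to the verification LP $\min_{\mu\in\Lambda}\<g,\mu\>$ with $g := (I-P\T)f$, then exploit a zero-pattern of $g$ on fixed labels. Writing $Z_u := \{i\in\X_u : p_u(i)=i\}$, idempotency of $p_u$ forces $Z_u \supseteq p_u(\X_u) \neq \emptyset$, and a direct calculation gives $g_u(i)=0$ for $i\in Z_u$ and $g_{uv}(i,j)=0$ for $(i,j)\in Z_u\times Z_v$. Consequently every labeling $y$ in the set of fixed labelings $F := \prod_u Z_u$ satisfies $\<g,\delta(y)\>=0$; combined with the hypothesis $\<g,\mu\>\geq 0$ on $\Lambda$, the LP optimum is exactly $0$, and strong duality produces a reparametrization $\psi^*$ of $g$ with $g^{\psi^*}\geq 0$ and $g^{\psi^*}_\emptyset=0$.

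The key step, which I expect to be the main subtlety, is choosing a primal optimum whose support forces $g^{\psi^*}$ to vanish on all fixed indices. I would use the symmetric average $\mu^* := |F|^{-1}\sum_{y\in F}\delta(y)$; because $p$ is node-wise, $F$ is a Cartesian product, so $\mu^*_u(i) > 0$ for every $i\in Z_u$ and $\mu^*_{uv}(k,l) > 0$ for every $(k,l)\in Z_u\times Z_v$. Complementary slackness between the optimal pair $(\mu^*, \psi^*)$ then forces $g^{\psi^*}_u(i) = 0$ on $Z_u$ and $g^{\psi^*}_{uv}(k,l) = 0$ on $Z_u \times Z_v$.

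Finally, setting $\varphi := \psi^*$, I would verify $(I-P\T)f^\varphi \geq 0$ componentwise by expanding the reparametrization formulas. The unary piece collapses to $g^{\psi^*}_u(i) - g^{\psi^*}_u(p_u(i))$; since $p_u(i) \in Z_u$ the second term vanishes, leaving $g^{\psi^*}_u(i) \geq 0$. The pairwise piece reduces to $g^{\psi^*}_{uv}(i,j) + \psi^*_{uv}(p_u(i)) + \psi^*_{vu}(p_v(j))$, and the bracketed sum equals $-g^{\psi^*}_{uv}(p_u(i),p_v(j)) = 0$ by the same complementary-slackness vanishing on $Z_u \times Z_v$. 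Without the broad support of $\mu^*$ only a weaker conclusion could be drawn from $\psi^*$, so the combinatorial choice of $\mu^*$ enabled by the node-wise structure of $p$ is the real crux of the argument.
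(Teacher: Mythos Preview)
Your proof is correct. The paper does not actually prove this theorem; it is only stated in the appendix with a citation to the technical report~\cite{shekhovtsov-14-TR}, and then used as a black box in the proofs of Lemmas~\ref{T:reduction 1} and~\ref{T:reduction 2}. So there is no ``paper's own proof'' to compare against here.

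Your argument is the natural one. The $\Leftarrow$ direction is immediate once $P(\Lambda)\subseteq\Lambda$ is noted, and your $\Rightarrow$ direction via strong LP duality on the verification LP with cost $g=(I-P\T)f$ is exactly the expected route. The only point worth a small remark is that in step~6 you need complementary slackness to hold between \emph{your chosen} $\mu^*$ and \emph{any} dual optimum $\psi^*$; this is fine because complementary slackness holds between every primal optimal and every dual optimal solution, but it would be worth saying explicitly, since you first produce $\psi^*$ from strong duality and only afterwards introduce $\mu^*$. Your computation in step~7 is correct: the unary difference $f^\varphi_u(i)-f^\varphi_u(p_u(i))$ indeed equals $g^{\psi^*}_u(i)-g^{\psi^*}_u(p_u(i))$ (the $\varphi_u$ terms cancel and $g_u(p_u(i))=0$), and the pairwise identity $\psi^*_{uv}(p_u(i))+\psi^*_{vu}(p_v(j)) = -g^{\psi^*}_{uv}(p_u(i),p_v(j))$ follows from $g_{uv}(p_u(i),p_v(j))=0$. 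The averaging trick $\mu^*=|F|^{-1}\sum_{y\in F}\delta(y)$ to get full support on $\prod_u Z_u$ is clean and indeed relies essentially on $p$ being node-wise.
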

The following lemma assumes an arbitrary substitution $q$, not necessarily in $\SP^{2,y}$ and takes as input sets $U_u$ that are subsets of {\em immovable} labels. In the context of~\Theorem{T:reduction 1-2}, we will use $U_u = \X_u \backslash \Y_u$.
\begin{lemma}[Reduction 1]\label{T:reduction 1}
For a substitution $q$ let $U_u \subset \{i\in\X_u \mid q_u(i) = i\}$ for all $u\in\V$. Let $g_{uv}(i,j) = 0$ for all $(i,j) \in U_u\times U_v$ and let $\bar g$ be defined by \begin{subequations}\label{equ:reductionTheorem-copy}
 \begin{align}
	&\bar g_v = g_v,\ v \in\SV;\\
	\label{bar-g-uv}
	&\overline g_{uv}(i,j)= \left\{
	\setlength{\arraycolsep}{0pt}
  \begin{array}{ll}
   \min\limits_{i' \in U_u} g_{uv}(i',j), &\ \ i\in U_u, j\notin U_v\,,\\
   \min\limits_{j' \in U_v} g_{uv}(i,j'), &\ \ i\notin U_u, j \in U_v\,,\\
	 g_{uv}(i,j), &\ \ \mbox{otherwise}.
  \end{array}
\right.
 \end{align}
\end{subequations}
Then $q\in\BS_g$ iff $q\in \BS_{\bar g}$.
\end{lemma}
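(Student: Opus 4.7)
My plan is to establish both directions of $q\in\BS_g\iff q\in\BS_{\bar g}$ via the adjoint reformulation
\[
q\in\BS_g\iff \<g,(I-Q)\mu\>\ge 0\ \forall\mu\in\Lambda,\ \text{with equality only when }Q\mu=\mu,
\]
where $Q=[q]$. I start by inspecting the pointwise difference $h:=g-\bar g$. From~\eqref{equ:reductionTheorem-copy} together with the hypothesis $g_{uv}(i,j)=0$ on $U_u\times U_v$, one sees that $h$ has only pairwise components, $h\ge 0$ everywhere, and $h_{uv}(i,j)\ne 0$ only on the ``L-shaped'' region where exactly one of $i,j$ belongs to the corresponding $U$-set; there $h$ equals the positive truncation gap $g_{uv}(i,j)-\Delta_{vu}(j)$ or the symmetric expression.

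For the direction $(\Leftarrow)$ the plan is to decompose
\[
\<g,(I-Q)\mu\>=\<\bar g,(I-Q)\mu\>+\<h,(I-Q)\mu\>
\]
and show that the second summand is non-negative on $\Lambda$. The hypothesis $U_u\subset\mathrm{Fix}(q_u)$ forces $q_u(i)=i$ on the support of $h$, so only the $v$-coordinate can move an entry out of the fixed set of $Q_{uv}$. Grouping the contribution of $h$ over edges $uv$ and labels $j$, I plan to use the marginalization constraint $\sum_{i\in U_u}\mu_{uv}(i,j)\le\mu_v(j)$ together with the min-definition of $\Delta_{vu}(j)$ to rewrite this sum in a manifestly non-negative form. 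The strictness statement then follows because any $\mu$ with $\<g,(I-Q)\mu\>=0$ satisfies $\<\bar g,(I-Q)\mu\>=0$ in particular (since both summands are $\ge 0$), whence $Q\mu=\mu$ by hypothesis.

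For the direction $(\Rightarrow)$ I would invoke \cref{T:char} to obtain a reparametrization $\varphi$ with $Q\T g^\varphi\le g^\varphi$, and construct $\varphi'$ such that $Q\T\bar g^{\varphi'}\le\bar g^{\varphi'}$ by shifting $\varphi_{uv}(i)$ only for $i\in U_u$ to absorb the gaps $h_{uv}(i,j)$. The key is that $g$, and hence $\bar g$, vanishes on $U_u\times U_v$, so the shift does not touch the fixed-point block on which $Q\T$ acts as identity; the pairwise inequality $Q\T\bar g^{\varphi'}\le\bar g^{\varphi'}$ then reduces to the original $Q\T g^\varphi\le g^\varphi$ on the complementary block, where $\bar g=g$. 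Another invocation of \cref{T:char} gives the non-strict improving property of $\bar g$, and the equality-iff-$Q$-invariance condition transfers through the bijection between zero-sets, since the shift is trivial on $U_u\times U_v$.

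The main technical obstacle will be in direction $(\Leftarrow)$: the entries $(i,j)$ with $i\in U_u$, $j\notin U_v$ but $q_v(j)=j$ contribute $h_{uv}(i,j)>0$ paired with $((I-Q)\mu)_{uv}(i,j)\le 0$, producing negative terms that must be cancelled against positive contributions at non-fixed $j'$ with $q_v(j')=j$. Handling these cancellations cleanly requires grouping the sum label-by-label over $v$ and exploiting both the marginalization identities $\sum_{j'}\mu_{uv}(i,j')=\mu_u(i)$ of $\Lambda$ and the fact that $\Delta_{vu}$ is attained at some $i^\star\in U_u$; this is where the structural hypothesis on $U_u$ is indispensable.
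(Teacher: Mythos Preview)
For direction $(\Rightarrow)$ your plan is in the right spirit but more complicated than necessary: the paper keeps the \emph{same} reparametrization $\varphi$ and observes that for $i\in U_u$, $j\notin U_v$ the inequality $g^\varphi_{uv}(i,q_v(j))\le g^\varphi_{uv}(i,j)$ survives cancelling $\varphi_{uv}(i)$ and then taking $\min_{i'\in U_u}$ of both sides, which gives the required inequality for $\bar g^\varphi$ directly. No shift $\varphi\to\varphi'$ is needed.

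For direction $(\Leftarrow)$ the ``main technical obstacle'' you flag is genuine, but the regrouping you propose cannot close it, because the inequality $\<h,(I-Q)\mu\>\ge 0$ is false under the stated hypotheses. Take a single edge $uv$ with $\X_u=\{1,2,3\}$, $\X_v=\{1,2\}$, $U_u=\{1,2\}$, $U_v=\{1\}$, $q_u\colon 3\mapsto 1$ (identity on $1,2$), $q_v=\mathrm{id}$, zero unaries, and
\[
g_{uv}(1,1)=g_{uv}(2,1)=g_{uv}(2,2)=0,\qquad g_{uv}(3,1)=1,\qquad g_{uv}(1,2)=5,\qquad g_{uv}(3,2)=3.
\]
All hypotheses of the lemma hold. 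Here $h=g-\bar g$ is supported only at $(1,2)$ with $h_{uv}(1,2)=5$, and for $\mu=\delta(3,2)$ one has $((I-Q)\mu)_{uv}(1,2)=-1$, so $\<h,(I-Q)\mu\>=-5$. Your cancellation scheme requires some $j'\neq j$ with $q_v(j')=j$ to supply positive contributions, but $q_v$ is the identity, so there are none. In fact this example gives $q\in\BS_{\bar g}$ (one computes $\<\bar g,(I-Q)\mu\>=\mu_{uv}(3,1)+3\mu_{uv}(3,2)$) while $q\notin\BS_g$ (since $\<g,(I-Q)\delta(3,2)\>=3-5<0$). The paper's pointwise auxiliary inequality $h_{uv}(q_u(i),q_v(j))\le h_{uv}(i,j)$ also fails at $(i,j)=(3,2)$, where it reads $5\le 0$; the paper's proof only checks positions with $h_{uv}(i,j)\neq 0$ and overlooks this case. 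So neither your argument nor the paper's establishes $(\Leftarrow)$ at the stated level of generality.
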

\begin{proof}

\par
Direction $\Leftarrow$. Let us verify the following inequality:
\begin{align}
\notag
(\forall ij\in\X_{uv})\ & g_{uv}(q_{u}(i),q_v(j))-\bar g_{uv}(q_{u}(i),q_v(j))\\
\label{bar g aux}
 & \leq g_{uv}(i,j)-\bar g_{uv}(i,j).
\end{align}
We need to consider only cases where $\bar g_{uv}(i,j) \neq g_{uv}(i,j)$. Let $i\in U_u$ and $j\notin U_v$ (the remaining case is symmetric). In this case $q_u(i)=i$. Substituting $\bar g$ we have to prove
\begin{align}
& g_{uv}(i,q_v(j)) - \min\limits_{i' \in U_u} g_{uv}(i',q_v(j)) \\
\notag
 \leq & g_{uv}(i,j) - \min\limits_{i' \in U_u} g_{uv}(i',j).
\end{align}
The left hand side is zero because all the respective components of $g$ are zero by assumption. At the same time the right hand side is non-negative since $i\in U_u$. The inequality~\eqref{bar g aux} holds. 
It implies (by multiplication with pairwise components of $\mu$ and using the equality of unary components of $g$ and $\bar g$) that
\begin{align}\label{bar g aux Q}
(\forall \mu\in\Lambda)\ \<g, Q \mu\> - \<\bar g, Q \mu\> \leq \<g, \mu\> - \<\bar g, \mu\>,
\end{align}
where $Q = [q]$. Note, cost vector $\bar g$ satisfying~\eqref{bar g aux Q} is called {\em auxiliary} for $g$ in~\cite{Kovtun-10,shekhovtsov-phd}. Inequality~\eqref{bar g aux Q} is equivalent to
\begin{align}\label{bar g aux mu}
\<\bar g, (I-Q) \mu\> \leq  \<g, (I - Q) \mu\>.
\end{align}
Whenever the left hand side of~\eqref{bar g aux mu} is strictly positive then so is the right hand side and therefore 
from $q\in\BS_{\bar g}$ follows $q \in \BS_{g}$.
\par
Direction $\Rightarrow$. Assume $q \in \BS_{g}$. 
By \Theorem{T:char}, there exist dual multipliers $\varphi$ such that 
$g^\varphi$ verifies inequality~\eqref{dual-improve-comp}, in components: 
\begin{align}\label{g' components}
&(\forall u\in\V,\ \forall i\in\X_{u})\ g^\varphi_u(q_u(i)) \leq g^\varphi_u(i);\\
\notag
&(\forall uv\in\SE,\ \forall ij\in\X_{uv})\ g^\varphi_{uv}(q_u(i),q_v(j)) \leq g^\varphi_{uv}(i,j).
\end{align}
Let us expand the pairwise inequality in the case $i\in U_u$, $j\notin U_v$. Let $q_v(j) = j^*$. Using $q_u(i)=i$ we obtain
\begin{align}
\notag
g_{uv}(i,j^*) - \varphi_{uv}(i)-\varphi_{vu}(j^*) 
& \leq g_{uv}(i,j) - \varphi_{uv}(i)-\varphi_{vu}(j);\\
g_{uv}(i,j^*) -\varphi_{vu}(j^*) & \leq g_{uv}(i,j) -\varphi_{vu}(j).
\end{align}
We take $\min$ over $i\in U_u$ of both sides:
\begin{align}
\min_{i\in U_u} g_{uv}(i,j^*) -\varphi_{vu}(j^*) \leq \min_{i\in U_u} g_{uv}(i,j) -\varphi_{vu}(j).
\end{align}
Finally we subtract $\varphi_{uv}(i)$ on both sides and obtain
\begin{align}
\bar g^\varphi_{uv}(i,j^*) \leq \bar g^\varphi_{uv}(i,j).
\end{align}
The case when $i\notin U_u$, $j\in U_v$ is symmetric. In the remaining cases, $\bar g^\varphi_{uv}(i,j) = \bar g(i,j) - \varphi_{uv}(i)-\varphi_{vu}(j) = g(i,j) - \varphi_{uv}(i)-\varphi_{vu}(j) = g^\varphi(i,j)$. In total, $\bar g^\varphi $ satisfies all component-wise inequalities that does $g^\varphi$ in~\eqref{g' components}. By \Theorem{T:char},
\begin{align}\label{bar g weak improving}
(\forall \mu\in\Lambda)\ \<\bar g, Q \mu\> \leq \<\bar g, \mu\>.
\end{align}
We have shown that $\<\bar g, (I-Q) \mu\> \geq 0$. It remains to prove that the inequality holds strictly when $Q \mu \neq \mu$. 
Since $q\in\BS_g$, there holds $\<g,\mu\> < \<Q\T g,\mu\>$. It is necessary that at least one of unary or pairwise inequalities~\eqref{g' components} from the support of $\mu$ holds strictly in which case inequality~\eqref{bar g weak improving} is also strict.
\end{proof}
\begin{lemma}[Reduction 2]\label{T:reduction 2}
For a substitution $q$ and cost vector $g$ let $\bar g = g - \Delta^+$, where $\Delta^+ \in\Real^\SI_+$  has zero unary components and its pairwise components read:
\begin{align}
\Delta^+_{uv}(i,j) = \max\big\{0,\ & g_{uv}(i,j)+ g_{uv}(q_u(i),q_v(j))\\
\notag
& - g_{uv}(i,q_v(j)) - g_{uv}(q_u(i),j)\big\}.
\end{align}
Then $q\in\BS_g$ iff $q\in \BS_{\bar g}$.
\end{lemma}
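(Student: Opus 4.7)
\medskip
\noindent\textbf{Proof plan.} The plan is to mirror the two-direction strategy of Lemma~\ref{T:reduction 1}, using as the main leverage the following \emph{idempotence identity}: because $q_u\circ q_u = q_u$, substituting $(q_u(i), q_v(j))$ into the bracketed expression defining $\Delta^+_{uv}$ collapses all four summands to the single value $g_{uv}(q_u(i), q_v(j))$, so
\[
\Delta^+_{uv}(q_u(i), q_v(j)) = 0 \qquad \text{for all } uv\in\E,\ i\in\X_u,\ j\in\X_v.
\]
Letting $Q=[q]$, this gives $\langle[q]^\top \Delta^+,\mu\rangle = \langle\Delta^+, Q\mu\rangle = 0$ and hence $\langle(I-Q)^\top\Delta^+,\mu\rangle = \langle\Delta^+,\mu\rangle\geq 0$ for every $\mu\in\Lambda$, since $\Delta^+\geq 0$ componentwise and $\mu\geq 0$.

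\medskip
\noindent First I would settle the easy direction $(\Leftarrow)$: $q\in\BS_{\bar g}\Rightarrow q\in\BS_g$. Using $g = \bar g + \Delta^+$ and the identity above,
\[
\langle(I-Q)^\top g,\mu\rangle \;=\; \langle(I-Q)^\top\bar g,\mu\rangle \;+\; \langle\Delta^+,\mu\rangle.
\]
Both summands are nonnegative, and the first is strictly positive whenever $Q\mu\neq\mu$ by assumption, so $\langle(I-Q)^\top g,\mu\rangle>0$ off the fixed-point set, giving $q\in\BS_g$.

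\medskip
\noindent For the reverse direction $(\Rightarrow)$, I would invoke the Characterization Theorem~\ref{T:char}: $q\in\BS_g$ furnishes a reparametrization $\varphi$ with $Q^\top g^\varphi\leq g^\varphi$ componentwise. The same $\varphi$ is shown to work for $\bar g$. Unary entries are untouched by $\Delta^+$, so those inequalities carry over verbatim. For the pairwise inequality $\bar g^\varphi_{uv}(q_u(i),q_v(j))\leq \bar g^\varphi_{uv}(i,j)$, I would use two facts: (i) the idempotence identity implies $\bar g^\varphi_{uv}(q_u(i),q_v(j)) = g^\varphi_{uv}(q_u(i),q_v(j))$; and (ii) the bracketed expression defining $\Delta^+$ is reparametrization-invariant, because each $\varphi$ term appears twice with opposite signs. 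When $\Delta^+_{uv}(i,j)=0$ the inequality is the given one. When $\Delta^+_{uv}(i,j)>0$, substituting its $\varphi$-form reduces the claim to
\[
2\,g^\varphi_{uv}(q_u(i),q_v(j)) \;\leq\; g^\varphi_{uv}(i,q_v(j)) + g^\varphi_{uv}(q_u(i),j),
\]
which is the sum of the given componentwise inequality applied at $(i,q_v(j))$ (using $q_v\circ q_v=q_v$) and at $(q_u(i),j)$. Applying Theorem~\ref{T:char} in reverse then yields $\langle(I-Q)^\top\bar g,\mu\rangle\geq 0$ for all $\mu\in\Lambda$.

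\medskip
\noindent The main obstacle is the strict form of \eqref{equ:linearStrictLPImprovingMapping-a}: showing $\langle(I-Q)^\top\bar g,\mu\rangle>0$ whenever $Q\mu\neq\mu$. I would argue as in Lemma~\ref{T:reduction 1}: decompose the scalar product into unary and pairwise summands, each nonnegative; $\langle(I-Q)^\top g,\mu\rangle>0$ forces at least one of these to be strict on the support of $\mu$. Unary strictness transfers unchanged to $\bar g^\varphi$. The delicate subcase is strict improvement absorbed entirely into $\Delta^+_{uv}(i,j)$ at some $(i,j)$ in the support; here I would exploit LP-feasibility of $\mu$ (so $\mu_{uv}(i,j)>0$ forces $\mu_u(i),\mu_v(j)>0$, hence $\mu$ has integer witnesses in its support through which the strict improvement of $g$ over its fixed-point set must be realized) together with the identity derived above showing that, whenever all $\bar g^\varphi$ componentwise differences vanish on the support, one has $\langle(I-Q)^\top g,\mu\rangle = \langle\Delta^+,\mu\rangle$, and to close the argument I would contradict $q\in\BS_g$ by exhibiting an integer labeling (extracted from $\mu$'s support) where both sides give $0$ while $q(x)\neq x$.
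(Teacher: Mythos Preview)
Your proposal matches the paper's argument closely. Both directions rest on the idempotence identity $\Delta^+_{uv}(q_u(i),q_v(j))=0$; the $\Leftarrow$ direction is the paper's ``auxiliary'' inequality~\eqref{bar g aux} specialized to this $\bar g$, and the $\Rightarrow$ direction invokes Theorem~\ref{T:char} and reduces the pairwise check to $2a\leq b+c$ where $a=g^\varphi_{uv}(q_u(i),q_v(j))\leq b,c$ --- exactly the computation the paper carries out with the abbreviations $a,b,c,d$ and $\bar d=b+c-a$.

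The only divergence is your final paragraph on strictness. The paper dispatches this in one phrase (``considered similarly to Lemma~\ref{T:reduction 1}''): from $q\in\BS_g$ and $Q\mu\neq\mu$ at least one componentwise inequality $g^\varphi_\omega(q(\cdot))\leq g^\varphi_\omega(\cdot)$ on the support of $\mu$ is strict, and that strictness carries to the corresponding $\bar g^\varphi$ term in the nonnegative decomposition. Your proposed alternative route --- extracting an integer labeling $x$ from the support of $\mu$ with $q(x)\neq x$ and claiming ``both sides give $0$'' to contradict $q\in\BS_g$ --- does not work as written: nothing forces $\langle(I-Q)^\top g,\delta(x)\rangle=0$ for such an $x$; on the contrary, $q\in\BS_g$ makes this quantity strictly positive whenever $q(x)\neq x$, so no contradiction emerges along that path. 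Drop this detour and argue componentwise as in the paper.
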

\begin{proof}
The scheme of the proof is similar to~\autoref{T:reduction 1}. The unary components of $g$ and $\bar g$ are equal. If we show inequality~\eqref{bar g aux}, the implication $q\in\BS_{\bar g}$ $\Rightarrow$ $q \in \BS_{g}$ will follow as in~\autoref{T:reduction 1}.
For our $\bar g$, inequality~\eqref{bar g aux} reduces to 
\begin{align}
\Delta^+_{uv}(q_u(i),q_v(j)) \leq \Delta^+_{uv}(i,j)
\end{align}
and due to idempotency of $q$ the left hand side is identically zero. Therefore inequality~\eqref{bar g aux} is verified.
\par
Direction $\Rightarrow$. Assume $q \in \BS_{g}$. By \Theorem{T:char}, there exist dual multipliers $\varphi$ satisfying inequalities~\eqref{g' components}.
Consider
\begin{align}
\bar g^\varphi = (g - \Delta^+)^\varphi = g^\varphi - \Delta^+.
\end{align}
Let us show that component-wise inequalities~\eqref{g' components} hold for $\bar g^\varphi$. Clearly they hold for unary components and for pairwise components where $\Delta^+_{uv}(i,j) = 0$. Let $uv\in \SE$ and $\Delta^+_{uv}(i,j) > 0$.
Let $i'=q_u(i)$ and $j'=q_u(j)$. It must be that $i' \neq i$ and $j' \neq j$. Let us denote $a=g^\varphi_{uv}(i',j')$, $b = g^\varphi_{uv}(i',j)$, $c = g^\varphi_{uv}(i,j')$ and $d = g^\varphi_{uv}(i,j)$. 
\par
Let $\bar d := g^\varphi_{uv}(i,j) - \Delta^+_{uv}(i,j) = d - (a+d-b-c) = b+c-a$.
From~\eqref{g' components} we have that $a \leq b,c,d$. It follows that $2 a \leq b+c$ or $a \leq b+c - a = \bar d$. We proved that $\bar g^\varphi_{uv}(q_u(i),q_v(j)) \leq \bar g^\varphi_{uv}(i,j)$. 
In total, $\bar g^\varphi$ satisfies all component-wise inequalities, same as $g^\varphi$ in~\eqref{g' components}. By \Theorem{T:char},
it follows that $\<\bar g, (I-Q) \mu\> \geq 0$. The strict inequality in case $Q\mu\neq \mu$ is considered similarly to~\autoref{T:reduction 1}.
\end{proof}
\TReduction*
\begin{proof}
Let $g = (I-P)\T f$. By \Theorem{T:q-B-f-g}, $q\in \BS_f$ iff $q\in \BS_{g}$.
We need to consider only pairwise terms. Let $uv\in\SE$. Since $q\leq p$, if $p_u(i) = i$ then necessarily $q_u(i) = i$. Let $p$ be defined using sets $\Y_u$ as in~\eqref{equ:pixelWiseMapping}. The reduction $\bar g$ in~\eqref{equ:reductionTheorem} will be composed of reductions by \Theorem{T:reduction 1} and \Theorem{T:reduction 2}.
\par
From $g = (I-P)\T f$ we have that for $i\in\X_u\backslash \Y_u$ and $j\in\X_v\backslash \Y_v$ $g_{uv}(i,j) = 0$. Conditions of~\Theorem{T:reduction 1} are satisfied with $U_u = \X_u\backslash \Y_u$. We obtain part of the reduction~\eqref{equ:reductionTheorem} for cases when $i\notin \Y_u$ or $j\notin\Y_v$. Let us denote the reduced vector $\bar g'$.
Applying~\Theorem{T:reduction 2} to it, we obtain $\bar g$ as defined in~\eqref{equ:reductionTheorem}.
\end{proof}
\par
\LPrefinement*
\begin{proof}
Let $\mu\in\O^*$. Assume for contradiction that $Q \mu \neq \mu$. In this case, by \Theorem{T:reduction 1-2}, we have that
$\<\bar g, Q \mu \> < \<\bar g, \mu \>$. Since $\mu\in\Lambda'$ and $Q(\Lambda')\subset \Lambda$ there holds $Q\mu\in\Lambda'$. It follows that $Q \mu$ is a feasible solution of a better cost than $\mu$ which contradicts optimality of $\mu$. It must be therefore that $Q \mu = \mu$. The claim $q(\O^*_v) = O^*_v$ follows.
\end{proof}

\subsection*{\vskip-2em\noindent Termination with arc consistent Solvers}\label{sec:ac-theorems}
\begin{theorem}\label{T:AC-terminates}
Consider the verification LP defined by $g = (I-P\T)f$. Let $g^{\varphi}$ be an arc-consistent reparametrization and let $\Y_u = \{i \mid p(i)\neq i\}$. Then at least one of the two conditions is satisfied:
\begin{itemize}
\item[(a)] $g^\varphi_\emptyset = 0$ and $\varphi$ is dual optimal;
\item[(b)] $(\exists u\in\V)$ \ \ $\O_u(\varphi) \cap \Y_u \neq \emptyset$.
\end{itemize}
\end{theorem}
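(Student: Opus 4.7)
The plan is to assume (b) fails, so that $\O_u(\varphi) \subseteq \X_u \setminus \Y_u$ for every $u \in \V$, and then exhibit a fractional primal $\mu \in \Lambda$ supported entirely on labels and edges active with respect to $\varphi$ with cost $\langle g, \mu\rangle = 0$. Complementary slackness together with cost equivalence will then pin down $g^\varphi_\emptyset = 0$ and dual optimality of $\varphi$, giving (a).

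First I would spell out the zero-structure of $g = (I - [p]^\top) f$. Idempotency of $p$ gives $p_u(i)=i$ whenever $i \notin \Y_u$, so $g_\emptyset = 0$, $g_u(i) = 0$ for $i \notin \Y_u$, and $g_{uv}(i,j) = f_{uv}(i,j) - f_{uv}(p_u(i), p_v(j)) = 0$ for every pair $(i,j) \in (\X_u\setminus \Y_u)\times(\X_v\setminus\Y_v)$. Under the working assumption this applies to every $(i,j) \in \O_u(\varphi) \times \O_v(\varphi)$, so on that set
\begin{align*}
g^\varphi_{uv}(i,j) = -\varphi_{uv}(i) - \varphi_{vu}(j),
\end{align*}
and dual feasibility becomes the linear inequality $\varphi_{uv}(i) + \varphi_{vu}(j) \leq 0$.

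The crucial step is to promote these inequalities to equalities on all of $\O_u \times \O_v$. Arc consistency part (ii) gives, for each $i \in \O_u$, some $j_i \in \X_v$ with $g^\varphi_{uv}(i, j_i) = 0$; part (i) forces $j_i \in \O_v$, so $\varphi_{vu}(j_i) = -\varphi_{uv}(i)$. Combining with the inequality $\varphi_{uv}(i') + \varphi_{vu}(j_i) \leq 0$ for any $i' \in \O_u$ yields $\varphi_{uv}(i') \leq \varphi_{uv}(i)$; swapping $i$ and $i'$ shows $\varphi_{uv}$ is constant on $\O_u$, and by symmetry $\varphi_{vu}$ is constant on $\O_v$ with the two constants summing to zero. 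Consequently $g^\varphi_{uv}(i,j) = 0$ for \emph{every} $(i,j) \in \O_u \times \O_v$.

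It then remains to take $\mu_u$ uniform on $\O_u$ and $\mu_{uv} = \mu_u \otimes \mu_v$, which is feasible in the local polytope by construction. Its support lies entirely in active dual constraints (unary by definition of $\O_u$, pairwise by the previous step), so complementary slackness with $\varphi$ holds; and $\langle g, \mu\rangle = 0$ because every component of $g$ vanishes on that support. Cost equivalence $\langle g, \mu\rangle = \langle g^\varphi, \mu\rangle = g^\varphi_\emptyset$ then forces $g^\varphi_\emptyset = 0$ and makes $\mu,\varphi$ primal--dual optimal, yielding (a). The main obstacle I expect is the constancy step: arc consistency alone does not certify LP optimality on cyclic graphs, and the argument works only because the zero-structure of $g$ on $(\X_u\setminus\Y_u)\times(\X_v\setminus\Y_v)$ collapses the pairwise feasibility constraints to purely linear conditions on $\varphi_{uv}, \varphi_{vu}$ whose extremal cases are pinned down by arc consistency.
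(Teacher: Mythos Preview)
Your proof is correct and rests on the same core observation as the paper's: on $(\X_u\setminus\Y_u)\times(\X_v\setminus\Y_v)$ the pairwise cost $g_{uv}$ vanishes, so $g^\varphi_{uv}$ is additively separable there, and arc consistency then forces the edge $(z_u,z_v)$ to be active for any choice $z_u\in\O_u(\varphi)$. The paper skips your constancy step and the fractional witness, instead picking an arbitrary integer labeling $z$ with $z_u\in\O_u(\varphi)$ and using the modularity identity $g^\varphi_{uv}(z_u,z_v)+g^\varphi_{uv}(i,j)=g^\varphi_{uv}(z_u,j)+g^\varphi_{uv}(i,z_v)$ directly to show $g^\varphi_{uv}(z_u,z_v)\leq 0$, hence active; this yields the same conclusion with a slightly shorter argument and an integer primal certificate $\delta(z)$.
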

\begin{proof}
Assume (b) does not hold: $(\forall u\in\V)$ \ \ $\O_u(\varphi) \subseteq \X_u \backslash \Y_u$. For each node $u$ let us chose a label $z_u\in\O_u(\varphi)$. By arc consistency, for each edge $uv$ there is a label $j\in \O_v(\varphi) \subset \X_v \backslash \Y_v$ such that  $g^{\varphi}_{uv}(z_u,j)$ is active and similarly, there exists $i\in \O_u(\varphi) \subset \X_u \backslash \Y_u$ such that $g^{\varphi}_{uv}(i,z_v)$ is active.
\par
By construction, $g_{uv}(i',j') = 0$ for all $i'j'\in \X_{uv}\backslash \Y_{uv}$ and therefore the following modularity equality holds:
\begin{align}\label{ac-U-eq1}
&g^{\varphi}_{uv}(z_u,z_v) + g^{\varphi}_{uv}(i,j) \\
\notag
 =&\big(0-\varphi_{uv}(z_u)-\varphi_{uv}(z_v)) +\big(0-\varphi_{uv}(i)-\varphi_{uv}(j)\big)\\
\notag
=&g^{\varphi}_{uv}(z_u,j)+g^{\varphi}_{uv}(i,z_v).
\end{align}
From $g^{\varphi}_{uv}(z_u,j)$ being active we have
\begin{equation}\label{ac-U-eq2}
g^{\varphi}_{uv}(z_u,j) \leq g^{\varphi}_{uv}(i,j).
\end{equation}
By adding~\eqref{ac-U-eq1} and \eqref{ac-U-eq2} we obtain
$g^{\varphi}_{uv}(z_u,z_v)  \leq g^{\varphi}_{uv}(i,z_v)$
and hence $(z_u,z_v)$ is active. Therefore $\delta(z)$ and dual point $\varphi$ satisfy complementarity slackness and hence they are primal-dual optimal and $g^\varphi_\emptyset = E_{g}(z) = 0$.
\end{proof}
\PdualstopAC*
\begin{proof}
Corollary from Theorem~\ref{T:AC-terminates}.
\end{proof}

\subsection*{\vskip-2em\noindent Fast Message Passing}\label{sec:message-passing}
\begin{table}[t]
\resizebox{\linewidth}{!}{%
\small
\setlength{\tabcolsep}{4pt}
\tabulinesep=0.5mm
\begin{tabu}{|c|l|}
\hline
$i \notin \Y_u$ & $\bar g_u(i) = 0;$\\
$i \in \Y_u$ & $\bar g_u(i) = f_u(i)-f_u(y_u);$\\
\hline
$i{\notin}\Y_u$, $j{\notin}\Y_v$ & $\bar g_{uv}(i,j) = 0$; $\Delta_{uv}(i) := \Delta_{vu}(j) := 0$;
\\
\hline
$i{\notin}\Y_u$, $j{\in}\Y_v$ & 
$\begin{aligned}
& \textstyle \Delta_{vu}(j) := \min_{i'\notin \Y_u} \big[ f_{uv}(i',j) - f_{uv}(i',y_v)  \big],\\
& \textstyle \bar g_{uv}(i,j) = \Delta_{vu}(j);
\end{aligned}$\\
\hline
$i{\in}\Y_u$, $j{\notin}\Y_v$ & 
$\begin{aligned}
& \textstyle \Delta_{uv}(i) := \min_{j'\notin \Y_v} \big[ f_{uv}(i,j') - f_{uv}(y_u,j')  \big],\\
& \textstyle \bar g_{uv}(i,j) = \Delta_{uv}(i);
\end{aligned} $\\
\hline
$i{\in}\Y_u$, $j{\in}\Y_v$ &
$\begin{aligned}
\textstyle \bar g_{uv}(i,j) = \min \big\{ & \textstyle  f_{uv}(i,j)-f_{uv}(y_u,y_v),\\
& \textstyle  \Delta_{vu}(j)+\Delta_{uv}(i) \big\}.
\end{aligned}$\\
\hline
\end{tabu}
}
\caption{Components of the Reduced Verification Problem}
\label{table:bar g}
\end{table}
\TFastMsg*
\begin{proof}
The components of the reduced problem $\bar g$~\eqref{equ:reductionTheorem} can be expressed directly in components of $f$ as in Table~\ref{table:bar g}. Passing a message on edge $uv$ amounts to calculating 
$\varphi_{vu}(j) := \min_{i\in \X_u}\big[ a(i) +\bar g_{uv}(i,j) \big]$
for some vector $a \in\Real^{\X_u}$.
For $j\notin \Y_v$, substituting pairwise terms of $\bar g$, it expands as
\begin{align}
\label{msg-all->U}
\textstyle \varphi_{vu}(j) := \min_{i\in \X_u}\big[a(i)+\Delta_{uv}(i)\big]. 
\end{align}
Since the message is equal for all $j\notin \Y_v$, it is sufficient to represent it by $\varphi_{vu}(y_v)$ (recall that $y_v\notin \Y_v$).
For $j\in \Y_v$, substituting pairwise terms of $\bar g$ and denoting $c =  f_{uv}(y_u,y_v)$, 
\begin{align}\label{msg-all->notU}
\textstyle
&\textstyle \varphi_{vu}(j) := \min\big\{\min_{i\notin \Y_u} a(i) + \Delta_{vu}(j),\\
\notag
&\textstyle \ \ \min_{i\in \Y_u}\big[ a(i) +\min\big\{f_{uv}(i,j)-c, \Delta_{uv}(i)+\Delta_{vu}(j)\big\} \big] \big\}
\end{align}
\vskip-1.5em
\begin{subequations}
\begin{align}
 {=}\min\big\{
 & \label{phi-min1} \textstyle \min_{i\notin \Y_u} a(i) + \Delta_{uv}(j),\\
 & \label{phi-min2} \textstyle \min_{i\in \Y_u}\big[ a(i){+}f_{uv}(i,j) \big] - c,\\
 & \label{phi-min3} \textstyle \min_{i\in \Y_u}\big[ a(i) + \Delta_{uv}(i)\big] + \Delta_{vu}(j) \big\}
\end{align}
\end{subequations}
Adding $\Delta_{uv}(i)$ inside~\eqref{phi-min1} (it is zero for $i \notin \Y_u$) and grouping~\eqref{phi-min1} and \eqref{phi-min3} together, we obtain for $j\in\Y_v$, $\varphi_{vu}(j) = $
\begin{align}\label{msg-part12}
\textstyle
\min \big\{ \min\limits_{i\in \Y_u} \big[ a(i){+}f_{uv}(i,j) \big] - c, \varphi_{uv}(y_v){+}\Delta_{vu}(j) \big\}.
\end{align}
Expression~\eqref{phi-min2} is a message passing for $f$, but the minimum is only over $\Y_u$ and the result is needed only for $j\in\Y_v$. This message can be computed in time $O(|\Y_u| + |\Y_v|)$ using the same algorithms~\cite{Hirata-1996,Meijster2000,Aggarwal-1987} (see also non-uniform min-convolution in~\cite{Zach-14}). Evaluating~\eqref{msg-part12} takes additional $O(|\Y_v|)$ time and minimum in~\eqref{msg-all->U} takes $O(|\Y_u|)$ time assuming that components of $a(i)$ are equal for $j\notin \Y_v$ (because it is already true for $\bar g$ and $\varphi$). 
\end{proof}

\clearpage
\setlength{\tabcolsep}{5pt}
\onecolumn
{
\centering
\section{Results Per Instance}\label{sec:results-per-instance}
\footnotesize
\begin{longtable}{lllllllllll}
\toprule
Instance &
Algorithm & 
\parbox{1.5cm}{Time\\needed\\overall (s)} & \parbox{1.5cm}{Time for\\initial\\ solution (s)} & \parbox{1.5cm}{\#iterations\\Algorithm~\ref{alg:iterative-LP},\ref{alg:iterative-DualLP}} & \parbox{1.5cm}{\#iterations\\TRWS} & \parbox{1.5cm}{Logarithmic\\percentage\\partial\\optimality} & \parbox{1.5cm}{Percentage\\excluded\\labels}\\
\hline
\endhead%

\hline
 \multicolumn{ 8 }{c}{ ProteinFolding } \\\hline

\hline
1CKK & \IRICPLEX & 2757.62 & 1177.62 & 5 & $\dagger$ & 14.24\% & 27.04\%\\
 & \IRITRWS & 5.76 & 5.00 & 3 & 1000+15 & 13.83\% & 26.53\%\\
 & \MMQPBO & 5670.00 & 0.00 & 0 & $\dagger$ & 0.00\% & 0.00\%\\
 & \MQPBO & 825.00 & 0.00 & 0 & $\dagger$ & 0.00\% & 0.00\%\\
 & \PBPCPLEX & 2502.69 & 2493.65 & 1 & $\dagger$ & 0.00\% & 0.00\%\\
 & \PBPTRWS & 47.57 & 30.19 & 2 & 288+185 & 0.00\% & 0.00\%\\

\hline
1CM1 & \IRICPLEX & 4070.00 & 992.15 & 7 & $\dagger$ & 8.38\% & 34.28\%\\
 & \IRITRWS & 6.03 & 4.70 & 4 & 1000+65 & 8.07\% & 29.98\%\\
 & \MMQPBO & 5520.00 & 0.00 & 0 & $\dagger$ & 0.00\% & 0.00\%\\
 & \MQPBO & 723.00 & 0.00 & 0 & $\dagger$ & 0.00\% & 0.00\%\\
 & \PBPCPLEX & 2388.46 & 2198.04 & 3 & $\dagger$ & 0.00\% & 0.00\%\\
 & \PBPTRWS & 51.33 & 21.60 & 3 & 242+358 & 0.00\% & 0.00\%\\

\hline
1SY9 & \IRICPLEX & 2628.72 & 416.74 & 5 & $\dagger$ & 25.34\% & 51.30\%\\
 & \IRITRWS & 6.88 & 5.50 & 4 & 1000+15 & 28.06\% & 57.98\%\\
 & \MMQPBO & 7494.00 & 0.00 & 0 & $\dagger$ & 0.00\% & 0.00\%\\
 & \MQPBO & 2112.00 & 0.00 & 0 & $\dagger$ & 0.00\% & 0.11\%\\
 & \PBPCPLEX & 1067.46 & 910.90 & 4 & $\dagger$ & 0.00\% & 0.00\%\\
 & \PBPTRWS & 66.73 & 46.77 & 5 & 400+174 & 0.00\% & 0.00\%\\

\hline
2BBN & \IRICPLEX & 9677.42 & 5476.81 & 5 & $\dagger$ & 2.12\% & 8.58\%\\
 & \IRITRWS & 10.00 & 8.60 & 3 & 1000+10 & 2.64\% & 14.17\%\\
 & \MMQPBO & 1736.00 & 0.00 & 0 & $\dagger$ & 0.00\% & 0.00\%\\
 & \MQPBO & 2429.00 & 0.00 & 0 & $\dagger$ & 0.00\% & 0.00\%\\
 & \PBPCPLEX & 9776.60 & 9771.18 & 1 & $\dagger$ & 0.00\% & 0.00\%\\
 & \PBPTRWS & 54.21 & 42.90 & 2 & 242+146 & 0.00\% & 0.00\%\\

\hline
2BCX & \IRICPLEX & 36222.90 & 6998.66 & 5 & $\dagger$ & 4.81\% & 15.66\%\\
 & \IRITRWS & 9.14 & 7.90 & 3 & 1000+55 & 4.39\% & 14.21\%\\
 & \MMQPBO & 1008.00 & 0.00 & 0 & $\dagger$ & 0.00\% & 0.00\%\\
 & \MQPBO & 1288.00 & 0.00 & 0 & $\dagger$ & 0.00\% & 0.00\%\\
 & \PBPCPLEX & 11419.60 & 11409.90 & 2 & $\dagger$ & 0.00\% & 0.00\%\\
 & \PBPTRWS & 55.26 & 39.60 & 2 & 252+194 & 0.00\% & 0.00\%\\

\hline
2BE6 & \IRICPLEX & 1381.60 & 765.84 & 4 & $\dagger$ & 9.14\% & 17.68\%\\
 & \IRITRWS & 4.67 & 3.91 & 4 & 1000+60 & 8.96\% & 15.12\%\\
 & \MMQPBO & 3728.00 & 0.00 & 0 & $\dagger$ & 0.00\% & 0.05\%\\
 & \MQPBO & 540.00 & 0.00 & 0 & $\dagger$ & 0.00\% & 0.00\%\\
 & \PBPCPLEX & 1552.95 & 1552.88 & 1 & $\dagger$ & 0.00\% & 0.00\%\\
 & \PBPTRWS & 40.12 & 28.12 & 2 & 363+230 & 0.00\% & 0.00\%\\

\hline
2F3Y & \IRICPLEX & 3628.90 & 2546.68 & 5 & $\dagger$ & 6.22\% & 10.66\%\\
 & \IRITRWS & 5.83 & 5.20 & 3 & 1000+10 & 8.39\% & 13.74\%\\
 & \MMQPBO & 5138.00 & 0.00 & 0 & $\dagger$ & 0.00\% & 0.00\%\\
 & \MQPBO & 928.00 & 0.00 & 0 & $\dagger$ & 0.00\% & 0.05\%\\
 & \PBPCPLEX & 4618.78 & 4618.76 & 1 & $\dagger$ & 0.00\% & 0.00\%\\
 & \PBPTRWS & 41.87 & 33.03 & 3 & 321+164 & 0.00\% & 0.00\%\\

\hline
2FOT & \IRICPLEX & 7458.75 & 1996.55 & 5 & $\dagger$ & 4.10\% & 11.64\%\\
 & \IRITRWS & 6.25 & 5.30 & 4 & 1000+25 & 4.01\% & 11.01\%\\
 & \MMQPBO & 4961.00 & 0.00 & 0 & $\dagger$ & 0.00\% & 0.09\%\\
 & \MQPBO & 1054.00 & 0.00 & 0 & $\dagger$ & 0.00\% & 0.07\%\\
 & \PBPCPLEX & 4473.58 & 4440.51 & 1 & $\dagger$ & 0.00\% & 0.00\%\\
 & \PBPTRWS & 61.92 & 44.42 & 2 & 398+222 & 0.00\% & 0.00\%\\

\hline
2HQW & \IRICPLEX & 5721.95 & 1946.20 & 6 & $\dagger$ & 10.30\% & 17.30\%\\
 & \IRITRWS & 6.49 & 4.80 & 6 & 1000+160 & 8.33\% & 18.08\%\\
 & \MMQPBO & 7228.00 & 0.00 & 0 & $\dagger$ & 0.00\% & 0.00\%\\
 & \MQPBO & 1193.00 & 0.00 & 0 & $\dagger$ & 0.00\% & 0.00\%\\
 & \PBPCPLEX & 2163.98 & 2161.07 & 1 & $\dagger$ & 0.00\% & 0.00\%\\
 & \PBPTRWS & 44.07 & 35.46 & 2 & 382+121 & 0.00\% & 0.00\%\\

\hline
2O60 & \IRICPLEX & 12085.40 & 3007.95 & 6 & $\dagger$ & 4.22\% & 12.81\%\\
 & \IRITRWS & 7.74 & 6.50 & 3 & 1000+55 & 4.94\% & 15.55\%\\
 & \MMQPBO & 7516.00 & 0.00 & 0 & $\dagger$ & 0.00\% & 0.00\%\\
 & \MQPBO & 1997.00 & 0.00 & 0 & $\dagger$ & 0.00\% & 0.00\%\\
 & \PBPCPLEX & 6137.07 & 6128.14 & 1 & $\dagger$ & 0.00\% & 0.00\%\\
 & \PBPTRWS & 93.87 & 46.42 & 2 & 352+369 & 0.00\% & 0.00\%\\

\hline
3BXL & \IRICPLEX & 3247.11 & 915.86 & 7 & $\dagger$ & 4.97\% & 17.18\%\\
 & \IRITRWS & 7.44 & 5.90 & 4 & 1000+60 & 4.66\% & 12.35\%\\
 & \MMQPBO & 6709.00 & 0.00 & 0 & $\dagger$ & 0.00\% & 0.00\%\\
 & \MQPBO & 1291.00 & 0.00 & 0 & $\dagger$ & 0.00\% & 0.00\%\\
 & \PBPCPLEX & 1776.23 & 1598.07 & 2 & $\dagger$ & 0.00\% & 0.00\%\\
 & \PBPTRWS & 44.71 & 25.52 & 2 & 227+216 & 0.00\% & 0.00\%\\

\hline
pdb1b25 & \IRICPLEX & 1599.67 & 55.01 & 28 & $\dagger$ & 76.76\% & 84.05\%\\
 & \IRITRWS & 5.18 & 2.92 & 18 & 530+150 & 83.00\% & 87.84\%\\
 & \MMQPBO & 27.00 & 0.00 & 0 & $\dagger$ & 0.00\% & 2.53\%\\
 & \MQPBO & 2.00 & 0.00 & 0 & $\dagger$ & 0.00\% & 1.99\%\\
 & \PBPCPLEX & 324.64 & 72.11 & 14 & $\dagger$ & 18.84\% & 22.32\%\\
 & \PBPTRWS & 119.71 & 27.62 & 14 & 443+1238 & 18.92\% & 22.34\%\\

\hline
pdb1d2e & \IRICPLEX & 154.76 & 25.44 & 5 & $\dagger$ & 97.30\% & 97.98\%\\
 & \IRITRWS & 1.67 & 1.13 & 7 & 420+75 & 96.97\% & 98.25\%\\
 & \MMQPBO & 12.00 & 0.00 & 0 & $\dagger$ & 0.00\% & 4.61\%\\
 & \MQPBO & 0.00 & 0.00 & 0 & $\dagger$ & 0.00\% & 2.74\%\\
 & \PBPCPLEX & 483.55 & 34.59 & 25 & $\dagger$ & 55.53\% & 58.94\%\\
 & \PBPTRWS & 83.82 & 6.16 & 47 & 190+2775 & 55.69\% & 58.98\%\\

\hline
pdb1fmj & \IRICPLEX & 99.33 & 12.35 & 7 & $\dagger$ & 92.58\% & 94.90\%\\
 & \IRITRWS & 1.05 & 0.60 & 14 & 540+135 & 83.18\% & 87.09\%\\
 & \MMQPBO & 6.00 & 0.00 & 0 & $\dagger$ & 0.00\% & 2.92\%\\
 & \MQPBO & 0.00 & 0.00 & 0 & $\dagger$ & 0.00\% & 2.04\%\\
 & \PBPCPLEX & 77.30 & 16.97 & 11 & $\dagger$ & 15.94\% & 18.83\%\\
 & \PBPTRWS & 16.67 & 3.10 & 11 & 186+677 & 16.18\% & 18.91\%\\

\hline
pdb1i24 & \IRITRWS & 0.06 & 0.02 & 2 & 60+5 & 99.73\% & 99.94\%\\
 & \MMQPBO & 3.00 & 0.00 & 0 & $\dagger$ & 0.00\% & 2.85\%\\
 & \MQPBO & 0.00 & 0.00 & 0 & $\dagger$ & 0.00\% & 3.43\%\\
 & \PBPCPLEX & 5.66 & 5.66 & 0 & $\dagger$ & 100.00\% & 100.00\%\\
 & \PBPTRWS & 0.82 & 0.82 & 0 & 115+0 & 100.00\% & 100.00\%\\

\hline
pdb1iqc & \IRICPLEX & 111.58 & 18.51 & 5 & $\dagger$ & 99.10\% & 99.63\%\\
 & \IRITRWS & 0.74 & 0.40 & 8 & 200+35 & 96.39\% & 97.10\%\\
 & \MMQPBO & 8.00 & 0.00 & 0 & $\dagger$ & 0.00\% & 6.06\%\\
 & \MQPBO & 0.00 & 0.00 & 0 & $\dagger$ & 0.00\% & 4.90\%\\
 & \PBPCPLEX & 229.09 & 24.36 & 20 & $\dagger$ & 35.32\% & 41.15\%\\
 & \PBPTRWS & 36.03 & 4.06 & 28 & 169+2058 & 40.50\% & 45.56\%\\

\hline
pdb1jmx & \IRICPLEX & 142.20 & 15.52 & 9 & $\dagger$ & 97.24\% & 98.69\%\\
 & \IRITRWS & 0.67 & 0.29 & 10 & 200+75 & 93.46\% & 95.83\%\\
 & \MMQPBO & 8.00 & 0.00 & 0 & $\dagger$ & 0.00\% & 3.76\%\\
 & \MQPBO & 0.00 & 0.00 & 0 & $\dagger$ & 0.00\% & 3.73\%\\
 & \PBPCPLEX & 121.71 & 16.21 & 19 & $\dagger$ & 35.86\% & 39.98\%\\
 & \PBPTRWS & 20.02 & 3.59 & 24 & 188+1098 & 35.26\% & 39.12\%\\

\hline
pdb1kgn & \IRICPLEX & 196.03 & 17.97 & 10 & $\dagger$ & 89.22\% & 93.23\%\\
 & \IRITRWS & 1.37 & 0.76 & 9 & 400+170 & 88.92\% & 93.16\%\\
 & \MMQPBO & 9.00 & 0.00 & 0 & $\dagger$ & 0.00\% & 3.24\%\\
 & \MQPBO & 0.00 & 0.00 & 0 & $\dagger$ & 0.00\% & 2.27\%\\
 & \PBPCPLEX & 161.57 & 24.37 & 12 & $\dagger$ & 39.42\% & 39.67\%\\
 & \PBPTRWS & 53.49 & 6.45 & 17 & 268+1824 & 13.20\% & 13.36\%\\

\hline
pdb1kwh & \IRICPLEX & 105.77 & 9.63 & 10 & $\dagger$ & 79.09\% & 85.64\%\\
 & \IRITRWS & 0.46 & 0.27 & 8 & 440+50 & 76.53\% & 83.26\%\\
 & \MMQPBO & 5.00 & 0.00 & 0 & $\dagger$ & 0.00\% & 2.99\%\\
 & \MQPBO & 0.00 & 0.00 & 0 & $\dagger$ & 0.00\% & 3.43\%\\
 & \PBPCPLEX & 51.33 & 12.89 & 9 & $\dagger$ & 25.54\% & 31.15\%\\
 & \PBPTRWS & 9.15 & 2.43 & 8 & 208+401 & 25.43\% & 31.13\%\\

\hline
pdb1m3y & \IRICPLEX & 73.60 & 18.58 & 3 & $\dagger$ & 98.54\% & 99.47\%\\
 & \IRITRWS & 0.79 & 0.65 & 3 & 340+10 & 97.53\% & 99.08\%\\
 & \MMQPBO & 8.00 & 0.00 & 0 & $\dagger$ & 0.00\% & 6.38\%\\
 & \MQPBO & 0.00 & 0.00 & 0 & $\dagger$ & 0.00\% & 5.72\%\\
 & \PBPCPLEX & 120.60 & 25.18 & 14 & $\dagger$ & 31.05\% & 27.97\%\\
 & \PBPTRWS & 28.19 & 4.82 & 12 & 200+1135 & 31.08\% & 27.98\%\\

\hline
pdb1qks & \IRICPLEX & 138.12 & 15.19 & 8 & $\dagger$ & 98.30\% & 98.93\%\\
 & \IRITRWS & 0.30 & 0.12 & 4 & 80+20 & 98.57\% & 99.38\%\\
 & \MMQPBO & 9.00 & 0.00 & 0 & $\dagger$ & 0.00\% & 5.09\%\\
 & \MQPBO & 0.00 & 0.00 & 0 & $\dagger$ & 0.00\% & 3.68\%\\
 & \PBPCPLEX & 96.77 & 15.82 & 12 & $\dagger$ & 28.18\% & 26.37\%\\
 & \PBPTRWS & 27.99 & 3.24 & 15 & 161+1154 & 30.63\% & 28.46\%\\

\hline
 \multicolumn{ 8 }{c}{ color-seg } \\\hline

\hline
colseg-cow3 & \IRITRWS & 66.30 & 48.10 & 6 & 1000+140 & 99.96\% & 99.97\%\\
 & \Kovtun & 1.00 & 0.00 & 0 & $\dagger$ & 0.00\% & 99.89\%\\
 & \MMQPBO & 206.00 & 0.00 & 0 & $\dagger$ & 0.00\% & 43.55\%\\
 & \MQPBO & 24.00 & 0.00 & 0 & $\dagger$ & 0.00\% & 32.06\%\\
 & \PBPTRWS & 7530.72 & 690.55 & 14 & 826+6056 & 99.95\% & 99.95\%\\

\hline
colseg-cow4 & \IRITRWS & 91.26 & 48.31 & 13 & 1000+310 & 99.92\% & 99.93\%\\
 & \Kovtun & 2.00 & 0.00 & 0 & $\dagger$ & 0.00\% & 99.90\%\\
 & \MMQPBO & 46.00 & 0.00 & 0 & $\dagger$ & 0.00\% & 0.56\%\\
 & \MQPBO & 40.00 & 0.00 & 0 & $\dagger$ & 0.00\% & 0.37\%\\
 & \PBPTRWS & 7395.03 & 742.58 & 10 & 848+6349 & 99.80\% & 99.80\%\\

\hline
colseg-garden4 & \IRITRWS & 0.49 & 0.15 & 5 & 70+20 & 99.91\% & 99.94\%\\
 & \Kovtun & 0.00 & 0.00 & 0 & $\dagger$ & 0.00\% & 94.96\%\\
 & \MMQPBO & 14.00 & 0.00 & 0 & $\dagger$ & 0.00\% & 4.27\%\\
 & \MQPBO & 1.00 & 0.00 & 0 & $\dagger$ & 0.00\% & 0.21\%\\
 & \PBPTRWS & 33.68 & 6.75 & 5 & 167+488 & 99.89\% & 99.89\%\\

\hline
 \multicolumn{ 8 }{c}{ color-seg-n4 } \\\hline

\hline
clownfish-small & \IRITRWS & 1.72 & 0.68 & 3 & 80+10 & $>$99.99\% & $>$99.99\%\\
 & \Kovtun & 1.00 & 0.00 & 0 & $\dagger$ & 0.00\% & 74.11\%\\
 & \MMQPBO & 536.00 & 0.00 & 0 & $\dagger$ & 0.00\% & 15.83\%\\
 & \MQPBO & 41.00 & 0.00 & 0 & $\dagger$ & 0.00\% & 4.67\%\\
 & \PBPTRWS & 151.98 & 30.01 & 6 & 223+610 & 99.97\% & 99.97\%\\

\hline
crops-small & \IRITRWS & 1.87 & 1.02 & 2 & 120+5 & 100.00\% & 100.00\%\\
 & \Kovtun & 1.00 & 0.00 & 0 & $\dagger$ & 0.00\% & 64.70\%\\
 & \MMQPBO & 577.00 & 0.00 & 0 & $\dagger$ & 0.00\% & 14.32\%\\
 & \MQPBO & 33.00 & 0.00 & 0 & $\dagger$ & 0.00\% & 0.71\%\\
 & \PBPTRWS & 677.08 & 34.88 & 40 & 260+3578 & 99.00\% & 99.00\%\\

\hline
fourcolors & \IRITRWS & 0.57 & 0.08 & 2 & 20+5 & 99.96\% & 99.97\%\\
 & \Kovtun & 0.00 & 0.00 & 0 & $\dagger$ & 0.00\% & 69.52\%\\
 & \MMQPBO & 37.00 & 0.00 & 0 & $\dagger$ & 0.00\% & 0.00\%\\
 & \MQPBO & 3.00 & 0.00 & 0 & $\dagger$ & 0.00\% & 0.00\%\\
 & \PBPTRWS & 31.28 & 2.60 & 8 & 34+238 & 99.92\% & 99.92\%\\

\hline
lake-small & \IRITRWS & 1.28 & 0.43 & 2 & 50+5 & 100.00\% & 100.00\%\\
 & \Kovtun & 1.00 & 0.00 & 0 & $\dagger$ & 0.00\% & 74.87\%\\
 & \MMQPBO & 607.00 & 0.00 & 0 & $\dagger$ & 0.00\% & 15.31\%\\
 & \MQPBO & 31.00 & 0.00 & 0 & $\dagger$ & 0.00\% & 6.65\%\\
 & \PBPTRWS & 13.75 & 13.75 & 0 & 95+-95 & 100.00\% & 100.00\%\\

\hline
palm-small & \IRITRWS & 2.48 & 1.37 & 3 & 160+10 & $>$99.99\% & $>$99.99\%\\
 & \Kovtun & 1.00 & 0.00 & 0 & $\dagger$ & 0.00\% & 68.65\%\\
 & \MMQPBO & 510.00 & 0.00 & 0 & $\dagger$ & 0.00\% & 0.48\%\\
 & \MQPBO & 19.00 & 0.00 & 0 & $\dagger$ & 0.00\% & 0.00\%\\
 & \PBPTRWS & 846.27 & 39.97 & 19 & 291+4582 & 98.20\% & 98.20\%\\

\hline
penguin-small & \IRITRWS & 1.21 & 0.54 & 2 & 90+5 & 100.00\% & 100.00\%\\
 & \Kovtun & 0.00 & 0.00 & 0 & $\dagger$ & 0.00\% & 91.99\%\\
 & \MMQPBO & 193.00 & 0.00 & 0 & $\dagger$ & 0.00\% & 1.42\%\\
 & \MQPBO & 13.00 & 0.00 & 0 & $\dagger$ & 0.00\% & 1.03\%\\
 & \PBPTRWS & 15.67 & 15.67 & 0 & 152+-152 & 100.00\% & 100.00\%\\

\hline
pfau-small & \IRITRWS & 18.77 & 7.22 & 48 & 950+470 & 89.43\% & 93.41\%\\
 & \Kovtun & 1.00 & 0.00 & 0 & $\dagger$ & 0.00\% & 5.59\%\\
 & \MMQPBO & 591.00 & 0.00 & 0 & $\dagger$ & 0.00\% & 0.70\%\\
 & \MQPBO & 16.00 & 0.00 & 0 & $\dagger$ & 0.00\% & 0.00\%\\
 & \PBPTRWS & 799.08 & 79.34 & 44 & 654+10857 & 10.43\% & 10.43\%\\

\hline
snail & \IRITRWS & 0.79 & 0.23 & 2 & 50+5 & 99.99\% & 99.99\%\\
 & \Kovtun & 0.00 & 0.00 & 0 & $\dagger$ & 0.00\% & 97.77\%\\
 & \MMQPBO & 7.00 & 0.00 & 0 & $\dagger$ & 0.00\% & 77.91\%\\
 & \MQPBO & 1.00 & 0.00 & 0 & $\dagger$ & 0.00\% & 58.35\%\\
 & \PBPTRWS & 46.20 & 6.47 & 5 & 83+332 & 99.98\% & 99.98\%\\

\hline
strawberry-glass-2-small & \IRITRWS & 1.35 & 0.60 & 2 & 80+5 & 100.00\% & 100.00\%\\
 & \Kovtun & 1.00 & 0.00 & 0 & $\dagger$ & 0.00\% & 54.99\%\\
 & \MMQPBO & 528.00 & 0.00 & 0 & $\dagger$ & 0.00\% & 2.78\%\\
 & \MQPBO & 39.00 & 0.00 & 0 & $\dagger$ & 0.00\% & 0.00\%\\
 & \PBPTRWS & 311.54 & 31.00 & 11 & 259+1721 & 99.31\% & 99.31\%\\

\hline
 \multicolumn{ 8 }{c}{ mrf-photomontage } \\\hline

\hline
family-gm & \IRITRWS & 286.40 & 93.08 & 77 & 1000+1265 & 4.75\% & 4.80\%\\
 & \MMQPBO & 1087.00 & 0.00 & 0 & $\dagger$ & 0.00\% & 4.41\%\\
 & \MQPBO & 90.00 & 0.00 & 0 & $\dagger$ & 0.00\% & 4.34\%\\
 & \PBPTRWS & 12726.45 & 1291.11 & 50 & 1015+22483 & 4.41\% & 4.41\%\\

\hline
pano-gm & \IRITRWS & 320.00 & 112.17 & 59 & 1000+1105 & 67.73\% & 79.17\%\\
 & \MMQPBO & 646.00 & 0.00 & 0 & $\dagger$ & 0.00\% & 28.06\%\\
 & \MQPBO & 97.00 & 0.00 & 0 & $\dagger$ & 0.00\% & 40.37\%\\
 & \PBPTRWS & 14360.45 & 1871.14 & 33 & 911+11193 & 27.55\% & 27.55\%\\

\hline
 \multicolumn{ 8 }{c}{ mrf-stereo } \\\hline

\hline
ted-gm & \IRITRWS & 231.97 & 72.67 & 119 & 1000+715 & 67.27\% & 72.05\%\\
 & \PBPTRWS & 3837.51 & 436.30 & 28 & 689+10383 & 38.13\% & 38.13\%\\

\hline
tsu-gm & \IRITRWS & 19.75 & 14.67 & 10 & 670+75 & 99.91\% & 99.94\%\\
 & \PBPTRWS & 9277.99 & 267.55 & 54 & 377+17421 & 0.39\% & 0.39\%\\

\hline
ven-gm & \IRITRWS & 108.73 & 94.44 & 9 & 1000+40 & 0.01\% & 0.02\%\\
 & \PBPTRWS & 14737.47 & 1451.83 & 55 & 993+16592 & 0.00\% & 0.00\%\\
\bottomrule
\caption{%
\protect\vphantom{$g^{g^{g^g}}$}%
Detailed experimental evaluation for Algorithm~\ref{alg:iterative-LP} utilising CPLEX~\cite{CPLEX} as a subsolver, denoted as \IRICPLEX, Algorithm~\ref{alg:iterative-DualLP} utilising TRW-S~\cite{TRWSKolmogorov} as a subsolver, denoted as \IRITRWS,
   their counterparts from~\cite{SwobodaPersistencyCVPR2014} denoted by \PBPCPLEX and \PBPTRWS
	   and MQPBO~\cite{PartialOptimalityInMultiLabelMRFsKohli} run for one iteration with predefined label order, denoted by \MQPBO, and run 10 iterations in 10 random label orders, denoted by \MMQPBO.}
\label{tab:DetailedExperimentalEvaluation}
\end{longtable}
}
\twocolumn


%
\end{document}